\newtheorem{proposition}{Proposition}
\theoremstyle{definition}
\newtheorem{remark}{Remark}
\newtheorem*{problem}{Problem Statement}
\DeclareMathAlphabet\mathbfcal{OMS}{cmsy}{b}{n}
\journal{Robotics and Autonomous Systems}
\begin{document}

\begin{frontmatter}

%% Title, authors and addresses

%% use the tnoteref command within \title for footnotes;
%% use the tnotetext command for theassociated footnote;
%% use the fnref command within \author or \address for footnotes;
%% use the fntext command for theassociated footnote;
%% use the corref command within \author for corresponding author footnotes;
%% use the cortext command for theassociated footnote;
%% use the ead command for the email address,
%% and the form \ead[url] for the home page:
%% \title{Title\tnoteref{label1}}
%% \tnotetext[label1]{}
%% \author{Name\corref{cor1}\fnref{label2}}
%% \ead{email address}
%% \ead[url]{home page}
%% \fntext[label2]{}
%% \cortext[cor1]{}
%% \address{Address\fnref{label3}}
%% \fntext[label3]{}

\title{Vision-based Manipulation of Deformable and Rigid Objects Using Subspace Projections of 2D Contours}

%% use optional labels to link authors explicitly to addresses:
%% \author[label1,label2]{}
%% \address[label1]{}
%% \address[label2]{}

\author[1,2]{Jihong Zhu}
\author[3]{David Navarro-Alarcon}
\author[1]{Robin Passama}
\author[1]{Andrea Cherubini}

\address[1]{LIRMM - Universit\'{e} de Montpellier CNRS, 161 Rue Ada, 34090 Montpellier, France. {\tt\small firstname.lastname@lirmm.fr}}
\address[2]{Delft University of Technology and Honda Research Institute, Europe. {\tt\small j.zhu-3@tudelft.nl}}
\address[3]{The Hong Kong Polytechnic University, Department of Mechanical Engineering, Kowloon, Hong Kong. {\tt\small david.navarro-alarcon@polyu.edu.hk}}

\begin{abstract}
This paper proposes a unified vision-based manipulation framework using image contours of deformable/rigid objects. Instead of explicitly defining the features by geometries or functions, the robot automatically learns the visual features from processed vision data. Our method simultaneously generates---from the same data---both visual features and the interaction matrix that relates them to the robot control inputs. Extraction of the feature vector and control commands is done online and adaptively, and requires little data for initialization. Our method allows the robot to manipulate an object without knowing whether it is rigid or deformable. To validate our approach, we conduct numerical simulations and experiments with both deformable and rigid objects.
\end{abstract}

%%Graphical abstract
% \begin{graphicalabstract}
%\includegraphics{grabs}
% \end{graphicalabstract}

%%Research highlights
\begin{highlights}
\item We present a unique framework for manipulating both rigid and deformable objects.
\item Our framework is model-free and requires a short initialization phase.
\item Our framework does not require camera calibration, and works with different camera poses.
\end{highlights}

\begin{keyword}
Visual servoing, sensor-based control, deformable object manipulation.
\end{keyword}

\end{frontmatter}
\section{Introduction}
Humans are capable of manipulating both rigid and deformable objects. However, robotic researchers tend to consider the manipulation of these two classes of objects as separate problems. Unless otherwise mentioned, object rigidity is an implicit assumption in most manipulation tasks. On the other hand, methods designed for deformable object manipulation \citep{Sanchez2018}, are never applied on rigid objects.
This paper presents our efforts in formulating a generalized framework for vision-based manipulation of both rigid and deformable objects, which does not require
%such that the framework can execute without 
prior knowledge of the object's mechanical properties.

In the visual servoing literature~\citep{chaumette2006visual}, vector $\vs$ denotes the set of features selected to represent the object in the image. These features represent both the object's pose and its shape. We denote the process of selecting $\vs$ as \textit{parameterization}. The aim of visual servoing is to minimize, through robot motion, the feedback error $\ve = \vs^* - \vs$ between the target $\vs^*$ and the current (i.e., measured) feature $\vs$. 

One of the initial works on vision-based manipulation of deformable objects is presented in \citep{inoue1984hand} to solve a knotting problem by a topological model. Smith et al. developed a relative elasticity model, such that vision can be utilized without a physical model for the manipulation task \citep{smith1996vision}. A classical model-free approach in manipulating deformable objects is developed in \citep{berenson2013manipulation}. More recent research \citep{lagneau2020active} and \citep{lagneau2020automatic} proposes a method for online estimation of the deformation Jacobian, based on weighted least square minimization with a sliding window. In \citep{navarro2014visual} and  \citep{navarro2017fourier}, the vision-based deformable objects manipulation is termed as \textit{shape servoing}. An expository paper on the topic is available in \citep{navarro2019model}. A recent work on vision-based shape servoing of plastic material was presented in \citep{cherubini2020model}.  

For a detailed survey on shape servoing we refer readers to \citep{Sanchez2018}. For \textit{shape servoing}, commonly selected features are curvatures \citep{navarro2014visual}, points \citep{wang2018unified} and angles \citep{navarro2013uncalibrated}. Laranjeira et al. proposed a catenary-based feature for tethered management on wheeled and underwater robots \citep{laranjeira2017catenary,LARANJEIRA2020107018}. %Yet, none of these features are general enough for manipulation. 
A more general feature vector is that containing the Fourier coefficients of the object contour \citep{navarro2017fourier,zhu2018dual}. Yet, all these approaches require the user to specify a model, e.g., the object geometry \citep{wang2018unified,navarro2013uncalibrated,navarro2014visual} or a function \citep{laranjeira2017catenary,navarro2017fourier,zhu2018dual} for selecting the feature. Alternative data-driven (hence, model-free) approaches rely on machine learning. Nair et al. combine learning and visual feedback to manipulate ropes in \citep{nair2017combining}. Li et al. approximate the deformation and camera model using a neural network \citep{li2018vision}. The authors of \citep{hu20193} employ deep neural networks to manipulate deformable objects given their 3D point cloud. All these methods rely on (deep) connectionist models, which invariably require training through an extensive data set. The collected data has to be diverse enough to generalize the model learnt by this type of networks. Instead of relying on algorithmic solutions, \citep{she2019cable} utilizes a vision-based tactile sensor (GelSight) for manipulating cables. 

It is noteworthy that some of the above mentioned methods may apply to rigid objects. %The possibility of extension to rigid objects depends on how the visual features are defined in the vision-based manipulation tasks. For instance, geometry specific features such as point \citep{wang2018unified}, angle \citep{navarro2013model} or catenary-based \citep{laranjeira2017catenary} features cannot work directly on rigid objects, while more general features such as in \citep{navarro2017fourier} or \citep{hu20193} can probably be extended for rigid objects manipulation. 
Yet, none of the previous works has investigated the possibility of this extension nor reported its experimental validation, as we do in this paper.

The trend in visual servoing, when \textit{controlling the pose of rigid objects} is to find features which are independent from the object characteristics. Following this trend, \citep{chaumette2004image} proposes the use of image moments. More recently, researchers have proposed direct visual servoing (DVS) methods, which eliminate the need for user-defined features and for the related image processing procedures. The pioneer DVS works \citep{collewet2008visual,collewet2011photometric} propose using the whole image luminance to control the robot, leading to ``photometric'' visual servoing. Bakthavatchalam et al. join the two ideas by introducing photometric moments \citep{bakthavatchalam2013photometric}. 
A subspace method \citep{marchand2019subspace} can further enhance the convergence of photometric visual servoing, via Principal Component Analysis (PCA). This method was first introduced for visual servoing in \citep{nayar1996subspace}. In that work, using an eye-in-hand setup, the image was compressed to obtain a low-dimensional vector for controlling the robot to a target pose. Similarly, the authors of \citep{deguchi1996visual} transformed the image into a lower dimensional hyper surface, to control the robot position via in-hand camera feedback. However, DVS generally considers rigid and static scenes, where the robot controls the motion of the camera (eye-in-hand setup) to change only the image viewpoint, and not the environment. These constraints on the setup avoid breaking the Lambertian hypothesis that is needed, since DVS relies on the raw image luminance, which should not vary with the viewpoint. For this reason, to our knowledge, DVS was never applied to object manipulation, since changes in the pose and/or shape of the object would break the Lambertian assumption. This is not the case of feature-based methods (such as the one we present here), as long as the feature is chosen to be reliable even when the viewpoint and/or scene change.

Compared with the above-mentioned works, our paper presents the following original contributions:
\begin{enumerate}
    \item We propose to use a feature vector -- based on PCA of sampled 2D contours -- for model-free manipulation of both deformable and rigid objects.
    \item We exploit the linear properties of PCA and of the local interaction matrix, to initialize our algorithm with little data -- the same data for feature vector extraction and for interaction matrix estimation. 
    \item We report experiments using the same framework to manipulate objects with different unknown geometric and mechanical properties.
\end{enumerate}

% Compared with above mentioned works, our paper has following original contributions: 1) It proposes a new feedback feature based on principal component analysis of 2D contours that can be used for vision-based manipulation of both deformable and rigid objects; and 2) it presents an efficient data-driven approach that simultaneously computes the feedback features and interaction matrix from the local sensor data. The former allows manipulation of an object regardless of its deformation properties while the latter enables an on-line object manipulation with little initialization. To the best of authors' knowledge, there has not been any similar framework proposed before.
% In this paper, we propose a novel feedback feature based on principal component analysis of 2D contours.  
% Above all, we try to generalize manipulation task for both deformable and rigid objects which differs from all previous work in the similar direction where only one kind of object is considered.
%  Overall, this paper contributes to the state-of-the-art in vision-based manipulation by introducing \emph{a robust framework that achieves on-line object manipulation, regardless of object's deformation properties and with little initialization}.

The paper is organized as follows. Sect. \ref{sec:problem} presents the problem. Sect. \ref{sec:overview} outlines the framework. Sect. \ref{sec:method} elaborates on the methods. In Sect. \ref{sec:simu}, we analyze and verify the methods by numerical simulations. Then, Sect. \ref{sec:exp} presents the robotic experiments and we conclude in Sect. \ref{sec:conclusion}.

\section{Problem statement}\label{sec:problem}

In this work, we aim at solving object manipulation tasks with visual feedback. We rely on the following hypotheses:
\begin{itemize}
    \item
The shape and pose of the object are represented by its 2-D contour on the image as seen from a camera fixed in the robot workspace (eye-to-hand setup). We denote this contour as
\begin{equation}\label{eq:single_vector_vc}
 \vc = [\vp_1~\cdots~\vp_K]^T \in \mathbb{R}^{2K},
\end{equation}
where $\vp_j = [u_j~v_j] \in \mathbb{I}$ denotes the $j$th pixel of the contour in the image $\mathbb{I}$.
    \item 
The contour is always entirely visible in the scene and there are no occlusions.
    \item
One of the robot's end-effectors holds one point of the object (we consider the grasping problem to be already solved). At each control iteration $i$, its pose is $\vr_i \in \mathbb{SE}\left(3\right)$, and it can execute motion commands $\delta \vr_i \in \mathbb{SE}\left(3\right)$ that drive the robot so that $\vr_{i+1} = \vr_i + \delta \vr_i$.    \item
The target constant shape (i.e., contour) of the object, $\vc^*$, is physically reachable with shaping motions of the grasping point $\vr$. To ensure this hypothesis, one can first command the robot to verify that it can move the shape to $\vc^*$. 
\end{itemize}
%Therefore, the object contour in the image will also change from $\vc_i$ to $\vc_{i+1}$, according to the robot motion. 

\begin{figure}[tpb]
    \centering
	\subfloat[Rigid objects]{\includegraphics[width = 0.3\textwidth]{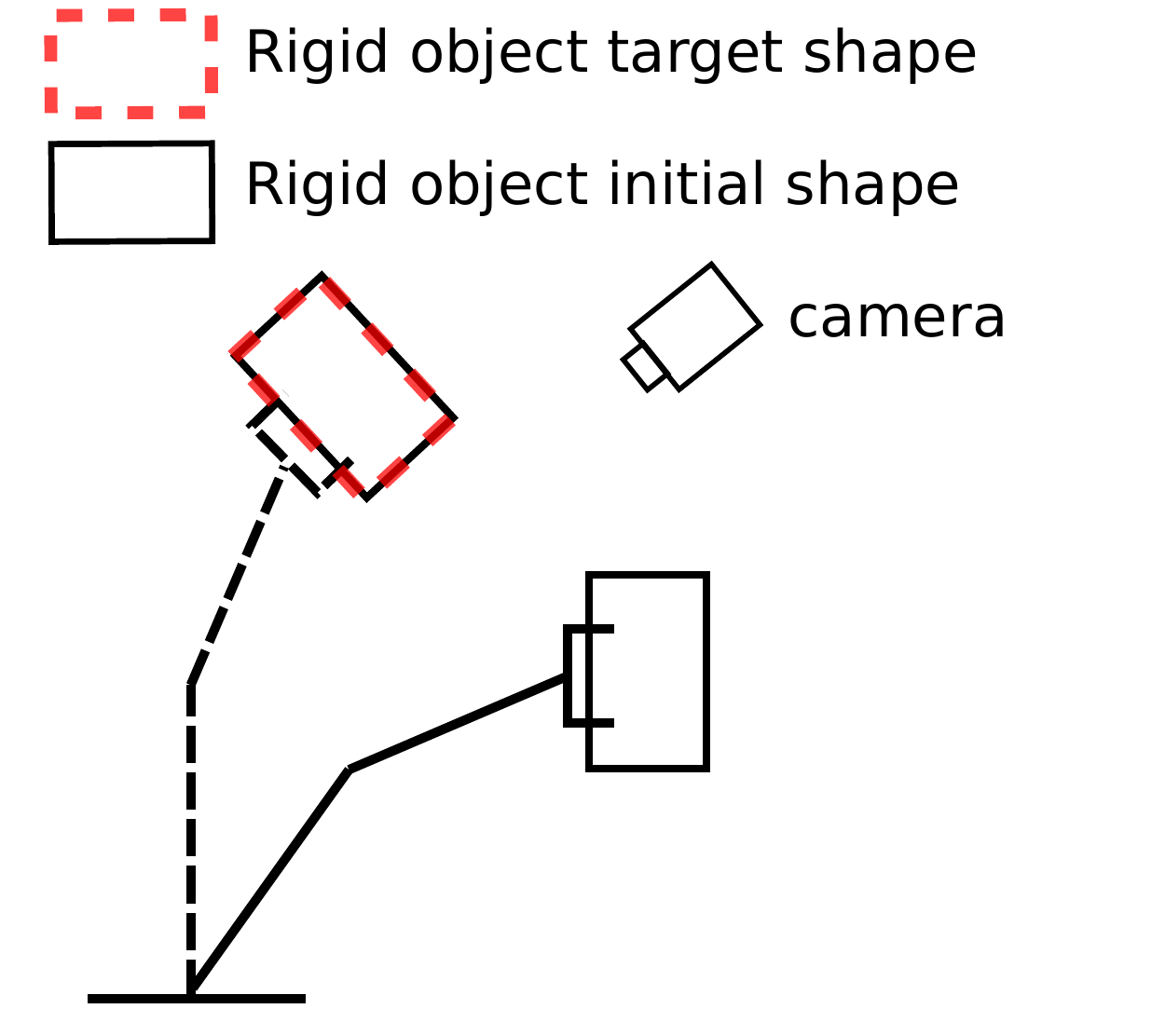}\label{fig:PCA_problem_statement_rigid}} \hspace{5mm}
    \subfloat[Deformable objects]{\includegraphics[width = 0.6\textwidth]{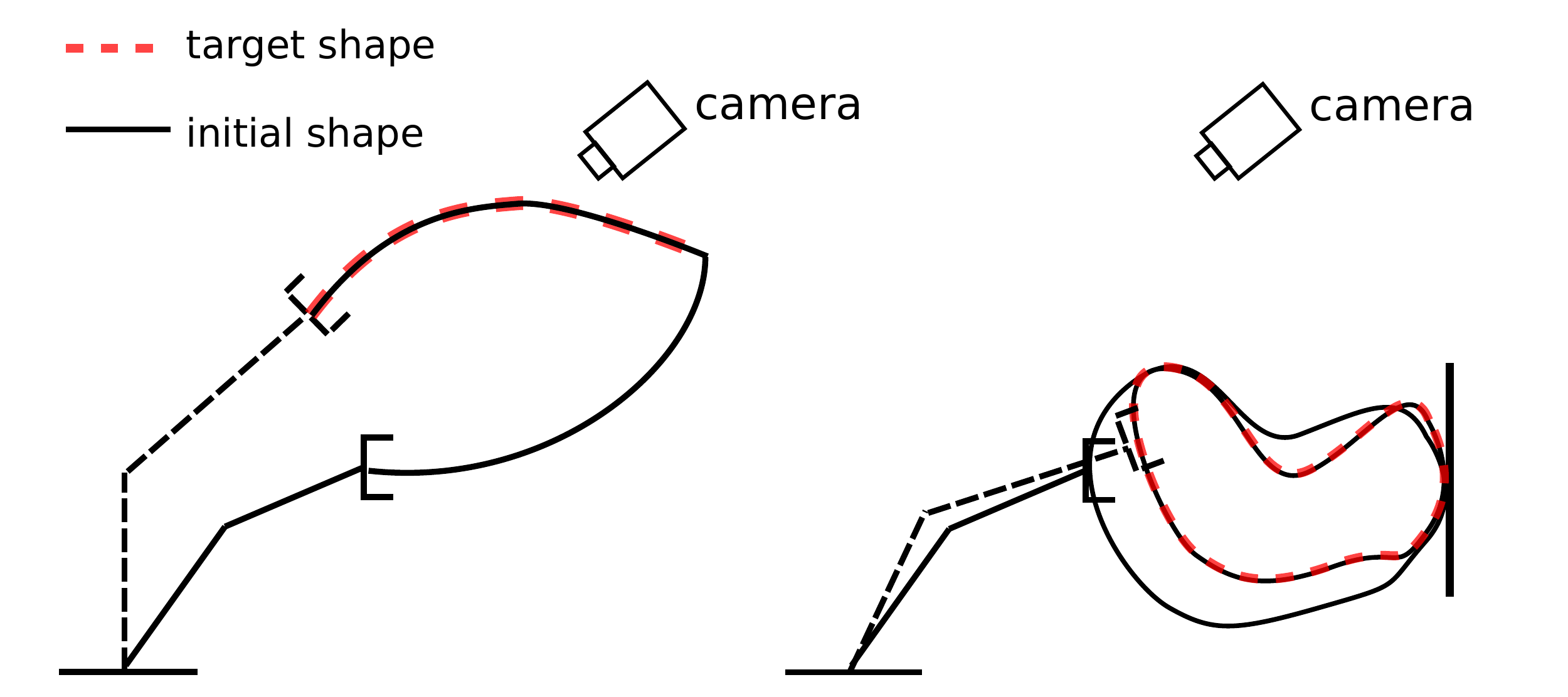}\label{fig:PCA_problem_statement_deform}}
	\caption{Vision-based manipulation of rigid and deformable objects. For rigid objects (left): control pose (translation and rotation). For deformable objects (right): control the pose, and also shape.}
	\label{fig:PCA_problem_statement}
\end{figure}

\begin{problem}
Given a target shape of the object, represented by a constant contour vector $\vc^*$, we aim at designing a vision-based controller that generates a sequence of robot motions $\delta\vr_i$ to drive the initial contour to the target one.
\end{problem}

The controller should work without any knowledge of the object physical characteristics, i.e., for both rigid and deformable objects. In the latter case, we assume that the deformation is homogeneous. Since rigid and deformable objects behave differently during manipulation, we set the following manipulation goals:
\begin{itemize}
    \item Rigid objects: move them to a target pose (see Fig. \ref{fig:PCA_problem_statement_rigid}).
    \item Deformable objects: move them to a target pose with a target shape (see Fig. \ref{fig:PCA_problem_statement_deform}).
\end{itemize}

The formulation of the problem is general, but due to challenges in perception (discussed in Sect. \ref{sec:conclusion}), we carried out the cases of study with movements in $\mathbb{SE}\left(2\right)$.    

\section{Preliminary}\label{sec:overview}
In this section, we present an overview of the proposed approach, motivated by the problem analysis. Throughout the paper, we use $\vc$ to indicate the \emph{object contour} and $\vs$ as the \emph{feature vector} obtained from the contour. The subscript $i$ indicates the instance of the variable at iteration $i$ (e.g., $\vc_i$ is the contour at iteration $i$). 
% \subsection{Problem analysis}

We can work directly on the object shape space by selecting the contour as the feature vector $\vs \equiv \vc \in \mathbb{R}^{2K}$. With image and data processing, we can extract a fixed number of ordered (i.e., identified) contour points to represent the shape/pose of the object. However, this will result in an unnecessarily large dimension of the feature vector (e.g., if $K = 50$, $\vs$ has 100 components). The high dimensional feature vector increases the computation demand and complicates the control due to the high under-actuation of the system. Therefore, instead of working on this feature vector, we work on one with smaller dimensions. To this end, we split the problem into two sub-problems: \emph{parameterization} and \emph{control}, see Fig. \ref{fig:shape-servo}. 

\begin{figure}[thpb]
	\centering
	\includegraphics[width = 0.6\textwidth]{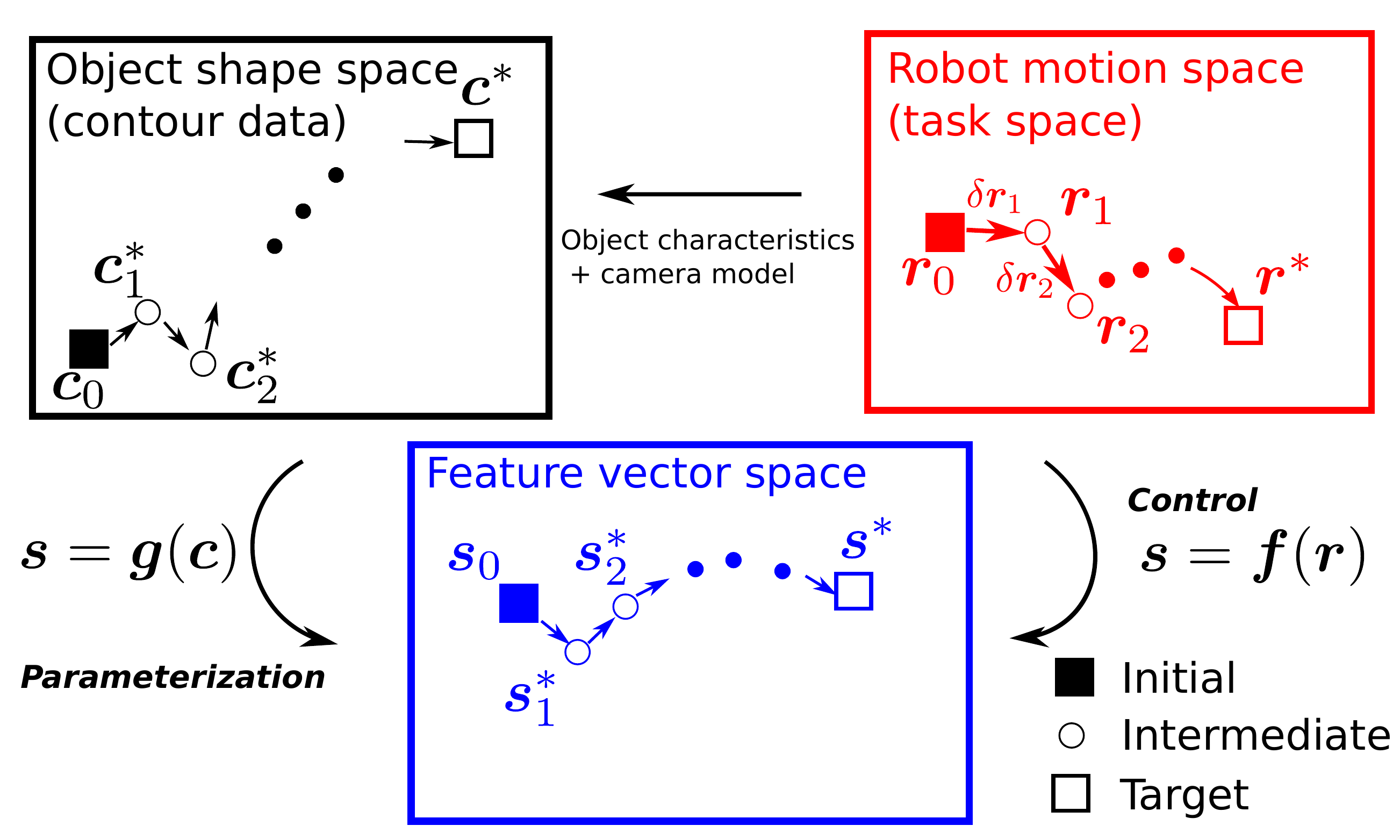}
	\caption{Graphic representation of the vision-based manipulation problem, with its two sub-problems, \emph{parameterization} and \emph{control}.}
	\label{fig:shape-servo}
\end{figure}

\emph{Parameterization} consists in representing the contour via a compact feature vector $\vs \in \mathbb{R}^k$, such that $k << 2K$. We denote this representation as $\vs = \vg(\vc)$. We introduce the method for parameterization in Sect. \ref{sec:feat_vector}.

\emph{Control} consists in computing robot motions $\delta \vr_1, \delta \vr_2, \dots$, so that the object's representation $\vs$ converges to the target $\vs^*$. \emph{Control} can be broken down to solving the optimization problem:
\begin{equation}\label{eq:solve_r}
    \vr^* = \arg\min_{\vr} (\vf(\vr) - \vs^*) 
\end{equation}
where $\vs = \vf(\vr)$ denotes the mapping between robot pose and feature vector, which is assumed to be smooth and generally nonlinear. The smoothness assumption requires that the objects' contour is at least twice differentiable with respect to the robot motion. If we know the analytic solution to $\vf(\vr)$, we can solve (\ref{eq:solve_r}) and obtain the target shape in a single iteration by commanding $\vr^*$. 

A solution to this problem is to approximate the full mapping $\vf(\vr)$ from sensor observations. Classic deep learning-based approaches typically require a long training phase to collect vast and diverse data for approximating $\vf(\vr)$. In some cases (for instance, robotics surgery), it is not possible to collect such data beforehand. Moreover, if the object changes, new data has to be collected to retrain the model, leading to a cumbersome process. In this paper instead, we aim at doing the data collection online, with minimum initialization.  

Thus, instead of estimating the full nonlinear mapping $\vf(\vr)$, we divide it into piece-wise linear models \citep{Journals:Sang2012} at successive equilibrium points. The locality assumption refers to both the time and spatial dimensions. These models are considered time invariant in the neighbourhood of the equilibrium points. We then compute the control law for each linear model and apply it to the robot end-effector. We will dedicate Sections~\ref{sec:local_target} and
~\ref{sec:local_model_est} to the local models and Sections \ref{sec:crtl} and~\ref{sec:m_target} to derive the control inputs and to analyze (local) stability.

% \subsection{Method outline}

%To divide the nonlinear mapping into piece-wise linear models, 

\section{Methodology}\label{sec:method}
Given a target shape $\vc^*$, we define an intermediate local target $\vc^*_i$ at each $i = 1,2,\ldots$ (see Fig. \ref{fig:shape-servo}). At the $i^{\text{th}}$ iteration, the robot autonomously generates a local mapping $\vg_i$ to produce the feature vector $\vs_i = \vg_i(\vc_i)$. The robot then finds the local mapping $\vs_i = \vf_i(\vr_i)$ online.
% Note that, since the local shape space is much smaller than the overall shape space, learning the local mapping $\vf_i$ requires considerably less data than learning the whole mapping $\vf$. 

Consider at the current time instant $i$, the shape $\vc_i$, the intermediate target $\vc^*_i$ and the local parameterization $\vg_i$. We can transform shape data into a feature vector by:
% \begin{align}
%     \begin{split}
%         \vs_i &= \vg_i(\vc_i), \\
%         \vs^*_i &= \vg_i(\vc^*_i).
%     \end{split}
% \end{align}
\begin{equation}
    \vs_i = \vg_i(\vc_i),~\vs^*_i = \vg_i(\vc^*_i).
\end{equation}
The linearized version of $\vs = \vf(\vr)$ centered at $\left(\vs_i, \vr_i\right)$ is then:
\begin{equation}\label{eq:linear_sys_A}
    \delta \vs_i = \vL_i \delta \vr_i,
\end{equation}
with
\begin{align}
    \begin{split}
        \vL_i &= \frac{\partial \vf_i}{\partial \vr}\mid_{\vr = \vr_i}, \\
        \delta \vs_i &= \vs_{i + 1} - \vs_i, \\
        \delta \vr_i &= \vr_{i + 1} - \vr_i.
    \end{split}
    \label{eq:differentials}
\end{align}
The matrix $\vL_i$ represents a local mapping, referred to as the interaction matrix in the visual servoing literature \citep{chaumette2006visual}. If $\vL_i$ can be estimated online at each iteration $i$, then, we can design one-step control laws to drive $\vs_i$ towards $\vs_i^*$. 

After the robot has executed the motion command $\delta \vr_i $, we update the next target to be $\vs_{i + 1}^*$, and so on, until it reaches the final target $\vs^*$. Although the validity region of this local mapping is smaller than that of the original nonlinear mapping, it enables to use an online training approach that requires less data and reduced computational demand.
% In the following section, we detail our proposed approach.

Figure \ref{fig:overall-alg} shows the building blocks of the overall framework. In this section, we focus on the red dashed part of the diagram. We will elaborate on each red block in the subsequent subsections. The blue block represents the image processing pipeline that will be discussed in Sect. \ref{sec:5-vision-pipeline}.

\begin{figure}[thpb]
	\vspace{-0.3cm}
	\centering
	\includegraphics[width = 0.7\textwidth]{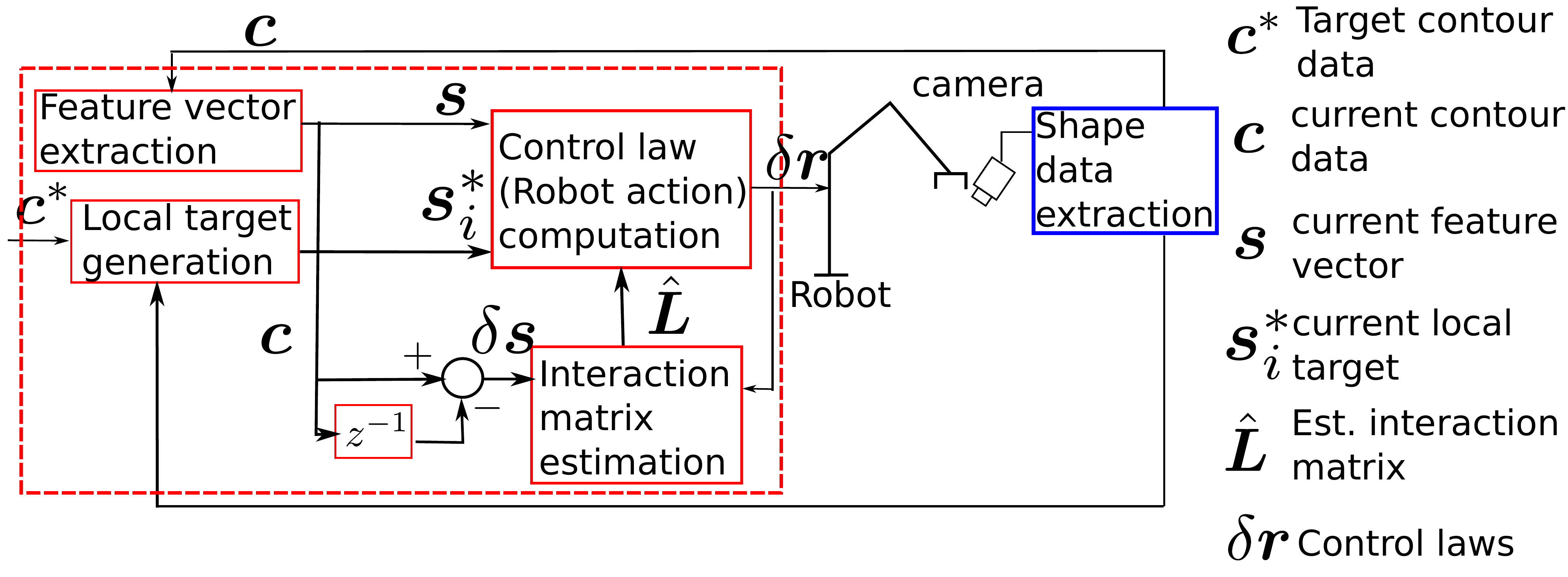}
	\vspace{-0.4cm}
	\caption{The block diagram that represents the overall framework.}
	\label{fig:overall-alg}
	\vspace{-0.5cm}
\end{figure}

\subsection{Feature vector extraction}\label{sec:feat_vector} 

There are many ways to parameterize $\vc$ in order to reduce its dimension. One of the prominent dimension reduction methods is Principal Component Analysis (PCA). PCA finds a new orthogonal basis for high-dimensional data. This enables projection of the data to lower dimension with the minimal sum of squared residuals. It was used in image processing \citep{zhang2010two} and classification \citep{zeng2016color}. In visual servoing, the method was first introduced in \citep{nayar1996subspace}. PCA is proven to be an effective, yet easy to implement, algorithm for dimension reduction. By projecting to the new orthogonal space, each feature component is linearly independent. Besides, by checking the explained variance of a feature, we can intuitively measure if it represents the original shape. 

% Since the interaction matrix in (\ref{eq:linear_sys_A}) represents a \emph{linear} mapping for control, to be consistent, we look for a \emph{linear} transformation for parameterization. %The linear consistency motivates our choice on the dimension of the feature vector that will be discussed later in this section. 

% We can group $\vc \in \mathbb{R}^{2 \times K}$ into a single vector
% \begin{equation}
%     \vc = [\vu~\vv]^T \in \mathbb{R}^{2K},
% \end{equation}
% where
% \begin{equation}
%     \vu = [u_1,\cdots,u_K],~\vv = [v_1,\cdots,v_K].
% \end{equation}
We apply PCA to reduce $\vc \in \mathbb{R}^{2K}$ to $\vs \in \mathbb{R}^k$. To find the projection, we collect $M$ images with different shapes of the object and construct the data matrix $\vGamma = [\vc_1~\vc_2~\cdots~\vc_M] \in \mathbb{R}^{2K \times M}$.
Then, we shift the columns of $\vGamma$ by the mean contour $\bar{\vc} = \sum_{i = 1}^M \vc_i / M$:
\begin{equation}\label{eq:PCA_normalized}
    \bar{\vGamma} = [\vc_1 - \bar{\vc}~~~\vc_2 - \bar{\vc}~~~\cdots~\vc_M - \bar{\vc}] \in \mathbb{R}^{2K \times M}.
\end{equation}
%where 
%\begin{equation}
%    \bar{\vc} = \frac{1}{M}\sum_{i = 1}^M \vc_i.
%\end{equation}
We then compute the covariance matrix $\vC = \bar{\vGamma} \bar{\vGamma}^T$, and apply Singular Value Decomposition (SVD) to it:
\begin{equation}\label{eq:svd}
    \vC = \vU \vSigma \vV^T.
\end{equation} 
Once we have obtained the eigenvector matrix $\vU \in \mathbb{R}^{2K \times 2K}$, we can move on to select the first $k$ columns\footnote{In the SVD algorithm, the first $k$ columns correspond to the $k$ largest eigenvalues of matrix $\Sigma$.} of $\vU$ denoted by $\vU(k) \in \mathbb{R}^{2K \times k}$. Then, the $2K$-dimensional contour $\vc$ can be projected into a smaller $k$-dimensional feature vector $\vs$ as:
\begin{equation}\label{eq:k_feature_vec}
    \vs = \vU^T(k) (\vc - \bar{\vc}) \in \mathbb{R}^k.
\end{equation}

To assess the quality of this projection, we can compute the \textit{explained variance} using the eigenvalue matrix $\vSigma \in \mathbb{R}^{2K \times 2K}$ in (\ref{eq:svd}). By denoting the diagonal entries of $\vSigma$ as $\sigma_1, \cdots, \sigma_{2K}$, the explained variance of the first $k$ components is:
\begin{equation}\label{eq:explained_var}
    \Upsilon(k) = \frac{\sum^k_{j = 1} \sigma_j}{\sum^{2K}_{j = 1} \sigma_j}.
\end{equation} 
where $\Upsilon$ is a scalar between $0$ and $1$ (since $\sigma_j > 0$, $\forall j$), indicating to what extent the $k$ components represent the original data (a larger $\Upsilon$ suggests a better representation). 

Since PCA calculate features that lie on an orthonormal basis, these features are linearly independent. For controlling $n$ DoF, at least the same number of independent visual features should be used. Therefore, we set $k = n$ features.

\subsection{Local target generation}\label{sec:local_target}

Let us now explain how we generate a local target contour $\vc_i^*$ given a current contour $\vc_i$ and final target contour $\vc^*$. We also show this in Algorithm \ref{alg:localtarget}.
The overall shape error is given by:
\begin{equation}
    \vc_{\ve} = \vc^* - \vc_i.
\end{equation}
We define the intermediate target contour as:
\begin{equation}
    \vc_i^* = \vc_i + \frac{1}{\eta}\vc_{\ve}, 
\end{equation}
with $\eta = 1,2,\ldots$ an integer that ensures that $\vc_i^*$ is a ``good'' local target for $\vc_i$ (i.e., the two are similar). Therefore, if we project the intermediate local data using the eigenvector matrix at the current iteration, $\vU_i \in \mathbb{R}^{2K \times 2K}$ (note that we are using the full projection matrix and not just the first $k$ columns), the projection $\vs^p_i = \vU_i (\vc_i^* - \bar{\vc}) \in \mathbb{R}^{2K}$ should fulfil:
\begin{equation}\label{eq:local_target_condition}
    \Psi \left( k \right) = \frac{\sum^k_{j = 1} \left|s_{i,j}^p\right|}{\sum^{2K}_{j = 1} \left|s_{i,j}^p\right|} \geq \epsilon,
\end{equation}
with $\epsilon \in \left[ 0; 1 \right]$ a threshold and $s_{i,j}^p$ the $j$-th component of the projection. Then, we select the first $k$ components in $\vs^p_i$ to be the local target $\vs^*_i \in \mathbb{R}^k$. 

Algorithm~\ref{alg:localtarget} outlines the steps for computing the local intermediate targets, so that:
\begin{itemize}
    \item they are near the final target,
    \item the corresponding feature vector can be extracted with the current learned projection matrix. 
\end{itemize}
\begin{remark}
The reachability of a local target can only be verified with a global deformation model which we want to avoid identifying in our methods. We will further discuss this issue in the Conclusion (Sect. \ref{sec:conclusion}).
\end{remark}

\begin{algorithm}
    \caption{Local target generation}\label{alg:localtarget}
    \begin{algorithmic}
        \State localTargetFound = \textbf{false}
        \State $\Psi_0 = 0$
        \State $\eta = 1$
        \While{\textbf{not} localTargetFound}
        \State $\vc_i^* = \vc_i + \frac{1}{\eta}(\vc^* - \vc_i)$
        \State $\vs_i^p = \vU_i \vc_i^*$
        \State $\Psi_\eta = {\sum^k_{j = 1} \left| s_{i,j}^p \right|} / {\sum^{2K}_{j = 1} \left| s_{i,j}^p\right|}$
        \If{$\Psi_\eta \geq \epsilon$ \textbf{or} $\Psi_\eta < \Psi_{\eta -1}$}
        \State localTargetFound = \textbf{true}
        \State $\vs^*_i = [\vI~ \vnull] \vs_i^p$
        \EndIf
        \State $\eta = \eta + 1$
        \EndWhile
    \end{algorithmic}
\end{algorithm}

\subsection{Interaction matrix estimation}\label{sec:local_model_est}

Let us consider the current contour $\vc_i$ and the local target $\vc^*_i$. In this section, we show how we can implement the PCA and model estimation together and online.
We denote the robot motions and corresponding object contours over the last $M$ iterations (prior to iteration $i$, with $i \geq M$) as:
\begin{align}
    \begin{split}
    \vDelta \vR_i & = \begin{bmatrix}
        \delta \vr_{i-M+1}\; \delta \vr_{i-M+2} \cdots \delta \vr_{i}     
    \end{bmatrix} \in \mathbb{R}^{n \times M}\\
            \vGamma_i & =
    \begin{bmatrix}
        \vc_{i-M} \; \vc_{i-M+1} \; \vc_{i-M+2} \cdots \vc_{i}     
    \end{bmatrix} \in \mathbb{R}^{2K \times (M+1)},
    \end{split}
\end{align}
with $M$ the number of data samples collected during initialization, i.e., the size of the sliding window used for model adaptation (see Sect. \ref{sec:m_target}).

By selecting $k = n$ (note that $n$ is also the number of DoFs of the robot manipulator we considered in the task execution), we compute the projection matrix $\vU_{i}(n) \in \mathbb{R}^{2K \times n}$, from $\vGamma_i$ and $\bar{\vc}_i$ via~(\ref{eq:PCA_normalized}) and (\ref{eq:svd}). Then, using $\vU_{i}(n)$, we project current contour $\vc_i$, target contour $\vc^*_i$ and shape matrix $\vGamma_i$:
\begin{align}\label{eq:s_S}
    \begin{split}
        \vs_i & = \vU_{i}(n)^T (\vc_i - \bar{\vc}_i) \in \mathbb{R}^n, \\
        \vs^*_i & = \vU_{i}(n)^T (\vc^*_i - \bar{\vc}_i) \in \mathbb{R}^n, \\
        \vS_i & = \vU_{i}(n)^T \bar{\vGamma}_i = \begin{bmatrix}
        \vs_{i-M} \; \vs_{i-M+1} \; \cdots \; \vs_{i}     
    \end{bmatrix} \in \mathbb{R}^{n \times (M+1)}.
    \end{split}
\end{align}
In~(\ref{eq:s_S}), $\bar{\vGamma}_i$ is normalized by $\bar{\vc}_i$ as in (\ref{eq:PCA_normalized}). We can then compute $\vDelta \vS_i$ from (\ref{eq:differentials}) and (\ref{eq:s_S}), by subtracting consecutive columns of $\vS_i$:
\begin{equation}
    \vDelta \vS_i = \begin{bmatrix}
%        \vs_{i-M+1} - \vs_{i-M} \; \vs_{i-M+2} - \vs_{i-M+1} \; \cdots \; \vs_{i} - \vs_{i-1}
\delta \vs_{i-M+1}\; \delta \vs_{i-M+2} \cdots \delta \vs_{i}
    \end{bmatrix} \in \mathbb{R}^{n \times M}.
\end{equation}
Using $\vDelta \vS_i \in \mathbb{R}^{n \times M}$ and $\vDelta \vR_i \in \mathbb{R}^{n \times M}$ we can now estimate the local interaction matrix $\vL_i \in \mathbb{R}^{n \times n}$ at iteration $i$. We assume that near this iteration, the system remains linear and time invariant: $\vL_i$ is constant. Using the local linear model (\ref{eq:linear_sys_A}), we can write the following:
\begin{equation}\label{eq:solve_A}
   \vDelta \vS_i =  \vL_i \vDelta \vR_i.
\end{equation}
Our goal then is to solve for $\vL_i$, given $\vDelta \vS_i$ and $\vDelta \vR_i$. Note that this is an overdetermined linear system (with $n \times M$ equations for $n^2$ unknowns). 
% With the PCA feature generation, we have $M \geq k$, and $\vDelta \vS_i$ is full rank and $\operatorname{rank}(\vDelta \vS_i) = k$. 
% If $\vDelta \vR_i$ is a full rank matrix.
Let us consider $\vDelta \vR_i \in \mathbb{R}^{n \times M}$ has full row rank. Note this sufficiently implies $M \geq n$. With this prerequisite, $\operatorname{rank}(\vDelta \vR_i) = n$. Therefore, $\operatorname{rank}(\vDelta \vR_i \vDelta \vR_i^T) = n$, and its inverse exists. We post multiply (\ref{eq:solve_A}) by $\Delta \vR_i^T$:
\begin{equation}\label{eq:solve_A_step_1}
    \vDelta \vS_i \vR_i^T =  \vL_i \vDelta \vR_i \vDelta \vR_i^T.
\end{equation}
Then, since $\vDelta \vR_i \vDelta \vR_i^T$ is invertible, the $\vL_i$ that best fulfills~(\ref{eq:solve_A}) is:
\begin{equation}\label{eq:A_M_geq_n}
    \hat{\vL}_i = \vDelta \vS_i \vDelta \vR_i^T (\vDelta \vR_i \vDelta \vR_i^T)^{-1}.
\end{equation}
If, in practice, the full row rank condition of $\vDelta \vR_i$ is not satisfied, $\operatorname{rank}(\vDelta \vR_i \vDelta \vR_i^T) < n$ and $\vDelta \vR_i \vDelta \vR_i^T$ becomes singular. Then, instead of~(\ref{eq:A_M_geq_n}), we can use Tikhonov regularization:
\begin{equation}
\label{eq:intMatEst}
    \hat{\vL}_i = \vDelta \vS_i \vDelta \vR_i^T (\vDelta \vR_i \vDelta \vR_i^T + \lambda \vI)^{-1},
\end{equation}
with $\lambda$ an arbitrary (generally small) scalar.

Practically, this implies that one or more inputs motions do not appear in $\vDelta \vR_i$. Therefore, we cannot infer the relationship between these motions and the resulting feature vector changes. In this case it is better to increase $M$ and obtain more data, so that $\vDelta \vR_i$ has full row rank.

Instead of computing the interaction matrix, it is also possible to directly compute its inverse, since this guarantees better control properties~\citep{lapreste2004efficient}. % and avoids inversion of the matrix for computing the control. 
With the same data, one can re-write (\ref{eq:solve_A}) as:
\begin{equation}\label{eq:solve_inverse}
    \vL_i^\oplus \vDelta \vS_i = \vDelta \vR_i.
\end{equation}
We can also solve (\ref{eq:solve_inverse}) with Tikhonov regularization:
\begin{equation}\label{eq:intMatInvEst}
    \hat{\vL}_i^\oplus = \vDelta \vR_i \vDelta \vS_i^T( \vDelta\vS_i \vDelta\vS_i^T + \lambda\vI)^{-1}.
\end{equation}
\subsection{Control law and stability analysis}\label{sec:crtl}

We can now control the robot, with either of the following strategies:
\begin{equation}\label{eq:one-step-ctrl}
    \delta \vr_i = - \alpha \hat{\vL}_i^{\dagger}(\vs_{i} - \vs^*_i),
\end{equation}
if one estimates the interaction matrix with~(\ref{eq:intMatEst}), where $^{\dagger}$ denotes the pseudo-inverse, or:
\begin{equation}\label{eq:one-step-ctrl-inv}
\delta \vr_i = - \alpha \hat{\vL}_i^\oplus (\vs_{i} - \vs^*_i)
\end{equation}
if one estimates the inverse of the interaction matrix with~(\ref{eq:intMatInvEst}). In both equations, $\alpha > 0$ is an arbitrary control gain.
\begin{proposition}\label{prop:2}
    Consider that locally, the model (\ref{eq:linear_sys_A}) closely approximates the interaction matrix $\vL_i = \hat{\vL}_i$. For $M$ number of linearly independent displacement vectors $\delta \vr$ such that the interaction matrix $\hat{\vL}_i$ is invertible, the update rule (\ref{eq:one-step-ctrl}) asymptotically minimizes the error $\ve_i = \vs_i^* - \vs_i$, where $\vs_i^*$ is the local target.
\end{proposition}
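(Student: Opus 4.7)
The plan is to reduce the closed-loop update to a scalar-type contraction in the feature-error coordinates and then wrap a standard Lyapunov argument around it. First, I would observe that the hypothesis of $M$ linearly independent displacement vectors $\delta\vr$ makes $\vDelta\vR_i\vDelta\vR_i^T$ nonsingular and, combined with the assumption $\vL_i=\hat{\vL}_i$, yields an invertible square estimate so that $\hat{\vL}_i^{\dagger}=\hat{\vL}_i^{-1}$. Substituting the control law~(\ref{eq:one-step-ctrl}) into the local linear model~(\ref{eq:linear_sys_A}) then gives
\begin{equation*}
    \delta\vs_i \;=\; \vL_i\,\delta\vr_i \;=\; -\alpha\,\hat{\vL}_i\hat{\vL}_i^{-1}(\vs_i-\vs_i^*) \;=\; -\alpha\,(\vs_i-\vs_i^*).
\end{equation*}

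Next, using $\delta\vs_i=\vs_{i+1}-\vs_i$ from~(\ref{eq:differentials}) and treating the local target $\vs_i^*$ as fixed over the update (which is consistent with the local-model hypothesis and the construction in Sect.~\ref{sec:local_target}), the error $\ve_i=\vs_i^*-\vs_i$ evolves as
\begin{equation*}
    \ve_{i+1} \;=\; \vs_i^* - \vs_{i+1} \;=\; \vs_i^* - \vs_i - \delta\vs_i \;=\; (1-\alpha)\,\ve_i.
\end{equation*}
I would then introduce the Lyapunov candidate $V_i=\tfrac12\,\|\ve_i\|^2$ and show
\begin{equation*}
    V_{i+1} \;=\; (1-\alpha)^2\,V_i,
\end{equation*}
so that $V_i\to 0$ (equivalently $\ve_i\to\vnull$) provided $|1-\alpha|<1$, i.e.\ $\alpha\in(0,2)$. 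This gives geometric, hence asymptotic, convergence of $\vs_i$ to the local target $\vs_i^*$.

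The main obstacle is not the algebra but making the hypotheses precise and honest. The proposition assumes $\vL_i=\hat{\vL}_i$ holds \emph{locally}, so the conclusion is really a local asymptotic stability statement in a neighbourhood of $\vs_i^*$: one must verify that the step $\delta\vr_i$ produced by~(\ref{eq:one-step-ctrl}) remains inside the linearization domain, which for $|1-\alpha|<1$ follows because the step norm is contracted by the same factor as the error. A second subtlety is the gain range: the proposition as stated only requires $\alpha>0$, but the discrete-time analysis above additionally needs $\alpha<2$; I would either restrict $\alpha$ accordingly or invoke the continuous-time interpretation $\dot{\ve}=-\alpha\,\ve$, which yields $\dot V=-2\alpha V<0$ for every $\alpha>0$ and recovers the statement as written. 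Finally, I would remark that the invertibility of $\hat{\vL}_i$ is exactly what the full-row-rank condition on $\vDelta\vR_i$ guarantees in~(\ref{eq:A_M_geq_n}), so the hypotheses of the proposition are internally consistent with the estimator developed in Sect.~\ref{sec:local_model_est}.
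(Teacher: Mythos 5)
Your proposal is correct and follows essentially the same route as the paper's proof: substitute the control law~(\ref{eq:one-step-ctrl}) into the discretized local model to obtain the error recursion $\ve_{i+1} = (1-\alpha)\,\ve_i$, then conclude asymptotic stability with a quadratic Lyapunov function. Your remark on the admissible gain is in fact slightly sharper than the paper, which asserts $\Delta\mathcal{V}<0$ for $\alpha\in\left[0;1\right]$ even though the inequality is strict only for $\alpha\in(0,2)$ (and degenerates at $\alpha=0$), exactly as your analysis points out.
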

\begin{proof}
    With $\delta \vs_i = \vs_{i + 1} - \vs_{i}$, we can write (\ref{eq:linear_sys_A}) in discretized form as
    \begin{equation}\label{eq:prop2_1}
        \vs_{i + 1} = \vs_{i} + \vL_i \delta \vr_i.
    \end{equation}
    From the definition of $\ve_i$ we have (Note here the target $\vs_i^*$ is not updated with $i$ since we want to prove local convergence to a constant target):
    \begin{align}\label{eq:prop2_2}
        \begin{split}
            \ve_{i} &= \vs_i^* - \vs_{i} \\ 
            \ve_{i + 1} &= \vs_i^* - \vs_{i + 1} 
        \end{split}
    \end{align}
    Taking (\ref{eq:prop2_1}) into (\ref{eq:prop2_2}):
    \begin{align}\label{eq:prop2_3}
        \begin{split}
            \ve_{i + 1} & = \vs_{i}^* - \vs_{i + 1} \\
                     & = \vs_i^* - \vs_{i} - \vL_i \delta \vr_i \\
                     & = \ve_{i} - \vL_i \delta \vr_i.
        \end{split}
    \end{align}
    We replace $\delta \vr_i$ in (\ref{eq:prop2_3}) with the control (\ref{eq:one-step-ctrl}), the error dynamic is then:
    \begin{align}\label{eq:prop2_4}
        \begin{split}
            \ve_{i+1} & =  \ve_{i} - \alpha \vL_i \vL_i^{-1}(\vs_i^* - \vs_{i}) \\
                        & = \ve_i - \alpha \ve_i = (1 - \alpha) \ve_i.
        \end{split}
    \end{align}
    % The error dynamic
    % \begin{equation}\label{eq:prop2_5}
    %     \ve_{i + 1} = (1 - \alpha) \ve_i
    % \end{equation}
    is asymptotically stable for $\alpha \in \left[ 0 ; 1 \right]$. This can be proved by considering the Lyapunov function
    \begin{equation}
        \mathcal{V}(\ve) = \ve^T \ve.
    \end{equation}
    Using the error dynamic (\ref{eq:prop2_4}), one can derive:
    \begin{align}
        \begin{split}
            \Delta \mathcal{V}  & = \mathcal{V}(\ve_{i + 1}) - \mathcal{V} (\ve_i) \\
                                % & = \ve_i^T (1 - \alpha) (1 - \alpha) \ve_i - \ve_i^T \ve_i \\
                                & = \ve_i^T ((1 - \alpha)^2 - 1) \ve_i < 0.
        \end{split}
    \end{align}
    This proves the local asymptotic stability of the error $\ve$ using our inputs. 
\end{proof}

\subsection{Model adaptation}\label{sec:m_target}

Since both the projection matrix $\vU^T(n)$ and the interaction matrix are local approximations of the full nonlinear mapping, they need to be updated constantly. We choose a receding window approach with window size $M$.

At current iteration $i$, we estimate the projection matrix $\vU_i^T$ and local interaction matrix $\vL_i$ with $M$ samples of the most recent data. Using the interaction matrix and the local target $\vc^*_{i}$,  we can derive the one-step command $\delta \vr_i$ by (\ref{eq:one-step-ctrl}). Once we execute the motion $\delta \vr_i$, a new contour data $\vc_{i + 1}$ is obtained. We move to the next iteration $i + 1$. A new pair of input and shape data $[\delta \vr_i, \vc_{i + 1}]$ is obtained. We shift the window by deleting the oldest data in the window and add in the new data pair. Then, using the shifted window, we compute one step control at iteration $i + 1$.  
% The object is driven to a new contour $\vc_{i + 1}$. We remove the previous contour and add in the new contour $\vc_{i + 1}$ to compute a new projection matrix $\vU_{i+1}^T$ using PCA. Then we re-compute the corresponding feature vectors for all $M$ data with this new projection matrix $\vU_{i+1}^T$. For the new contour $\vc_{i + 1}$, we can compute a new feature vector $\vs_{i + 1}$. Therefore, we have a new set:
% \begin{equation} \label{eq:5_data_N1}
% 	\begin{cases}
% 		\delta \vs_{i + 1} & = \vs_{i + 1} - \vs_i , \\
%         \delta \vr_{i} &
% 	\end{cases}
% \end{equation}
% We add in the newly obtained data (\ref{eq:5_data_N1}), and remove the previous data. We also update $\vL_i$ to be $\vL_{i + 1}$.

The receding window approach ensures that, at each iteration, we are using the latest data to estimate the interaction matrix. 
%
% Figure \ref{fig:overall-alg} shows the schematic diagram of the proposed algorithm.
% \begin{figure*}[thpb]
%     \centering
%     \includegraphics[width = 0.8\textwidth]{}
%     \caption{The schematic diagram of the proposed algorithm.}
%     \label{fig:overall-alg}
% \end{figure*}
The overall algorithm is initialized with small random motions around the initial configuration. First, $M$ samples of shape data and the corresponding robot motions are collected. With this initialization, we can simultaneously solve for the projection matrix and estimate the initial interaction matrix using the methods described in Sect.~\ref{sec:feat_vector} and \ref{sec:local_model_est}. Using the projection matrix and the initial/target shapes, we can then find an intermediate target (see Sect.~\ref{sec:local_target}).

We consider quasi-static deformation. Hence, at each iteration the system is in equilibrium and can be linearized according to~(\ref{eq:linear_sys_A}). The data that best captures the current system are the most recent ones. The choice of $M$ is a trade-off between locality and richness. For fast varying deformations\footnote{The notion of fast or slow varying depends on both the speed of manipulation, and on the objects deformation characteristics (which affect the rate of change in shapes) with regard to the image processing time.}, we would expect to reduce $M$ since a larger $M$ will hinder the locality assumption. Yet, if $M$ is too small, it affects the estimation of $\hat{\vL}_i$ (refer to the detailed discussion in Sect.~\ref{sec:local_model_est}).

\section{Simulation results}\label{sec:simu}

In this section, we present the numerical simulations that we ran to validate our method.

\subsection{Simulating the objects}
We ran simulations on MATLAB (R2018b) with two types of objects: a rigid box and a deformable cable, both constrained to move on a plane. The rigid object is represented by a uniformly sampled rectangular contour. The controllable inputs are its position and orientation. For the cable, we developed a simulator, which is publicly available at \href{https://github.com/Jihong-Zhu/cableModelling2D}{https://github.com/Jihong-Zhu/cableModelling2D}. The simulator relies on the differential geometry cable model introduced in~\citep{wakamatsu2004static}, with the shape defined by solving a constrained optimization problem. The underlying principle is that the object's potential energy is minimal for the object's static shape \citep{wakamatsu1995modeling}. Position and orientation constraints (imposed at the cable ends) are input to the simulator. The output is the sampled cable. Figures \ref{fig:test_k} -- \ref{fig:sim_singlearm_DLO},~\ref{fig:sim_singlearm_rigid},~\ref{fig:rigid_analysis_random_move} show simulated shapes of cables and rigid boxes. We choose $K = 50$ samples for both rigid objects and cables. The camera perspective projection is simulated, with optical axis perpendicular to the plane.

\begin{figure}[!thpb]
	\vspace{-0.6cm}
	\centering
	\subfloat{\includegraphics[width = 0.24\textwidth]{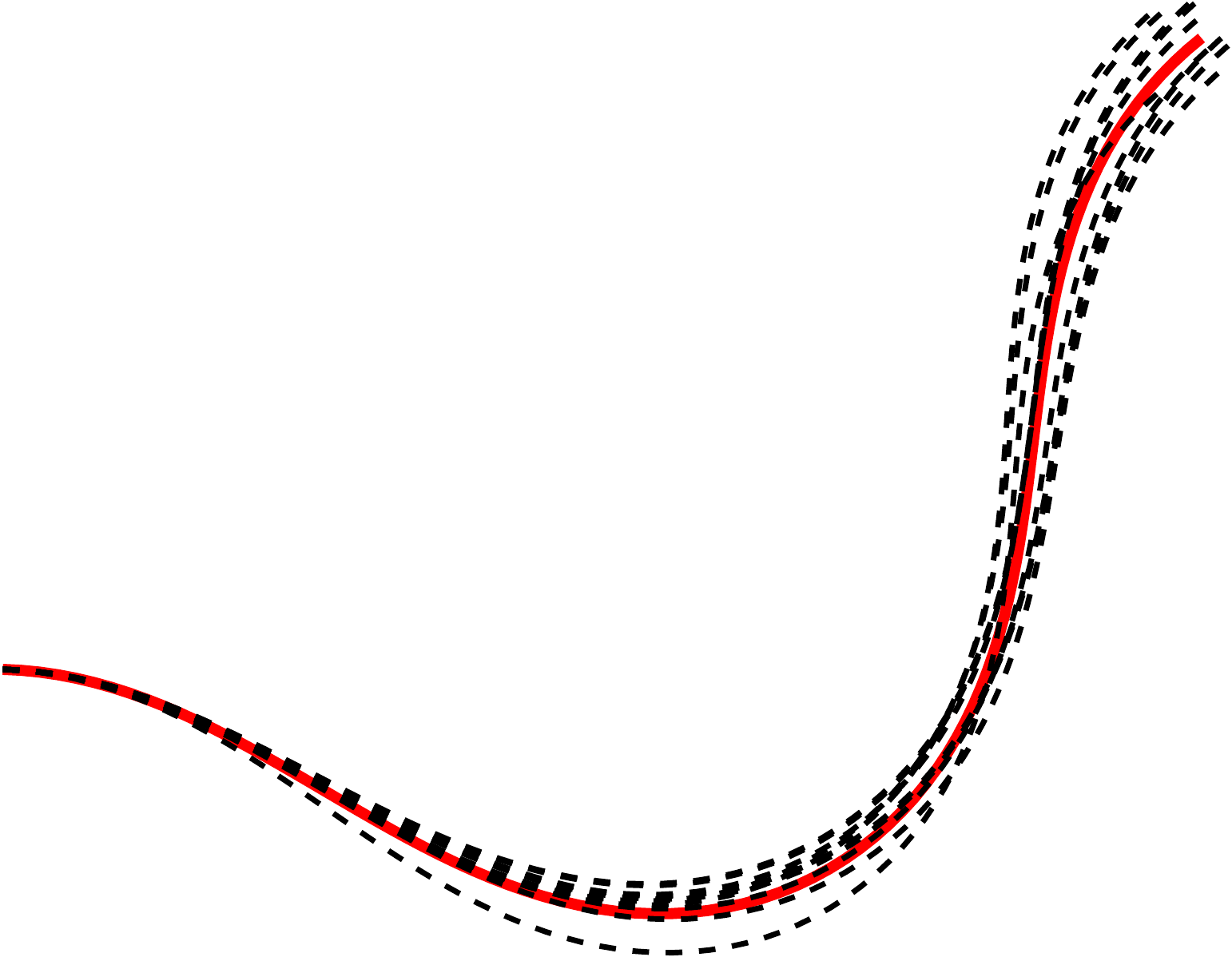}\label{fig:test_k_1}}
	\hspace{5mm}
    \subfloat{\includegraphics[width = 0.2\textwidth]{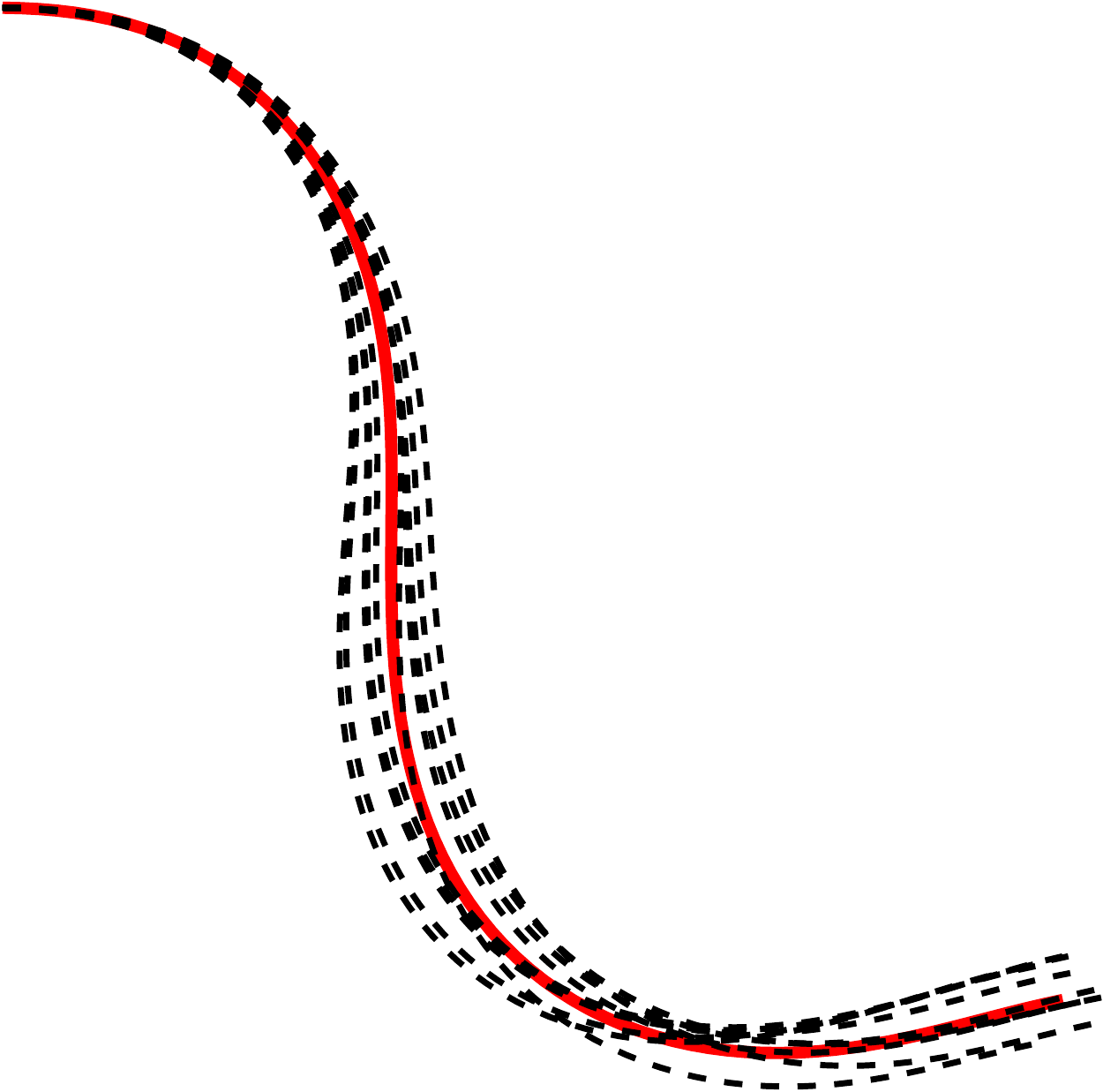}\label{fig:test_k_2}}
    \hspace{5mm}
    \subfloat{\includegraphics[width = 0.16\textwidth]{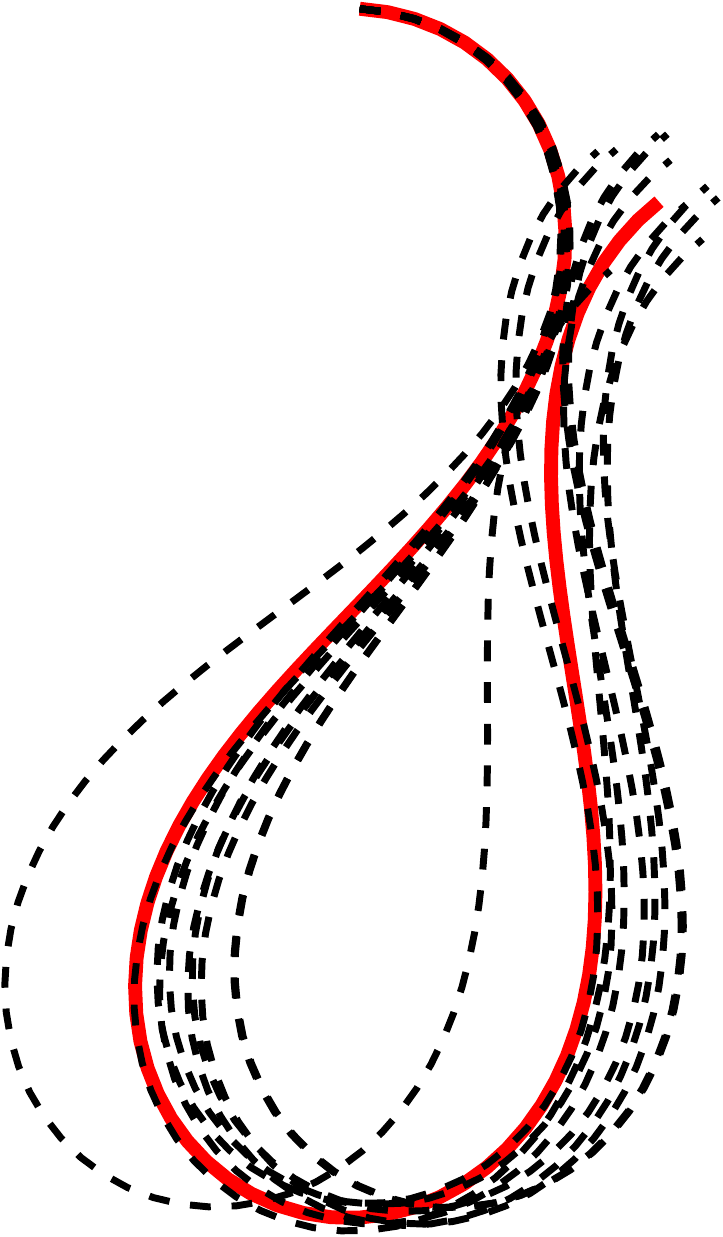}\label{fig:test_k_3}} \\
    \subfloat{\includegraphics[width = 0.24\textwidth]{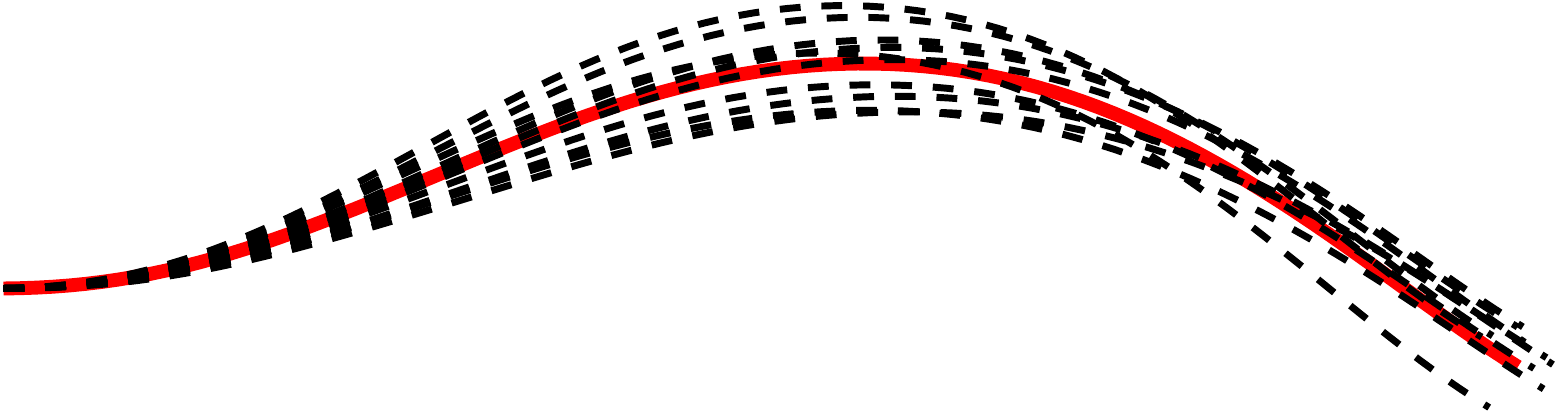}\label{fig:test_k_4}}
	\hspace{0.5mm}
    \subfloat{\includegraphics[width = 0.24\textwidth]{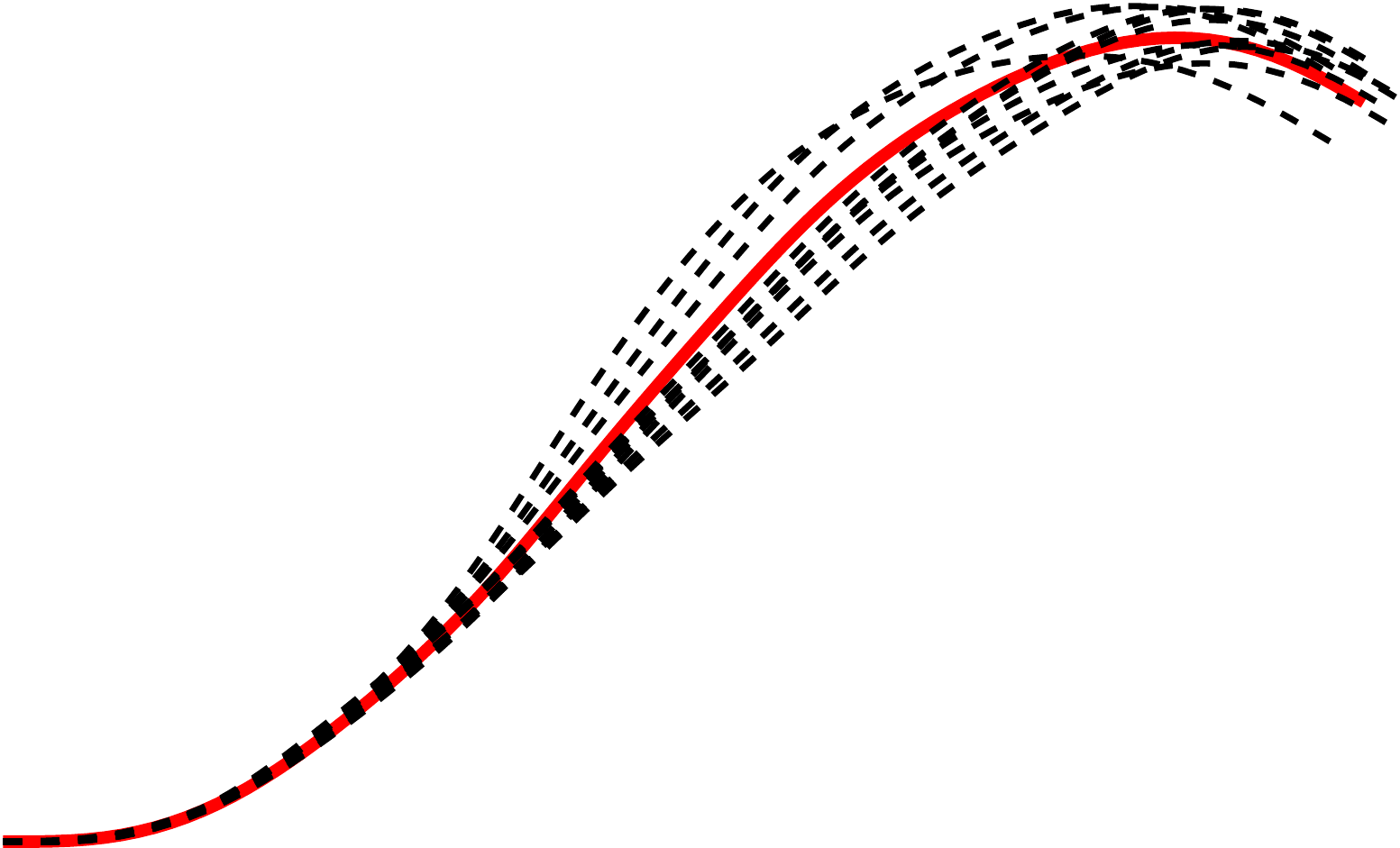}\label{fig:test_k_5}}
    \hspace{0.5mm}
    \subfloat{\includegraphics[width = 0.24\textwidth]{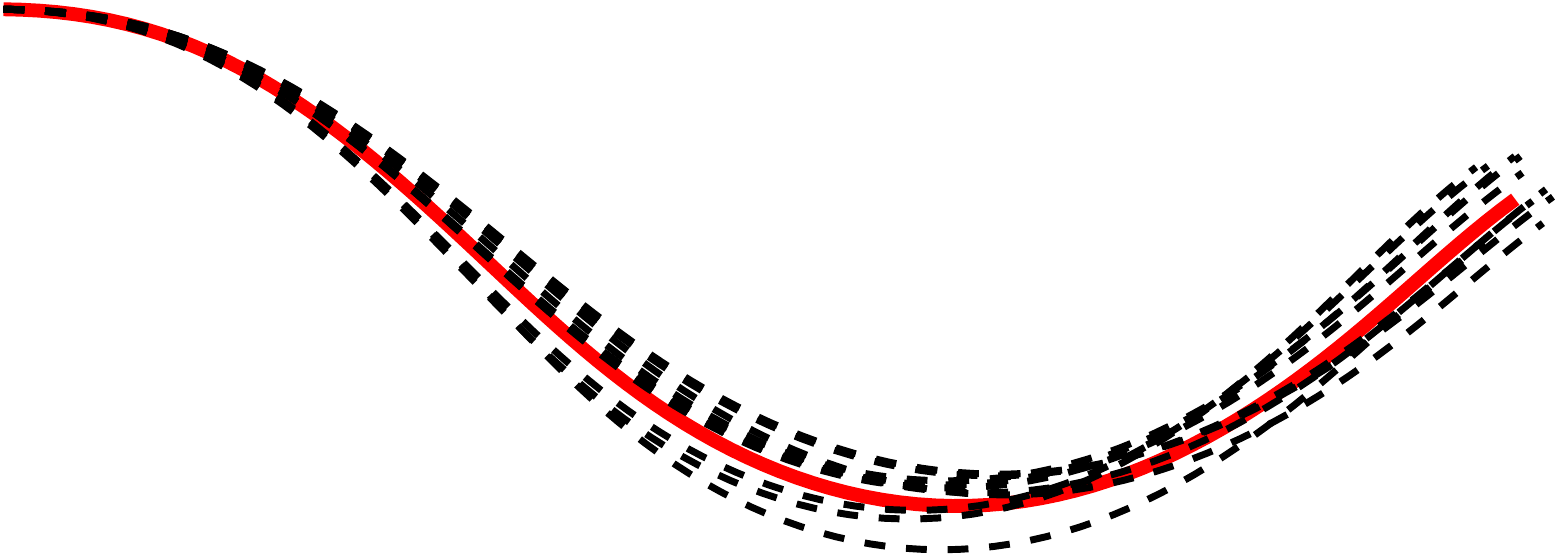}\label{fig:test_k_6}}
    \caption{Six trials conducted to test various choices of feature dimension $k$ for a cable. In each sub-figure, the solid red lines are the initial shapes and the dashed black are the shapes resulting from $10$ random motions of the right tip (translations limited to $\pm 5 \%$ of the length, rotations limited to $\pm 5 \degree$).}
    \label{fig:test_k}
    \vspace{-0.5cm}
\end{figure}

\subsection{Selecting the feature dimension $k$}\label{sec:select_feature_dimension}

To check whether choosing $k = n$ can represent the shape accurately, we simulate $6$ trials with distinct initial shapes of a cable. The dimension of the robot motion vector $\delta \vr$ is $n = 3$ (two translations and one rotation of the right tip), and the motions are limited: each translation to $\pm 5 \%$ of the cable length and the rotation to $\pm 5 \degree$. This range of motion gives a rule of thumb, which we have used for generating random movements throughout our experiments. For each trial, we command $M = 10$ random motions around the initial shape using our simulator. Figure \ref{fig:test_k} shows the $6$ initial cable shapes (solid red) and the resulting shapes from $10$ random movements (dashed black).

For each trial, we apply PCA to map the cable contour $\vc \in \mathbb{R}^{2K}$ to feature vector $\vs \in \mathbb{R}^k$, as explained in Sect.~\ref{sec:feat_vector}. We do this for $k = 1$, $2$ and $3$ and for each of these $18$ experiments, we calculate the explained variance $\Upsilon(k)$ with (\ref{eq:explained_var}). Table \ref{tab:Explained_var} shows these explained variances. In all $6$ trials, $k = n = 3$ yields explained variances very close to $1$. This result confirms that choosing $k = n$ as the dimension of the feature vector gives an excellent representation of the shape data. It is also possible to select $k = 2$, since the first two components can represent more than $99 \%$ of the variance. Nevertheless, the simulation is noise-free. Therefore, although $\Upsilon(k)$ increases little from $k = 2$ to $k = 3$, this increase is not related to noise but to an actual gain in data information. 
\begin{table}[thpb]
	\vspace{-0.2cm}
    \centering
    \caption{Explained variance $\Upsilon(k)$ for the $6$ trials with small motion.}
    \begin{tabular}{c|c|c|c|c|c|c}
        \hline
          & trial 1 & trial 2 & trial 3 & trial 4 & trial 5 & trial 6 \\
          \hline
        $k = 1$ & 0.727 & 0.795 & 0.871 & 0.847 & 0.847 & 0.705 \\
        \hline
        $k = 2$ & 0.992 & 0.995 & 0.996 & 0.997 & 0.997 & 0.994 \\
        \hline
        $k = 3$ & 0.999 & 0.999 & 0.999 & 0.999 & 0.999 & 0.999 \\
        \hline
    \end{tabular}
    \vspace{-0.3cm}
    \label{tab:Explained_var}
\end{table}

At this stage, it is legitimate to ask: \emph{how does this scale to larger movements}? Figure \ref{fig:large_movement} illustrates $10$ cable shapes generated by large movements (angle variation: $[-\frac{\pi}{2}, \frac{\pi}{2}]$, maximum translation: $106 \%$). Again, we apply PCA ($M = 10$); Table \ref{tab:Explained_var_large} shows the $\Upsilon(k)$ resulting from various values of $k$. 

\begin{table}[thpb]
	\vspace{-0.2cm}
    \centering
    \caption{Explained variance $\Upsilon(k)$ computed with large motion.}
    \begin{tabular}{c|c|c|c|c|c|c}
        \hline
          k & $0$ & $1$  & $2$  & $3$  & $4$ & $5$ \\
          \hline
        $\Upsilon(k)$ & 0 & 0.5444 & 0.7218 & 0.8927 & 0.9919 & 0.9990 \\
        \hline
    \end{tabular}
    \label{tab:Explained_var_large}
\end{table}

\begin{figure}[!thpb]
    \centering
    \includegraphics[width=0.3\textwidth]{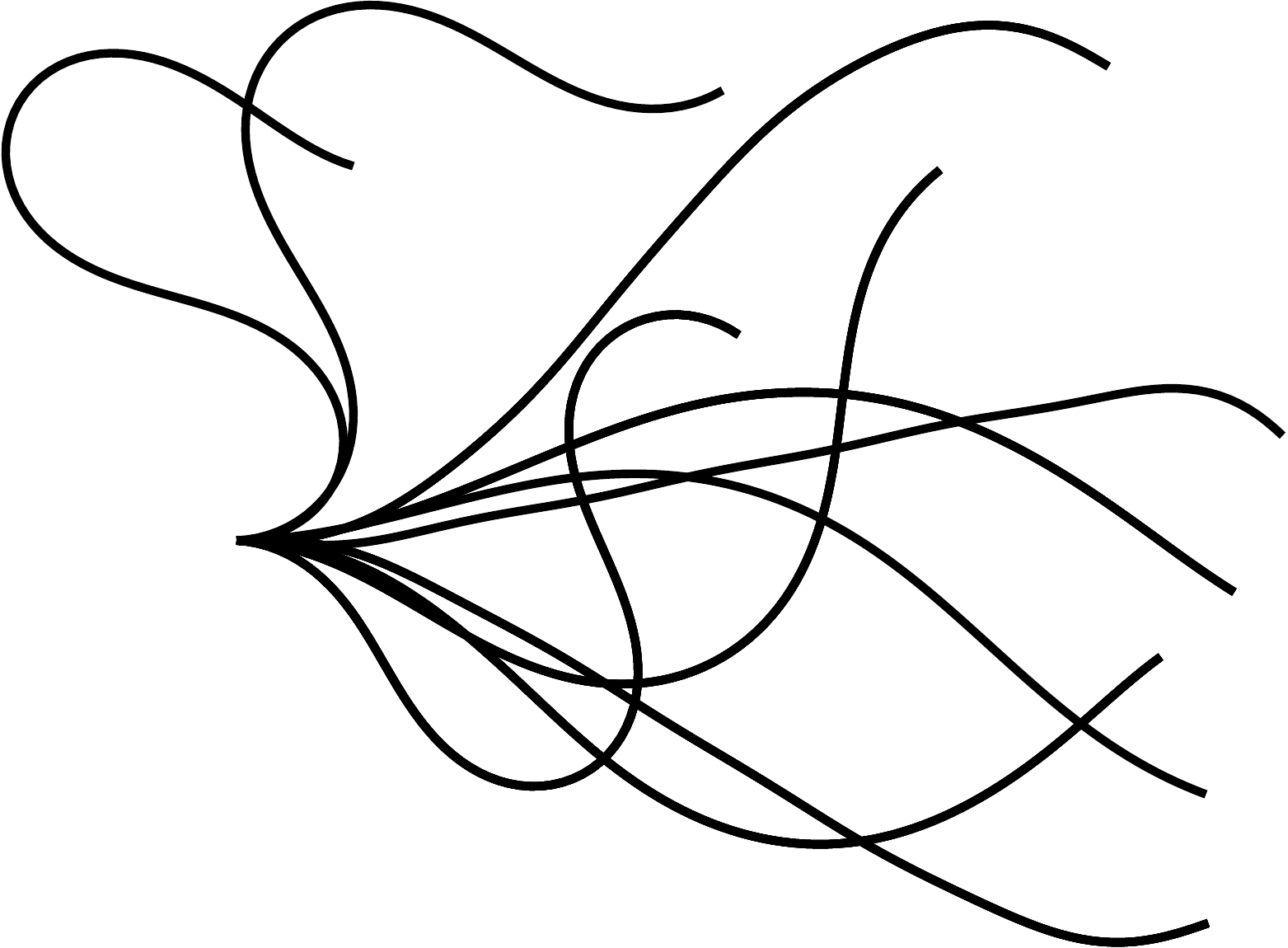}
    \caption{Ten distinctive cable shapes generated by large motion: angle variation: $[-\frac{\pi}{2}, \frac{\pi}{2}]$, maximum translation: $106 \%$ of the cable length.}
    \label{fig:large_movement}
    \vspace{-0.2cm}
\end{figure}

Comparing Tables~\ref{tab:Explained_var} and~\ref{tab:Explained_var_large}, it is noteworthy that $\Upsilon(4)$ with large motion is smaller than $\Upsilon(2)$ with small motion. There are two possible explanation here. One is that when shapes stays local, the local linear mapping $\vL$ in (\ref{eq:linear_sys_A}) remains constant and we need less features to characterize it; the more the shape varies, the more features we need. Another possible explanation is that for larger motions, $M = 10$ shapes may be insufficient for PCA. Likely, the larger the changes, the larger the number of shapes $M$ needed.

\subsection{Manipulation of deformable objects}

With our cable simulator, we can now test the controller to modify the shape from an initial to a target one. Again, the left tip of the cable is fixed, and we control the right tip with $n=3$ degrees of freedom (two translations and one rotation). Using the methods described in Sect. \ref{sec:method}, we choose window size $M = 5$, the Tikhonov factor $\lambda = 0.01$, the local target threshold $\epsilon = 0.8$, the control gain $\alpha = 0.01$. To quantify the effectiveness of our algorithms in driving the contour to $\vc^*$, we define a scalar measure: the Average Sample Error (ASE). At iteration $i$, with current contour $\vc_i$ it is:
\begin{equation}\label{eq:ASE}
    \text{ASE} = \frac{\|\vc_i - \vc^*\|_2}{2K}.
\end{equation}
A small \text{ASE} indicates that the current contour is near the target one. In Sect.~\ref{sec:crtl}, we have proved that our controller asymptotically stabilizes the feature vector, $\vs$ to $\vs^*$. Hence, since we have also shown that $\vs$ is a ``very good'' representation'' of $\vc$, we also expect our controller to drive $\vc$ to $\vc^*$, thus \text{ASE} to $0$. This measure is also used in the real experiments.
 
Using the cable simulator, we compare the convergence of two control laws proposed in our paper (\ref{eq:one-step-ctrl}) and (\ref{eq:one-step-ctrl-inv}) against a baseline algorithm in \citep{zhu2018dual} which uses Fourier parameters as feature. To make methods compatible, we choose first order Fourier approximation. Note that this results in a feature vector of dimension of $6$ (see \citep{zhu2018dual}) which is still twice the number $k$ used in our method. We also normalize the computed control action and then multiply by the same gain factor $0.01$. 

We also introduced artificial noise to the contour data, to test the robustness of our method. For a unit length cable, we add Gaussian noise of zero mean and $0.01$ standard deviation to the contour sample points. Fig. \ref{fig:cable_sim3} shows that our algorithm converges in these conditions as well. It is worth mentioning that in robotic experiments, as shape data is obtained and sampled from the images, signal noise is inevitable. Yet, our framework is robust enough to still converge to the target shape/pose (see Sect. \ref{sec:exp} for detailed real robot experiments).

Figure \ref{fig:sim_singlearm_DLO} shows two other simulation results: on the left, a reachable target and on the right an unreachable one. In Fig. \ref{fig:cable_sim1}, the cable shape successfully evolves towards the target thanks to our controller~(\ref{eq:one-step-ctrl-inv}). Figure \ref{fig:cable_sim2} shows the starting shape (blue), unreachable target (dash black) and final shape (solid black) obtained using~(\ref{eq:one-step-ctrl-inv}). Note that the controller gets stuck in a local minimum.

Figure \ref{fig:sim_singlearm_DLO_ASE} compares the evolution of \text{ASE} with our methods against the Fourier-based method for the reachable target; in the same figure, we also plot the evolution of \text{ASE} using~(\ref{eq:one-step-ctrl-inv}) for the unreachable target. We can observe that our method provides faster convergence using half the features than \citep{zhu2018dual}. Also, directly computing the inverse (\ref{eq:one-step-ctrl-inv}) provides faster convergence than (\ref{eq:one-step-ctrl}). It is noteworthy to point out that the Fourier-based method requires a different parameterization for closed and open contours (see \citep{navarro2017fourier} and \citep{zhu2018dual}), whereas in our framework, the parameterization can be kept the same. Last but not least, our approach is the only one among the three, which has been validated on both rigid and deformable objects.

\begin{figure}[!thpb]
	\vspace{-0.2cm}
	\centering
	\subfloat[]{\includegraphics[width = 0.3\textwidth]{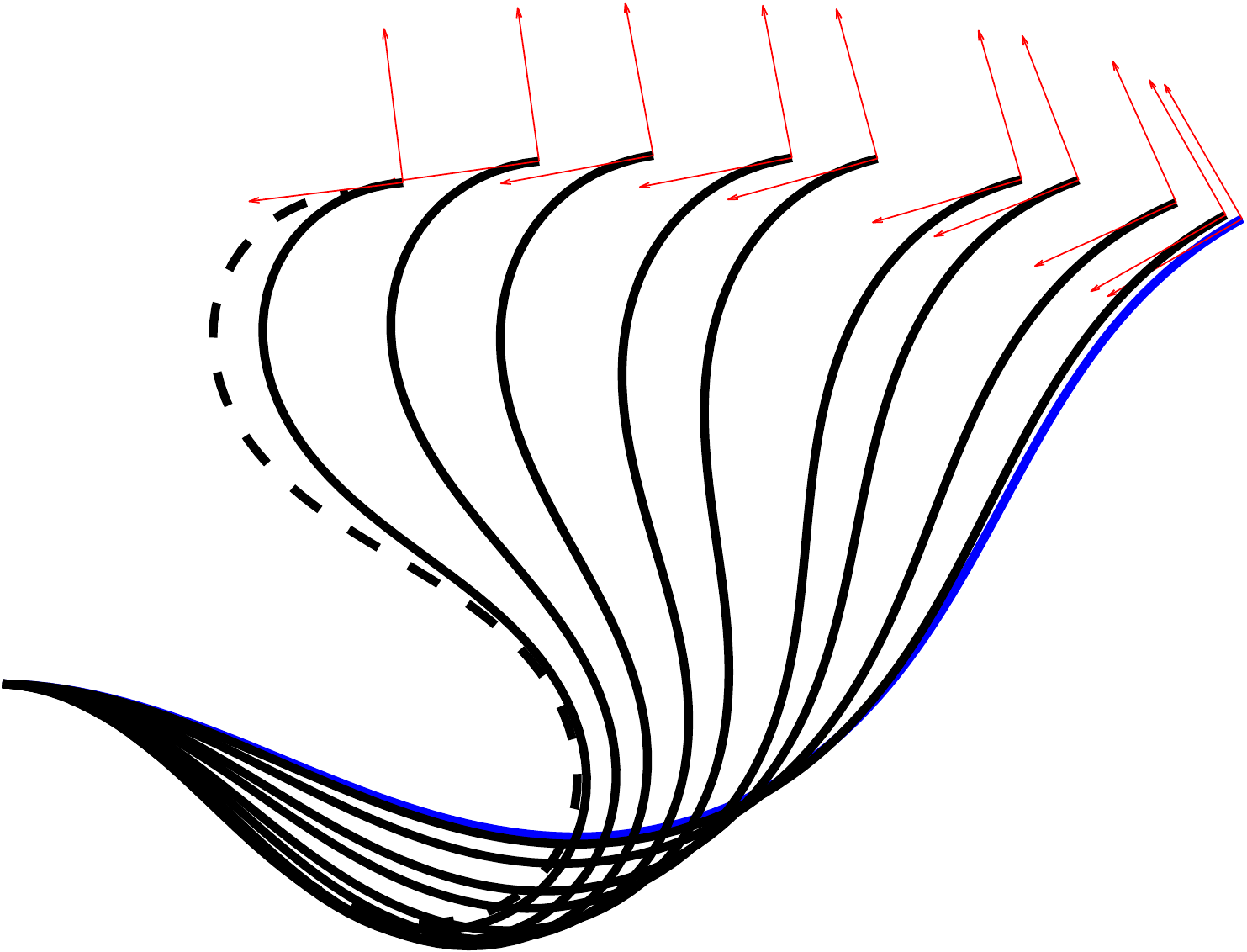}\label{fig:cable_sim1}}
    \subfloat[]{\includegraphics[width = 0.3\textwidth]{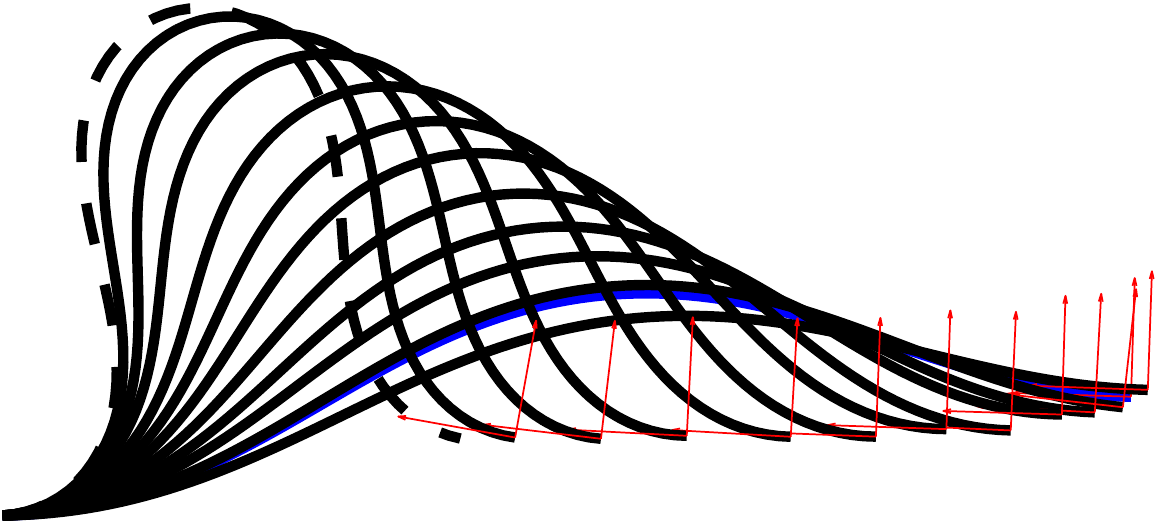}\label{fig:cable_sim3}}
	\subfloat[]{\includegraphics[width = 0.3\textwidth]{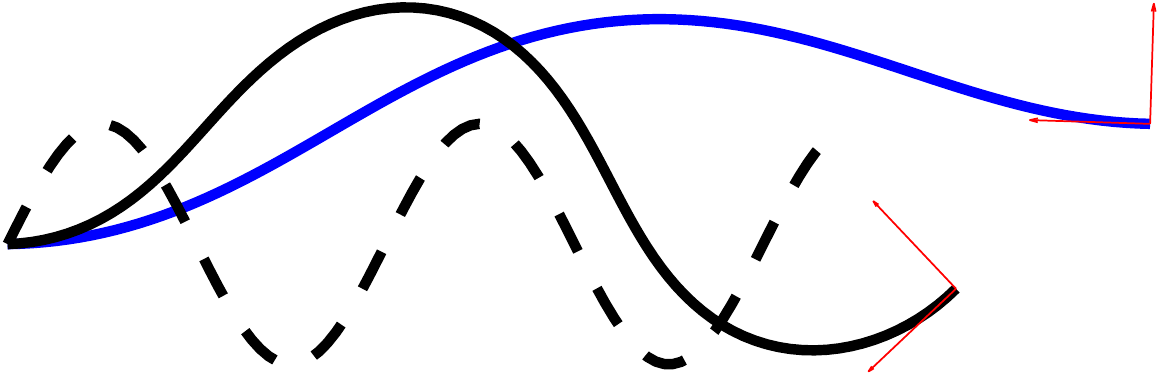}\label{fig:cable_sim2}}
	\vspace{-0.3cm}
	\caption{Cable manipulation with a single end-effector, moving the right tip. (a): a reachable target, the blue and black lines are the initial and intermediate shapes, respectively, and the dashed black line is the target shape. The red frame indicates the end-effector position and orientation generated by our controller. (b): Adding Gaussian noise with zero mean and $0.01$ standard deviation to the shape data with a reachable target. (c): An unreachable target and the final shape obtained with our controller.}
	\label{fig:sim_singlearm_DLO}
\end{figure}

\begin{figure}[!thpb]
	\centering
% 	\subfloat{\includegraphics[width = 0.5\textwidth]{}\label{fig:cable_sim_1}} 
	\subfloat{\includegraphics[width = 0.8\textwidth]{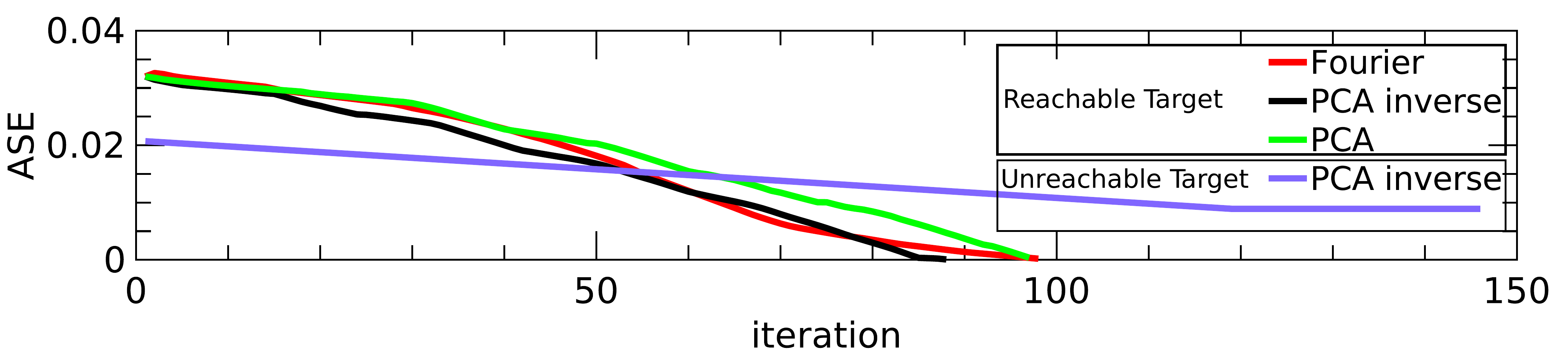}\label{fig:cable_sim_1}}
	\vspace{-0.3cm}
	\caption{The evolution of the ASE of the simulated cable manipulation using our method against the Fourier-based method as baseline and the ASE of the unreachable target with (\ref{eq:one-step-ctrl-inv}).}
	\label{fig:sim_singlearm_DLO_ASE}
\end{figure}

\subsection{Comparison with the Broyden update law}

The Broyden update law \citep{broyden1965class}, has been used to update the interaction matrix in classic visual servoing \citep{hosoda1994versatile,jagersand1997experimental,chaumette2007visual} and shape servoing~\citep{navarro2013model}. 

In this section, we compare it with our method for updating the interaction matrix (\ref{eq:intMatEst}), which relies on a receding horizon. We will hereby show why the Broyden update law is not applicable in our framework.

The Broyden update is an iterative method for estimating $\vL_i$ at iteration $i$. Its standard discrete-time formulation is:
\begin{equation}
    \hat{\vL}_{i} = \hat{\vL}_{i - 1} + \beta \frac{\delta \vs_{i - 1} - \hat{\vL}_i \delta \vr_{i - 1} }{\delta \vr_{i - 1}^T \delta \vr_{i - 1}} \delta \vr_{i - 1}^T,~\forall \vr_{i - 1} \neq \vnull
\end{equation}
with $\beta \in \left[ 0 ; 1 \right]$ an adjustable gain. %and
%\begin{align}
%    \begin{split}
%        \delta \vs_{i} & = \vs_{i} - \vs_{i - 1}, \\
%        \delta \vr_{i} & = \vr_{i} - \vr_{i - 1}.
%    \end{split}
%\end{align}
Using our simulator, we estimate the interaction matrix using both Broyden update (with three different values of $\beta$) and our receding horizon method (\ref{eq:intMatEst}). We then compare (with $\hat{\vL}$ estimated with either method) the one-step prediction of the resulting feature vector:
\begin{equation}
    \hat{\vs}_{i + 1} = \hat{\vL}_i \vr_i + \vs_i,
\end{equation} 
with the ground truth ${\vs}_{i + 1}$ from the simulator. The results (plotted in Fig. \ref{fig:RH_Brod}) show that receding horizon outperforms all three Broyden trials. One possible reason is that the components of $\vs$ fluctuate since (at each iteration) a new matrix $\vU$ is used. These variations cause the Broyden method to accumulate the result from old interaction matrices, and therefore perform badly on a long term. This result contrasts with that of~\citep{navarro2013model}, where the Broyden method performs well since there is a fixed mapping from contour data to feature vector. Another advantage of the receding horizon approach is that it does not require any gain tuning.

\begin{figure}[!thpb]
	\centering
	\subfloat[Receding horizon $s_1$]{\includegraphics[width = 0.45\textwidth]{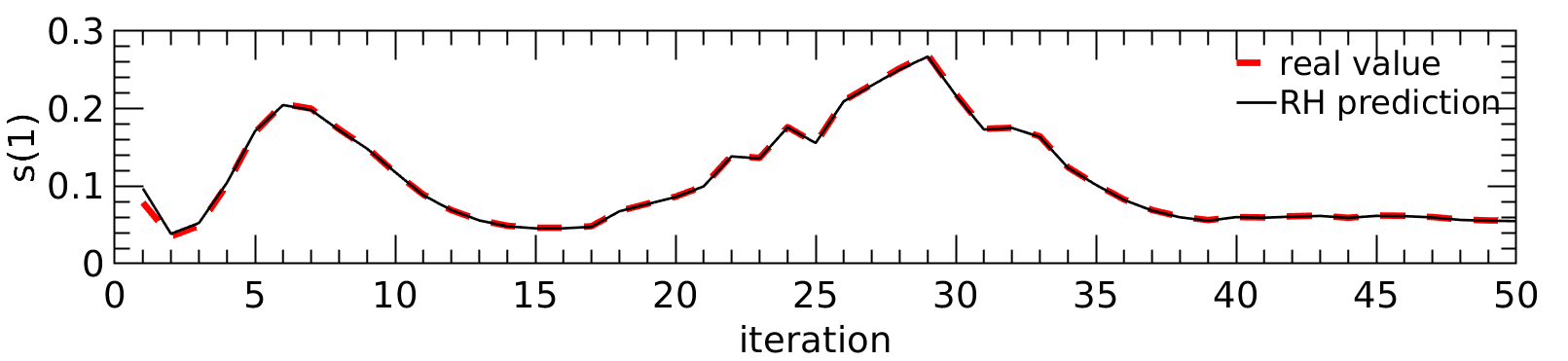}\label{fig:RH_1}}   \hspace{5mm}
    \subfloat[Broyden update $s_1$]{\includegraphics[width = 0.45\textwidth]{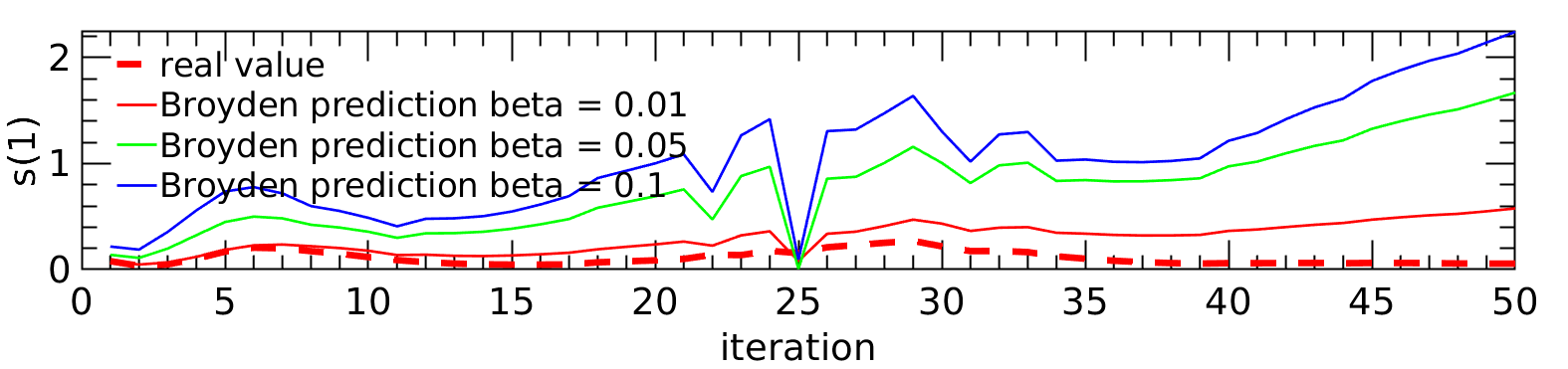}\label{fig:Brod_1}} \\ 
    \subfloat[Receding horizon $s_2$]{\includegraphics[width = 0.45\textwidth]{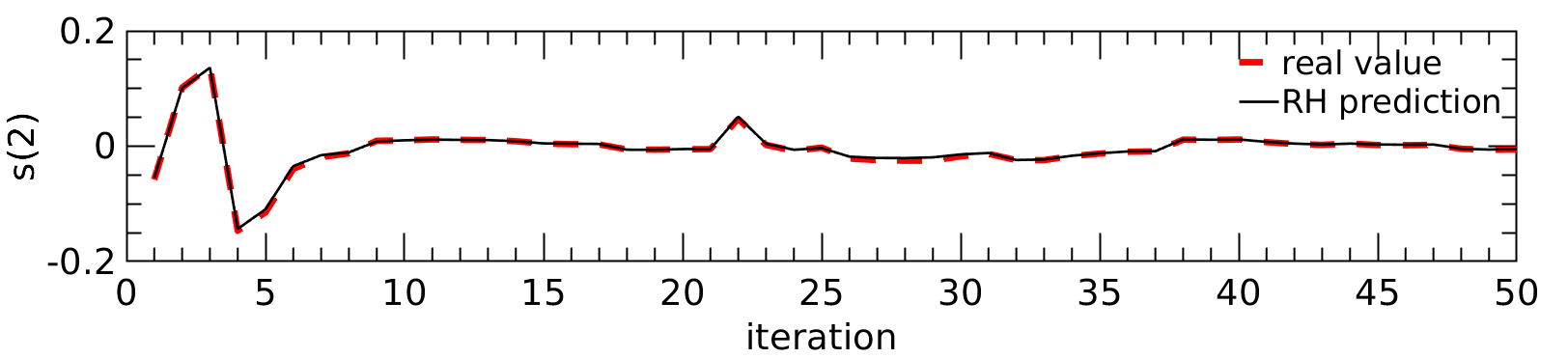}\label{fig:RH_1}}   \hspace{5mm}
    \subfloat[Broyden update $s_2$]{\includegraphics[width = 0.45\textwidth]{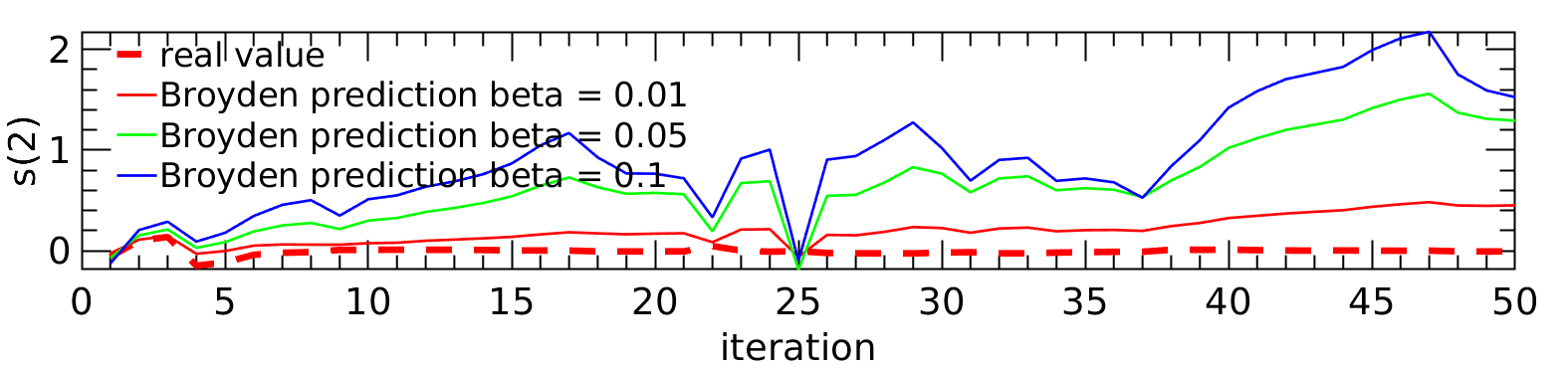}\label{fig:Brod_1}} \\ 
    \subfloat[Receding horizon $s_3$]{\includegraphics[width = 0.45\textwidth]{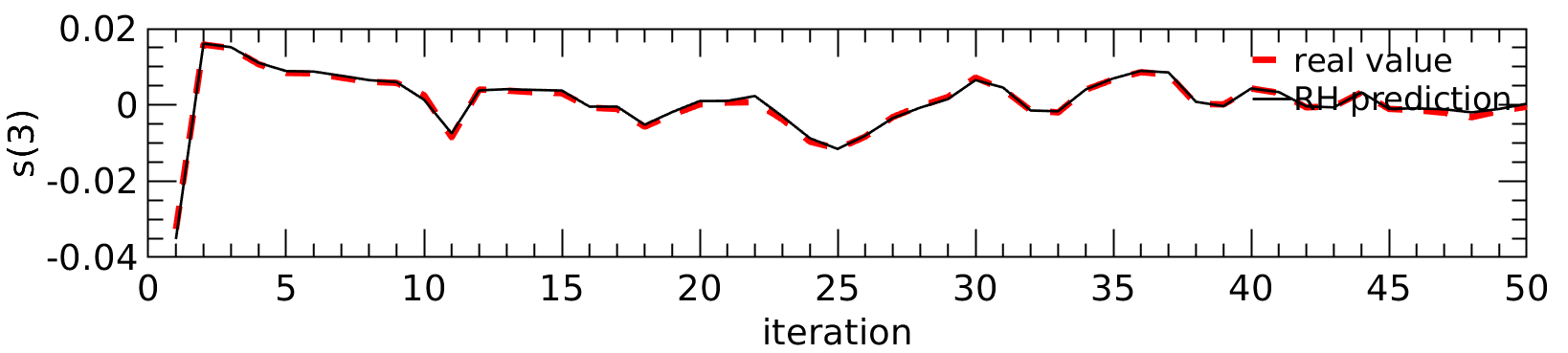}\label{fig:RH_1}}   \hspace{5mm}
    \subfloat[Broyden update $s_3$]{\includegraphics[width = 0.45\textwidth]{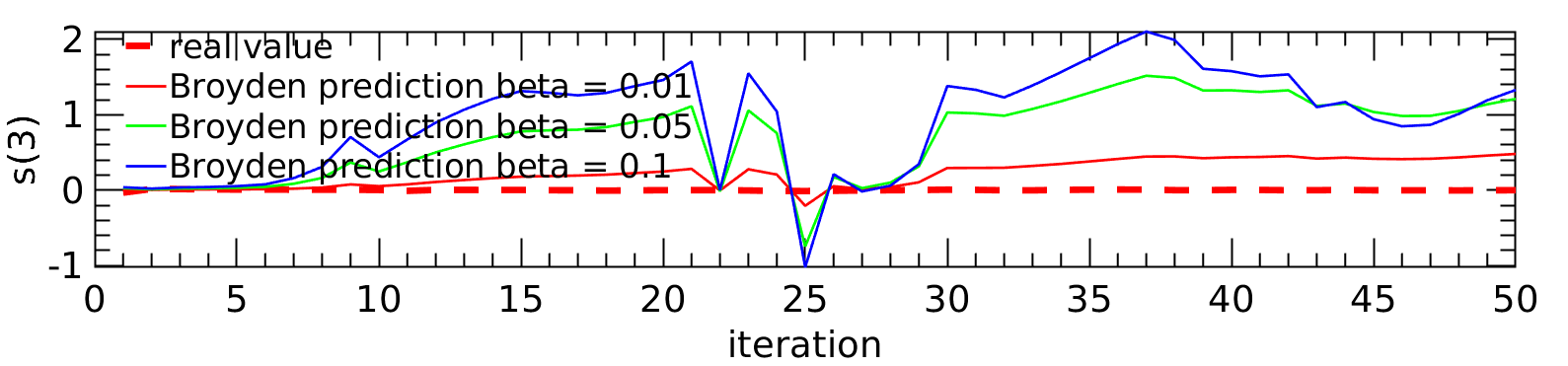}\label{fig:Brod_1}} 
    \caption{Comparison -- for estimating ${\vs}$ -- of the receding horizon approach (RH, left) and of the Broyden update (right, with three values of $\beta$). The topmost, middle and bottom plots show the one step prediction of $s_1$, $s_2$ and $s_3$, respectively. In all plots, the dashed red curve is the ground truth from the simulator. The plots clearly show that the receding horizon approach outperforms all three Broyden trials.} \vspace{-5mm}
    \label{fig:RH_Brod}
\end{figure}

\subsection{Manipulation of rigid objects}

The same framework can also be applied to rigid object manipulation. Consider the problem of moving a rigid object to a certain position and orientation via visual feedback. This time, the shape of the object does not change, but its pose will (it can translate and rotate). We use the same $M$, $\lambda$, $\epsilon$ and $\alpha$ as for cable manipulation. We compare the convergence of two control laws proposed in our paper (\ref{eq:one-step-ctrl}) and (\ref{eq:one-step-ctrl-inv}) against a baseline using image moments \citep{chaumette2004image}. The translation and orientation can be represented with image moments and the analytic interaction matrix can be computed as explained in~\citep{chaumette2004image}). To make methods compatible, we normalize the computed control and then multiply it by the same factor $0.01$. 

Figure \ref{fig:sim_singlearm_rigid} shows two simulations where our controller successfully moves a rigid object from an initial (blue) to a target (dashed black) pose using control law  (\ref{eq:one-step-ctrl-inv}). Figure \ref{fig:sim_singlearm_rigid_ASE} compares convergence of our methods against the image moments method. We can observe that our method provides a slightly slower convergence. Directly computing the inverse (\ref{eq:one-step-ctrl-inv}) provides a convergence similar to (\ref{eq:one-step-ctrl}). Later, we will show why our method is slower. Yet, the fact that it can be applied on both deformable and rigid objects makes it stand out over the other techniques.

\begin{figure}[t]
	\centering
	\subfloat{\includegraphics[width = 0.34\textwidth]{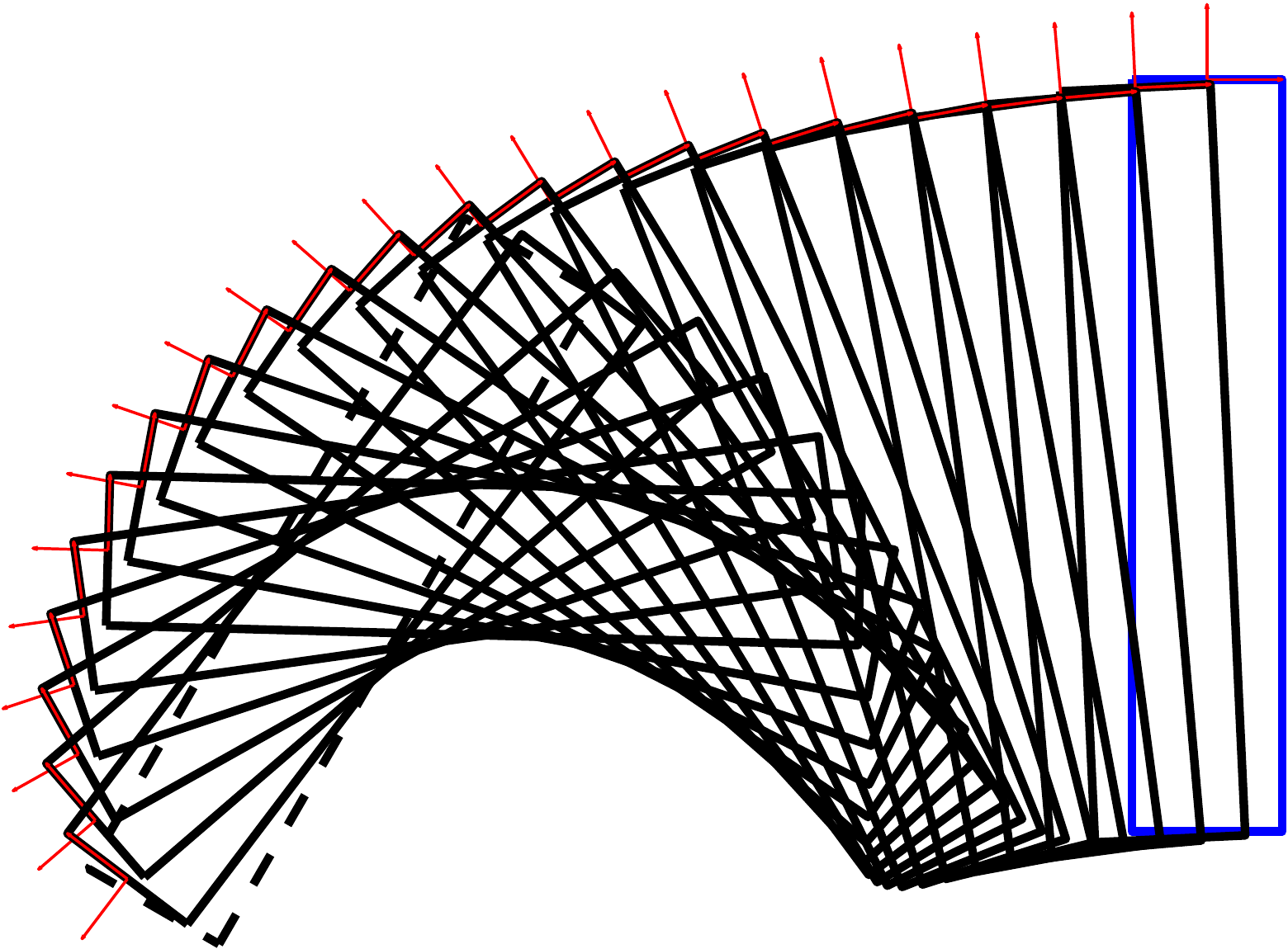}\label{fig:sim1}}   \hspace{1mm}
    \subfloat{\includegraphics[angle=90, width = 0.34\textwidth]{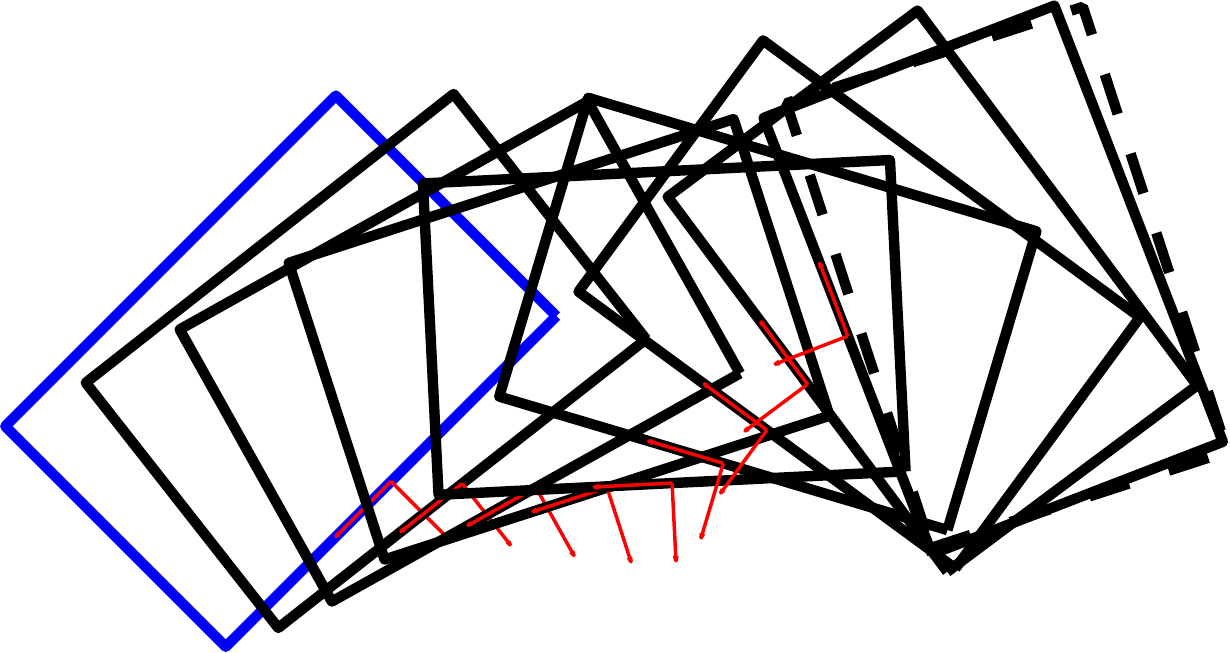}\label{fig:sim2}}
    % \subfloat{\includegraphics[width = 0.15\textwidth]{}\label{fig:sim3}}
    \vspace{-0.3cm}
    \caption{Manipulation of a rigid object with a single end-effector (red frame). The initial, intermediate and target contours are respectively blue, solid black and dashed black. Note that in both cases, our controller moves the object to the target pose.}
    \vspace{-0.4cm}
    \label{fig:sim_singlearm_rigid}
\end{figure}

\begin{figure}[!thpb]
	\vspace{-0.1cm}
    \centering
    \includegraphics[width = 0.8\textwidth]{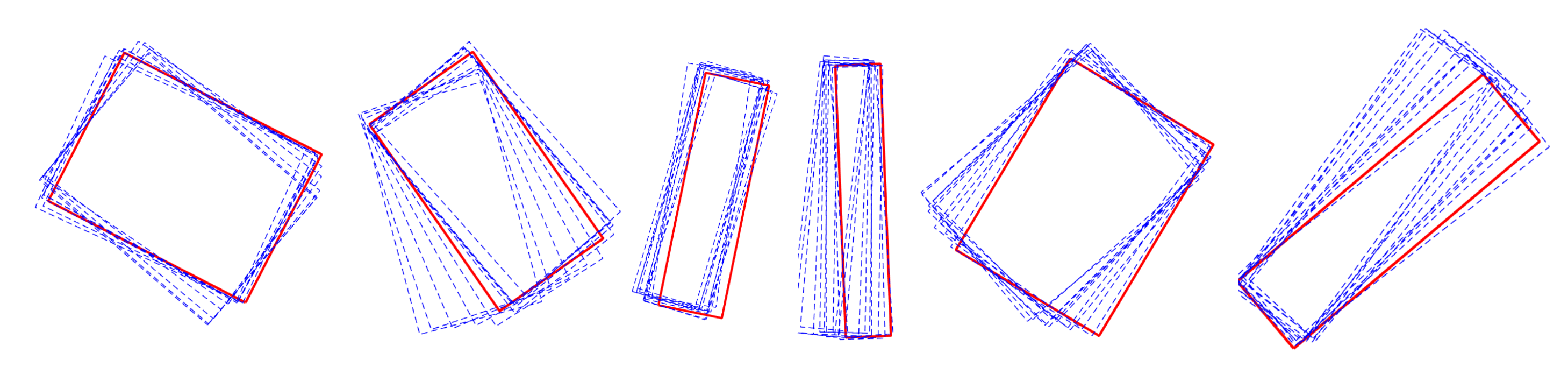}
    \vspace{-0.3cm}
    \caption{From an initial (red) pose, we generate $10$ (dashed blue) random motions of a rigid object. This figure shows multiple examples of different rectangular rigid objects.}
    \vspace{-0.3cm}
    \label{fig:rigid_analysis_random_move}
\end{figure}

\begin{figure}[b!]
	\centering
	\subfloat{\includegraphics[width = 0.5\textwidth]{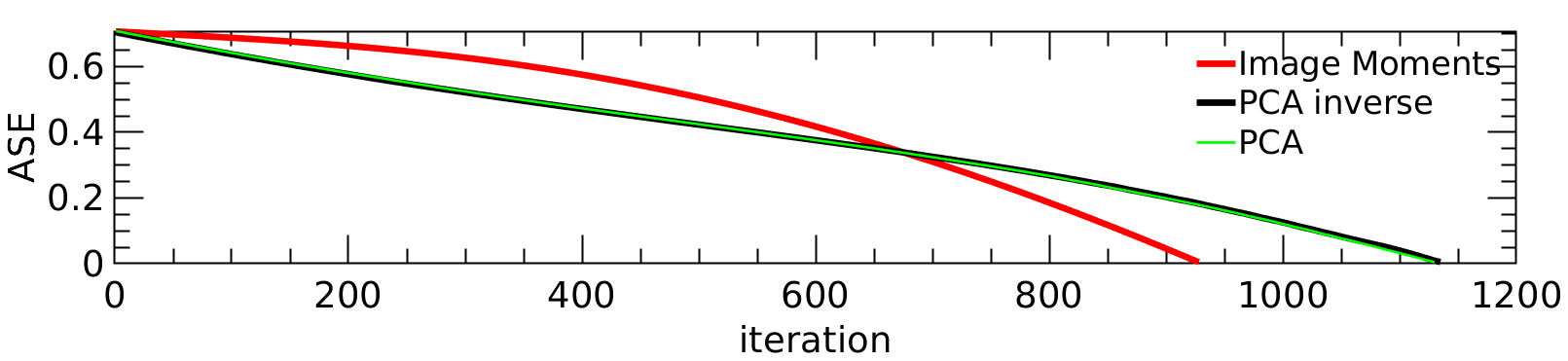}\label{fig:rigid_sim_1}} 
    \subfloat{\includegraphics[width = 0.5\textwidth]{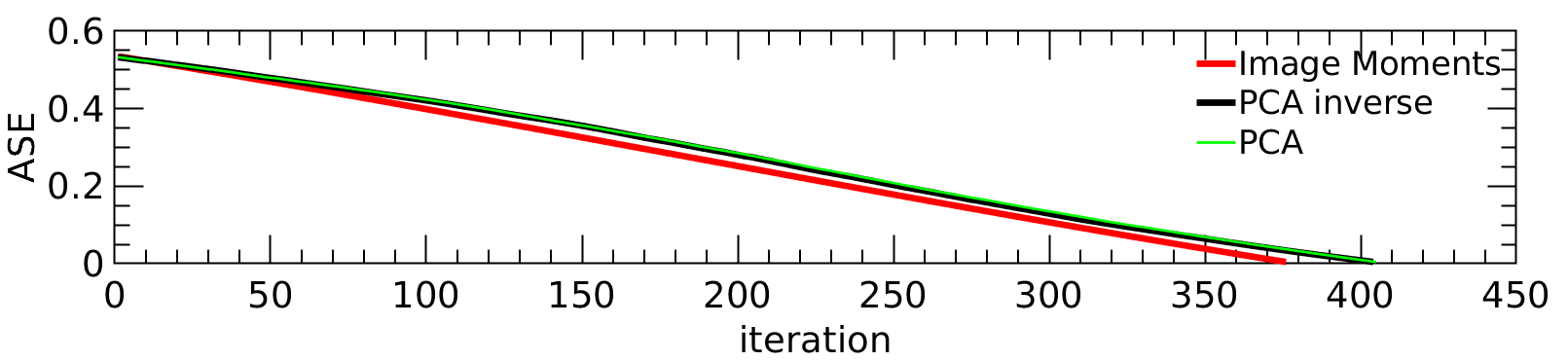}\label{fig:rigid_sim_2}}
    \vspace{-0.2cm}
    \caption{Evolution of ASE of the simulated rigid object manipulation using our method against image moments. Left: simulation in Fig. \ref{fig:sim_singlearm_rigid},  Right: simulation in Fig. \ref{fig:sim_singlearm_rigid}.}
    \label{fig:sim_singlearm_rigid_ASE}
\end{figure}

\subsection{Feature analysis for rigid objects}
In this section, we analyze locally what each component of the feature vector represents, in the case of rigid object manipulation. To this end, we apply $M = 10$ random movements (rotation range $[-0.11 , 0.09]$, maximum translation $15\%$ of the width) to multiple rigid rectangular objects (see Fig. \ref{fig:rigid_analysis_random_move}). %and store the corresponding object pose:
%\begin{equation}
%    \mathcal{P} = [\vx, \vy, \vtheta] \in  \mathbb{R}^{10 \times 3}
%\end{equation}
%The rows of $\mathcal{P}$: $\vx \in \mathbb{R}^{10}$, $\vy \in \mathbb{R}^{10}$ and $\vtheta \in \mathbb{R}^{10}$ represent the evolution of object's translation and orientation. 
We compute the projection matrix as explained in Sect.~\ref{sec:feat_vector}, and transform the contour samples to feature vectors. Then, we seek the relationship -- at each iteration -- between the object pose $x$, $y$, $\theta$ and the components of the feature vector $\vs$ generated by PCA. To this end, we use bivariate correlation \citep{feller2008introduction} defined by:
\begin{equation}
    \rho = \frac{E[(\xi-\bar{\xi})(\zeta-\bar{\zeta})]}{\sigma_\xi \sigma_\zeta},
\end{equation}
where $\xi$ and $\zeta$ are two variables with expected values $\bar{\xi}$ and $\bar{\zeta}$ and standard deviations $\sigma_\xi$ and $\sigma_\zeta$. An absolute correlation $|\rho|$ close to $1$ indicates that the variables are highly correlated. All the simulations in Fig. \ref{fig:rigid_analysis_random_move} exhibit similar correlation between the computed feature vector and the object pose. In Table \ref{tab:correlation}, we show one instance (Left first simulation in Fig. \ref{fig:rigid_analysis_random_move}) of the correlation between variables, with high absolute correlations marked in red. It is clear from the table that each component in the feature vector relates strongly to one pose parameter. We further demonstrate the correlation in Fig. \ref{fig:rigid_analysis}, where we plot the evolution of object poses and feature components. Note that $s_2$ and $\theta$ are negatively correlated. The slower convergence could be a result of the fact that, in contrast with image moments, here the extracted features and object pose are not completely decoupled. Yet, the main contribution of our method is that it can be directly used for both rigid and deformable objects. Therefore, it can be expected to be slower than methods specifically designed for rigid objects (such as image moments).

\begin{table}[!thpb]
    \centering
    \caption{Correlation $\rho$ between $s_1$, $s_2$, $s_3$ and $x$, $y$, $\theta$.}
        \begin{tabular}{c|c|c|c}
        \hline 
          & $x$   &  $y$ & $\theta$ \\
         \hline
         $s_1$ & -0.2819 & -0.3343  & \cellcolor{red} 0.9887 \\
         \hline
         $s_2$ & 0.2607 & \cellcolor{red} -0.8547  & -0.0465 \\
         \hline
         $s_3$ & \cellcolor{red} 0.9230 & 0.3629  & -0.1426 \\
         \hline
    \end{tabular}
        % \vspace{-0.2cm}
        \label{tab:correlation}
\end{table}

\begin{figure}[!thpb]
	\vspace{-0.6cm}
	\centering
	\subfloat{\includegraphics[width = 0.45\textwidth]{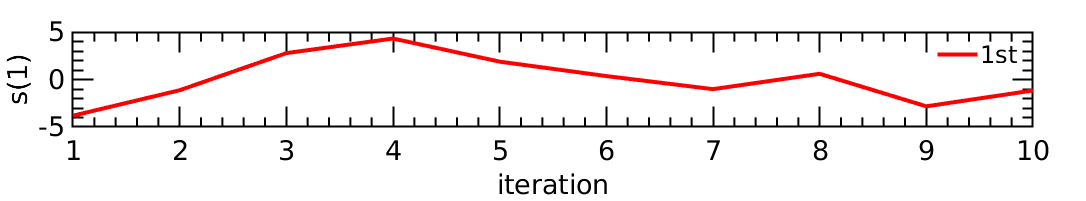}} 
	\vspace{-0.3cm}
    \subfloat{\includegraphics[width = 0.45\textwidth]{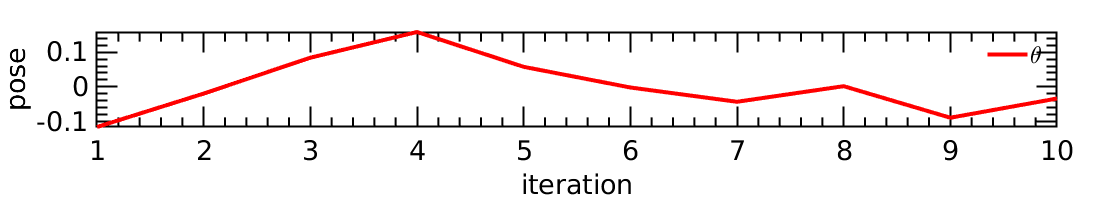}}
    \\
    \vspace{-0.3cm}
    \subfloat{\includegraphics[width = 0.45\textwidth]{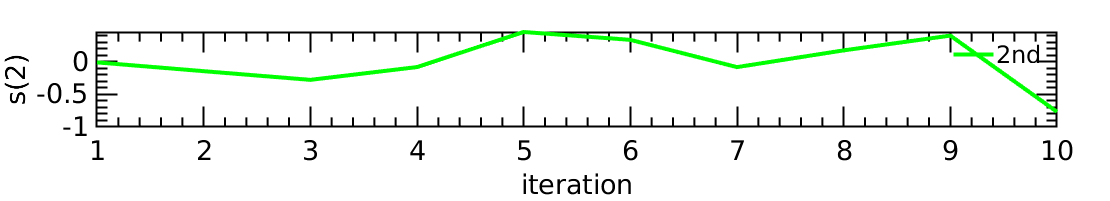}} 
    \vspace{-0.3cm}
    \subfloat{\includegraphics[width = 0.45\textwidth]{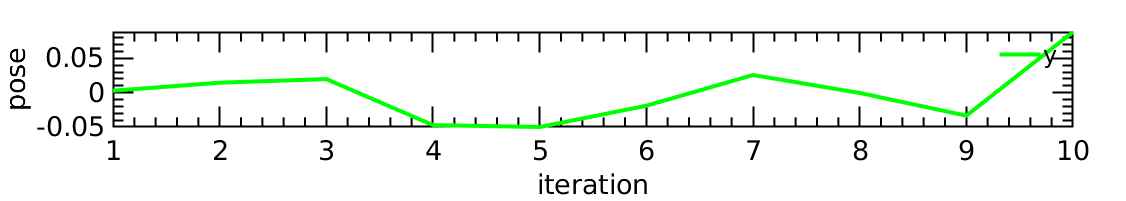}}
    \vspace{-0.3cm}
    \subfloat{\includegraphics[width = 0.45\textwidth]{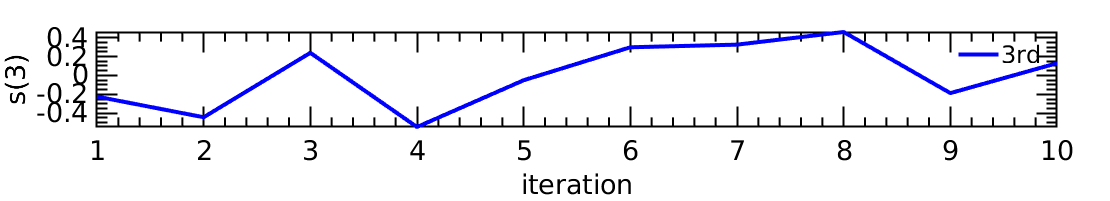}} 
    \vspace{-0.3cm}
    \subfloat{\includegraphics[width = 0.45\textwidth]{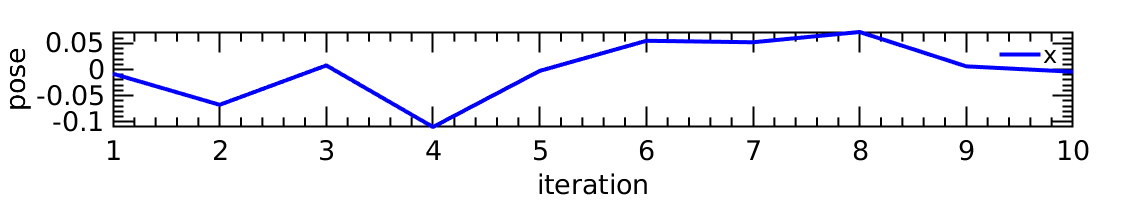}}
    \vspace{-0.1cm}
    \caption{Progression of the auto-generated feature components (row 1, 3, 5: $s_1$, $s_2$, $s_3$) vs. object pose (row 2, 4, 6: $x$, $y$, $\theta$). We have purposely arranged the variables with high correlation with the same color.}
    \label{fig:rigid_analysis}
\end{figure}

\section{Experiments}\label{sec:exp}

Figure~\ref{fig:TRO_setup} outlines our experimental setup. We use a KUKA LWR IV arm. We constrain it to planar ($n = 3$) motions $\delta \vr$, defined in its base frame (red in the figure): two translations $\delta x$ and $\delta y$ and one counterclockwise rotation $\delta \theta$ around $z$. A Microsoft Kinect V2 observes the object\footnote{We only use the RGB image -- not the depth -- from the sensor.}. A Linux-based $64$-bit PC processes the image at $30$ fps. In the following sections, we first introduce the image processing for contour extraction, then present the experiments.

\begin{figure}[thpb]
    \centering
    \includegraphics[width = 0.6\textwidth]{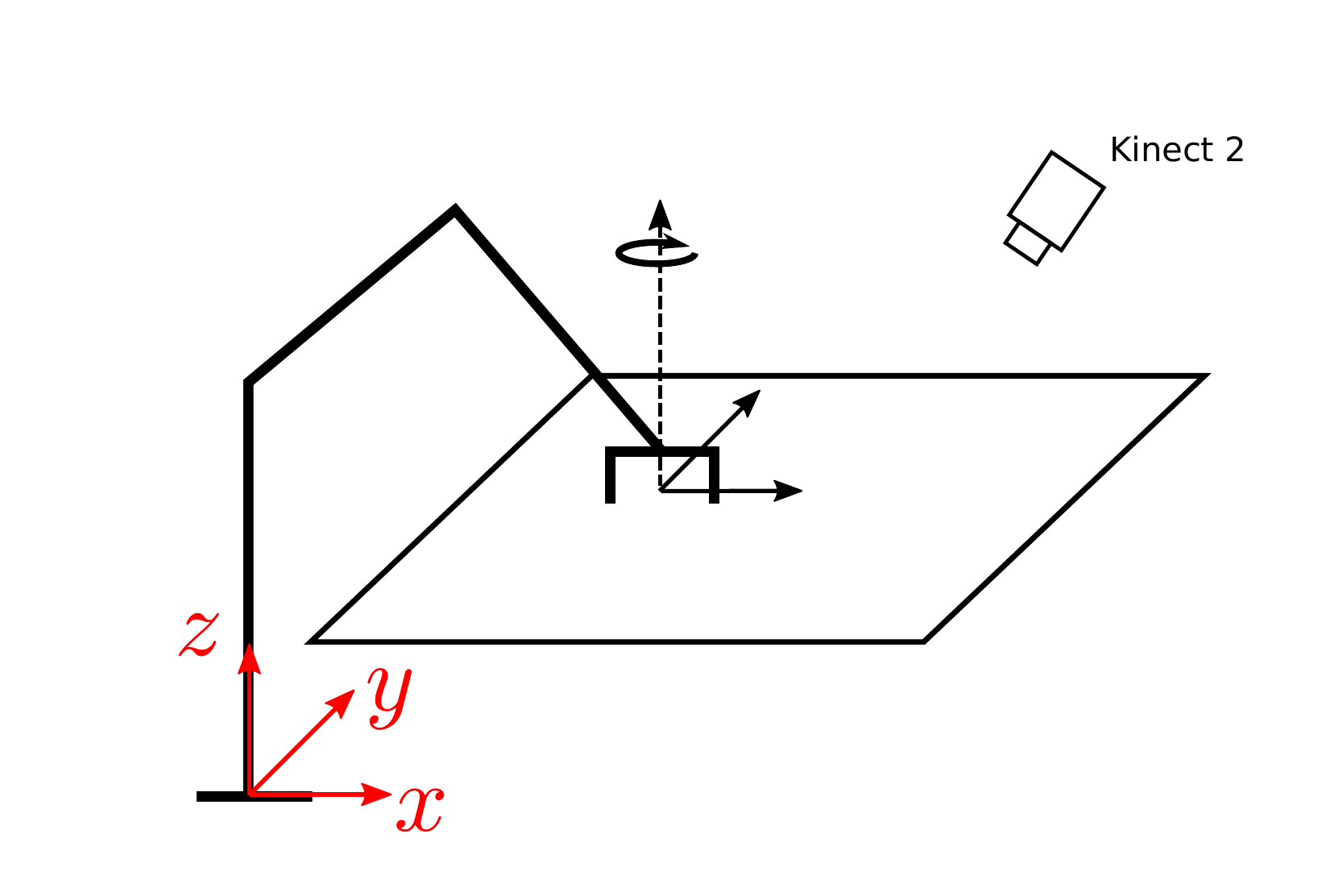}
    \vspace{-0.6cm}
    \caption{Overview of the experimental setup.}
    \label{fig:TRO_setup}
\end{figure}
% \subsection{Hardware Synchronization}
% We synchronize the vision and robot control with the camera frequency as shown in Fig. \ref{fig:hardware_integration_practice}:
% \begin{figure}[thpb]
%     \centering
%     \includegraphics[width = 0.4\textwidth]{img/hardware_integration_practice.pdf}
%     \caption{Practical synchronization of vision and robot control.}
%     \label{fig:hardware_integration_practice}
% \end{figure}

% The robot executes the motion until the point when the camera is acquiring a new image, and the corresponding real motion with the new image is fed into our framework for computing the next step inputs. This way, the robot is able to run continuously.

\subsection{Image processing}\label{sec:5-vision-pipeline}

This section explains how we extract and sample the object contours from an image. We have developed two pipelines, according to the kind of contours (See Fig. \ref{fig:contourInImage}): open (e.g., representing a cable) and closed. We hereby describe the two.

\begin{figure}[thpb]
    \centering
    {\includegraphics[width = 0.4\textwidth]{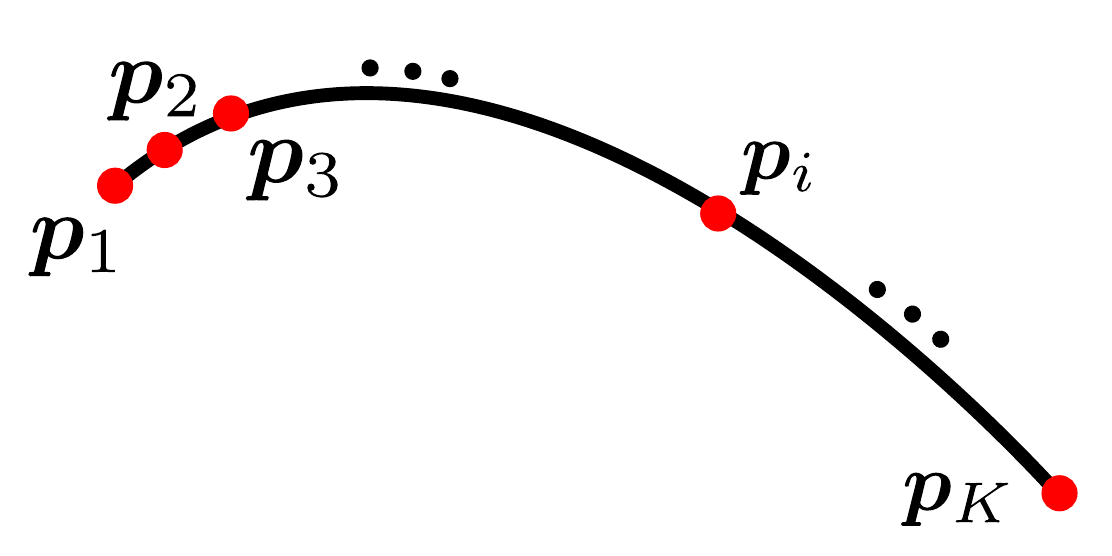}}
    {\includegraphics[width = 0.4\textwidth]{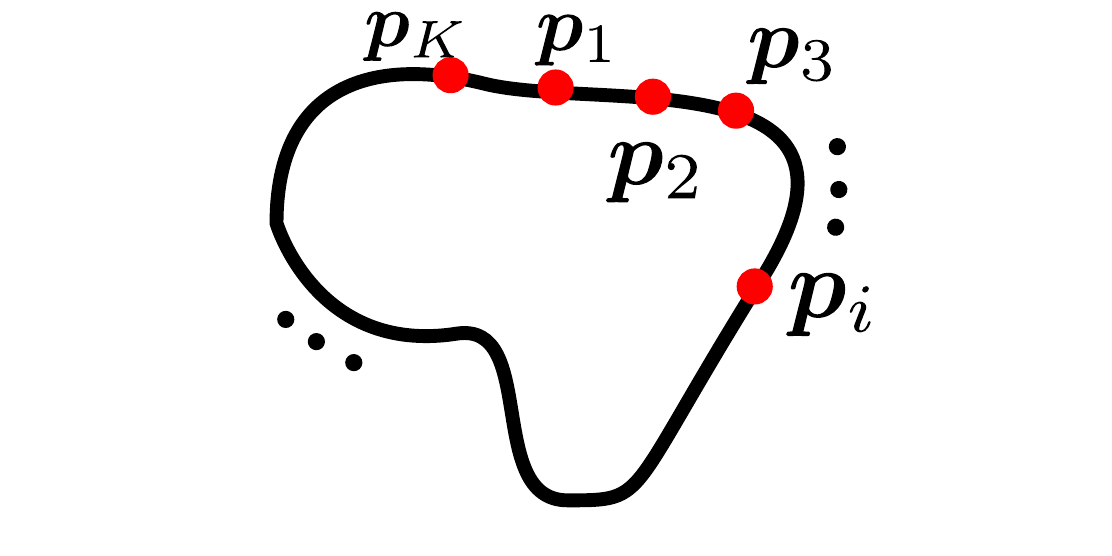}}
    \vspace{-0.4cm}
	\caption{Open (left) and closed (right) contours can be both represented by a sequence of sample pixels in the image.}
	\vspace{-0.7cm}
	\label{fig:contourInImage}
\end{figure}

\subsubsection{Open contours}

The overall pipeline for extracting an open contour is illustrated in Fig.~\ref{fig:image_processing_open} and \textbf{Algorithm~\ref{alg:resample}}. On the initial image, the user manually selects a Region of Interest (ROI, see Fig. \ref{fig:original_cable_22}) containing the object. In this ROI, we apply thresholding, followed by a morphological opening, to obtain a binary image as in Fig. \ref{fig:binary_cable_22}. This image is dilated to generate a mask (Fig. \ref{fig:mask_cable_22}) used to compute the new ROI for the following image. Figure \ref{fig:original_cable_23} is the object after a small manipulation motion and \ref{fig:mask_on_next} shows the mask (in grey color) which contains the cable. The OpenCV \textit{findContour} function is applied to binary image, then two contours are extracted based on the two known ends of the cable,  both are re-sampled (with same value of $K$) using \textbf{Algorithm~\ref{alg:resample}} and finally merged (by interpolation, for each sample, between the two contours' corresponding point) into the uniformly sampled open contour $\vc$ (see Fig.~\ref{fig:debug_cable_22}, where the green box indicates the end-effector).

\begin{figure}[!thpb]
	\vspace{-0.7cm}
	\centering
	\subfloat[ROI]{\includegraphics[width = 0.3\textwidth]{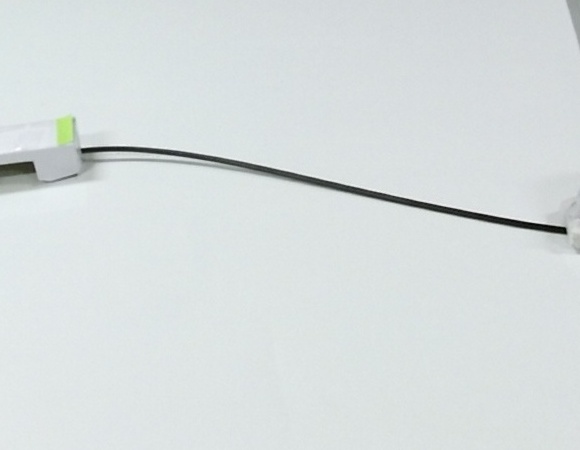}\label{fig:original_cable_22}}  \hspace{0.5mm}
    \subfloat[After thresholding]{\includegraphics[width = 0.3\textwidth]{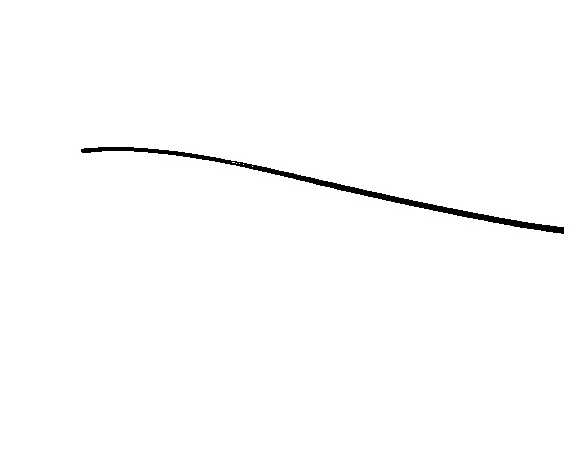}\label{fig:binary_cable_22}}
    \hspace{0.5mm}
    \subfloat[Mask]{\includegraphics[width = 0.3\textwidth]{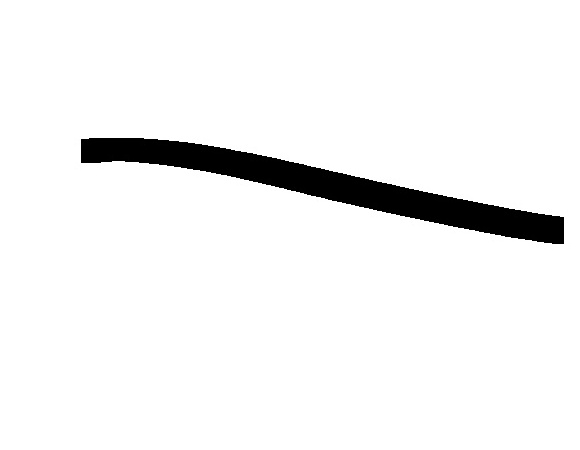}\label{fig:mask_cable_22}}\\
    \subfloat[Sampled data]{\includegraphics[width = 0.3\textwidth]{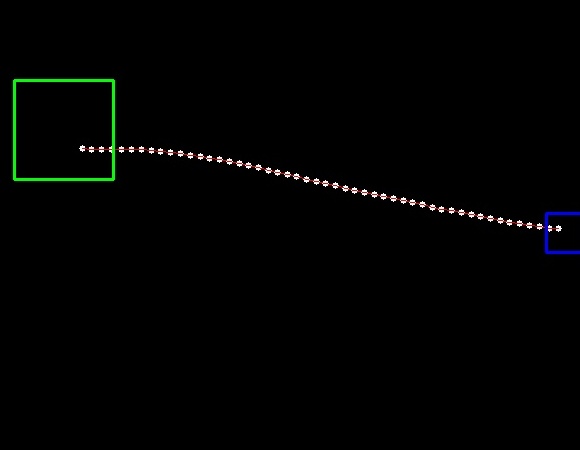}\label{fig:debug_cable_22}}
    \subfloat[Next image]{\includegraphics[width = 0.3\textwidth]{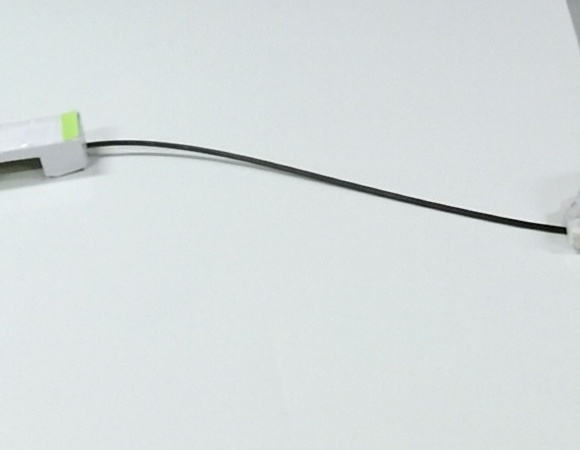}\label{fig:original_cable_23}} \hspace{0.5mm}
    \subfloat[Cable in the mask]{\includegraphics[width = 0.3\textwidth]{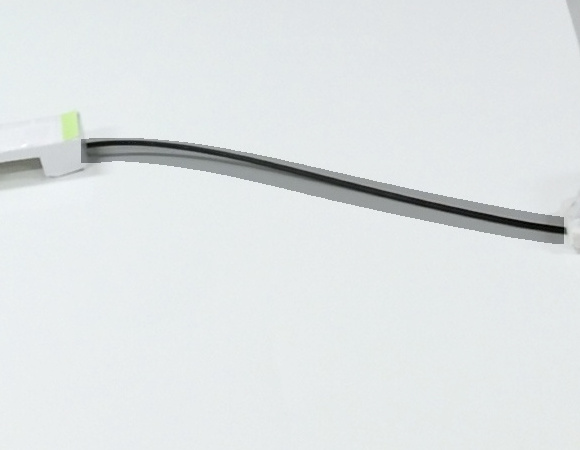}\label{fig:mask_on_next}}
    	\vspace{-0.1cm}
    \caption{Image processing steps needed to obtained the sampled open contour of an object (here, a cable). %A mask is generated from the current image and applied to the next image to detect the cable.
    	}
    	\vspace{-0.3cm}
    \label{fig:image_processing_open}
\end{figure}

\begin{figure}[b!]
		\vspace{-0.6cm}
		
	\centering
	\subfloat[]{\includegraphics[width = 0.22\textwidth]{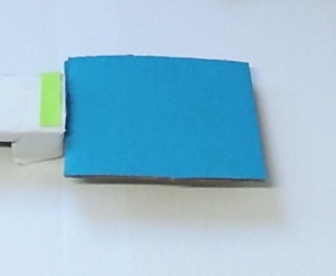}\label{fig:rigid_original}}  
	\hspace{0.2mm}
	\subfloat[]{\includegraphics[width = 0.22\textwidth]{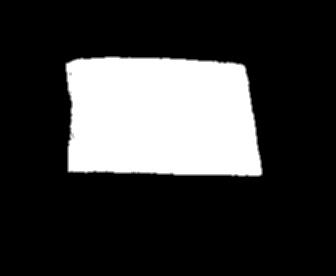}\label{fig:rigid_blob}}
	\hspace{0.2mm}
	\subfloat[]{\includegraphics[width = 0.22\textwidth]{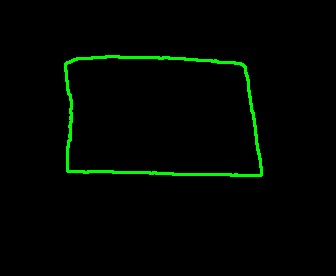}\label{fig:rigid_contour}}
	\hspace{0.2mm}
	\subfloat[]{\includegraphics[width = 0.22\textwidth]{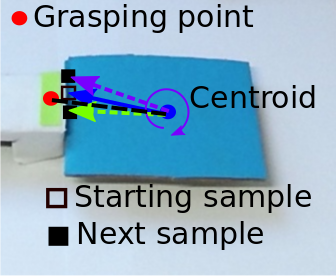}\label{fig:rigid_debug_image}}
	\vspace{-0.2cm}
	\caption{Image processing for getting a sampled closed contour: (a) original image, (b) image after thresholding and Gaussian blur, (c) extracted contour, (d) finding the starting sample and the order of the samples.}
	\label{fig:image_processing_closed}
\end{figure}

\subsubsection{Closed contours}
The procedure is shown in Fig. \ref{fig:image_processing_closed}. For an object with uniform color (in the experiment blue), we apply HSV segmentation, followed by Gaussian blur of size $3$, and finally the OpenCV \textit{findContour} function, to get the object contour. The contour is then re-sampled using \textbf{Algorithm \ref{alg:resample}}. The starting point and the order of the samples is determined by tracking the grasping point (red dot in Fig. \ref{fig:rigid_debug_image}) and the centroid of the object (blue dot). We obtain the vector connecting the grasping point to the centroid. Then, the starting sample is the one closest to this vector, and we proceed along the contour clockwise. Therefore, the extracted contour is ordered clockwise and always starting from the same point. This solves the contour ambiguity for symmetrical objects.

\begin{algorithm}[!thp]
	\caption{Generate fixed number of points with uniform spacing \newline
		\textit{Input} $\vP_I = [\vp_I(1),\cdots,\vp_I(N)]$: original ordered sampled data \newline
		\textit{Input} $\vK$ = target number of data samples \newline
		\textit{Input} $\vepsilon$ = infinitesimal threshold for equality of distances \newline
		\textit{Output} $\vP_O = [\vp_O(1),\cdots,\vp_O(K)]$: re-sampled data with uniform spacing.}
	\label{alg:resample}
	\begin{algorithmic}[1]
		\State compute the full length $\mathcal{L}$ of $\vP_I$. 
		\State compute desired distance per sample: $\mu = \mathcal{L}/K$
		\State $l = 1$, $dist = 0$ 
		\State $\vp_{curr} = \vp_O(l) = \vp_I(l)$ % \Comment{initialize the starting data and sampled points}
		\State $h = l = l + 1$ % \Comment{initialize next data and sampled point}
		\While{$l \leq N$}
		\State $\vp_{next} = \vp_I(l)$ % \Comment{next point from the input data}
		\State $d = ||\vp_{next} - \vp_{curr}||_2$ % \Comment{distance to the next point}
		\If{$d + dist \leq \mu$} % \Comment{cumulated distance less than $\mu$}
		\State $dist = dist + d$ % \Comment{update cumulated distance}
		\State $\vp_{curr} = \vp_{next}$
		\State $l = l + 1$
		\Else 
		\State $\vp_{curr} = \vp_O(h) = \vp_{curr} + (\vp_{next} - \vp_{curr})\frac{\mu - dist}{d}$ 
		\State $h = h + 1$ 
		\State $dist = 0$ 
		\EndIf
		\EndWhile
		\If{$|\mu - dist| < \epsilon $} % \avoid rounding errors}
	    \State $\vp_O(h) = \vp_{curr} $
		\EndIf
	\end{algorithmic}
\end{algorithm}
\subsection{Vision-based manipulation}

In this section, we present the experiments that we ran to validate our algorithms, also visible at \href{https://youtu.be/gYfO2ZxZ5KQ}{https://youtu.be/gYfO2ZxZ5KQ}. To demonstrate the generality of our framework, we tested it with:
\begin{itemize}
    \item Rigid objects represented by closed contours,
    \item Deformable objects represented by open contours (cables),
    \item Deformable objects represented by closed contours (sponges).
\end{itemize}
We carried out different experiments with a variety of initial and target contours and camera-to-object relative poses. %For rigid object and foam, we consider closed-contour data whereas in cable manipulation, we use open contour data. 
The variety of both geometric and physical properties demonstrates the robustness of our framework. The variety of camera-to-object relative poses shows that---as usual in image-based visual servoing~\citep{chaumette2006visual}---camera calibration is unnecessary. The algorithm and parameters are the same in all experiments; the only differences are in the image processing, depending on the type of contour (closed or open, see Sect.~\ref{sec:5-vision-pipeline}).

We obtain the target contours by commanding the robot with predefined motions. Once the target contour is acquired, the robot goes back to the initial position, and then should autonomously reproduce the target contour. %Given the desired shape, we control the robot is controlled with our framework to servo to the target. 
Again, we set the number of features $k = n =3$, and use $K = 50$ samples to represent the contour $\vc$. We set the window size $M = 5$, both for obtaining the feature vector $\vs$ and the interaction matrix $\vL$.  We choose the control gain to be $0.01$. A larger control gain may result in faster convergence, but could also lead to oscillation. The local target threshold $\epsilon$ is set to $0.8$. A higher threshold will result in closer local target and vice versa. The Tikhonov factor used to ensure numerical stability for matrix inversion, is set to $\lambda = 0.01$.   

At the beginning of each experiment, the robot executes $5$ steps of small\footnote{The notion of small is relative, and usually dependent on the size of the object the robot is manipulating. Refer to Sect. \ref{sec:select_feature_dimension} (especially Fig. \ref{fig:test_k}) for a discussion on this.} random motions to obtain the initial features and interaction matrix.

For all the experiments, we set the same termination condition at iteration $i + 1$ using ASE defined in (\ref{eq:ASE}) such that:
\begin{enumerate}
    \item $\text{ASE}_i < 1$ pixel and
    \item $\text{ASE}_{i+1} \geq \text{ASE}_i$.
\end{enumerate}

In the graphs that follow, we show the evolution of \text{ASE} in blue before the termination condition, and in red after the condition (until manual stop by the operator). 

Figure \ref{fig:robot_experiment} presents $8$ experiments, one per column. Columns 1 -- 3, 4 -- 6 and 7 -- 8 show respectively manipulation of: cable, rigid object and sponge. The first row presents the full RGB image obtained from Kinect V2. The second and third rows zoom in on the manipulation at the initial and final iterations. We track the end-effector in the image with a green marker for contour sampling. The target and current contours are drawn in red and blue, respectively. 

\begin{figure*}[!thpb] % 2020-01-04-ex3
	\centering
    % \fboxsep=0mm%padding thickness
    % \fboxrule=0.2mm%border thickness
	\subfloat{\includegraphics[width = 0.11\textwidth]{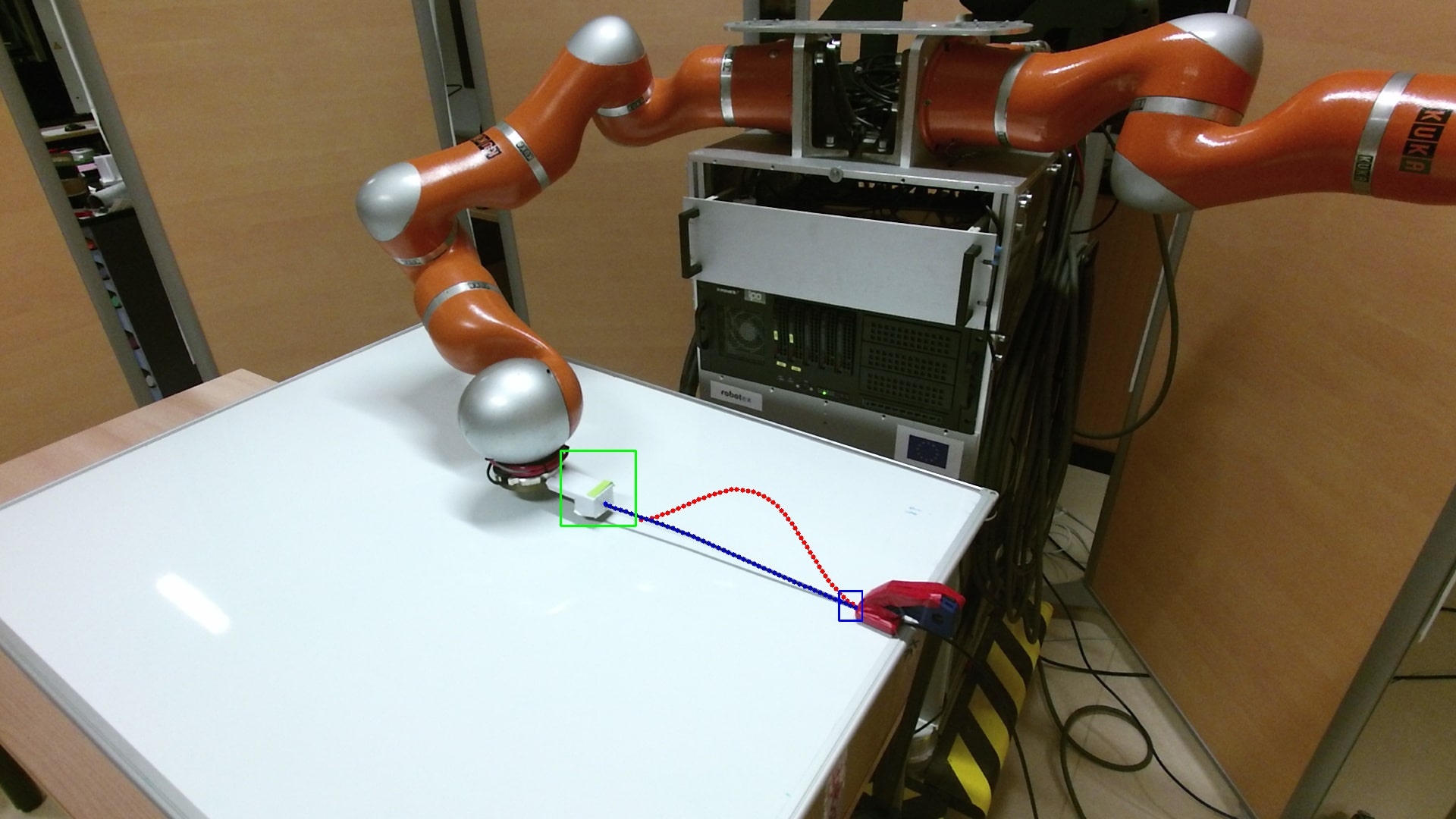}}   
	\hspace{1mm}
    \subfloat{\includegraphics[width = 0.11\textwidth]{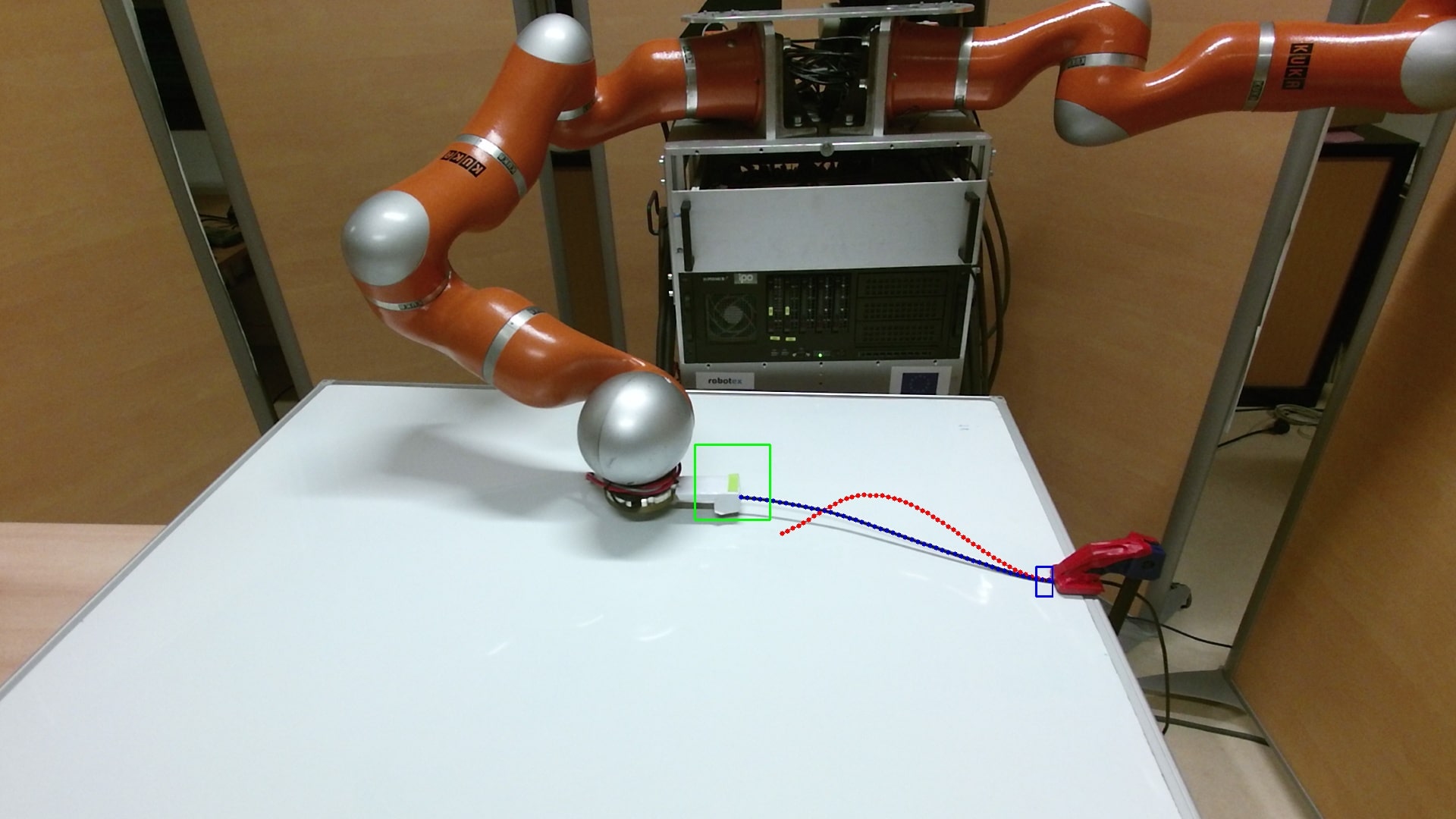}}   
	\hspace{1mm}
	\subfloat{\includegraphics[width = 0.11\textwidth]{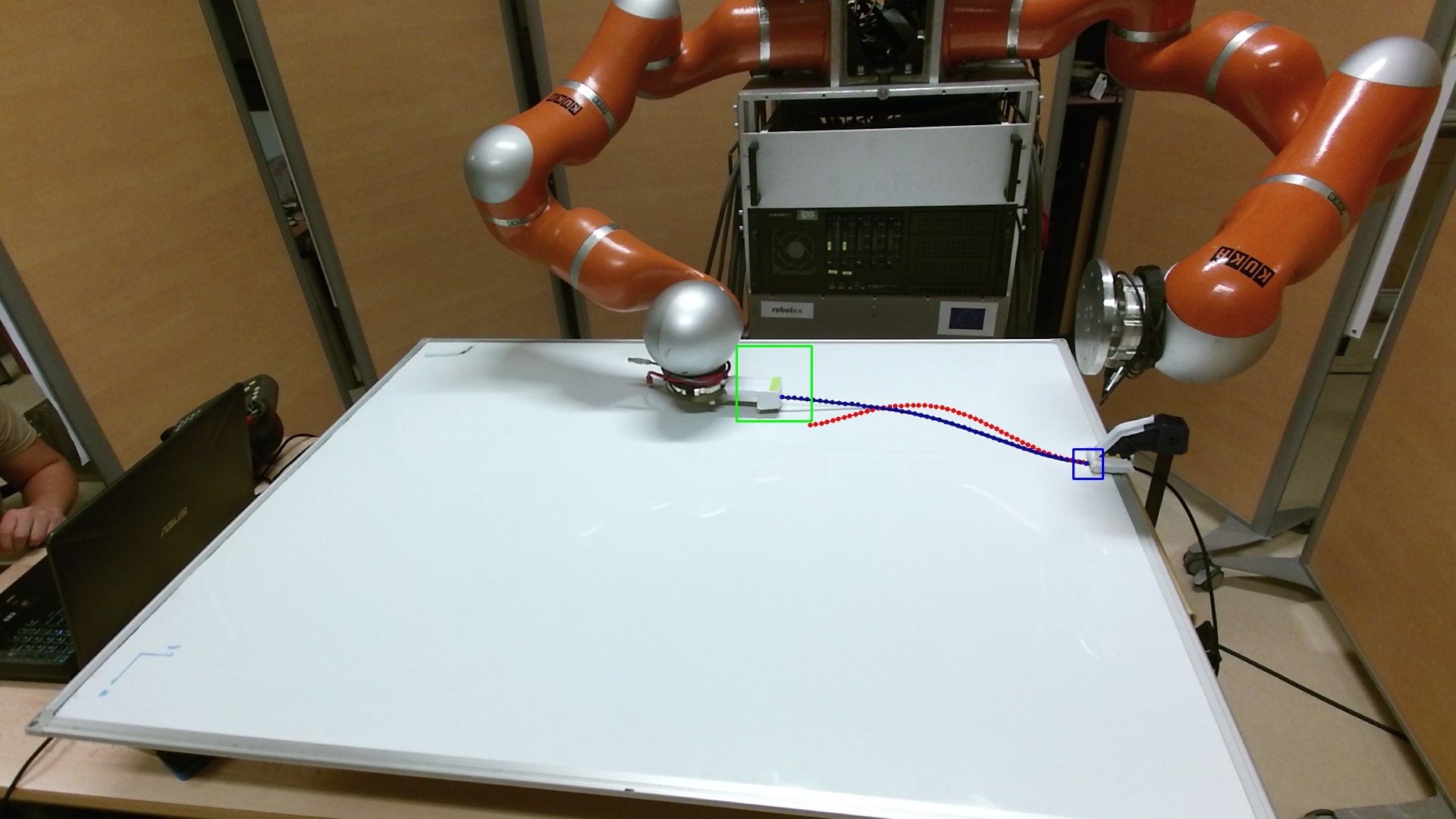}}   
	\hspace{1mm}
	\subfloat{\includegraphics[width = 0.11\textwidth]{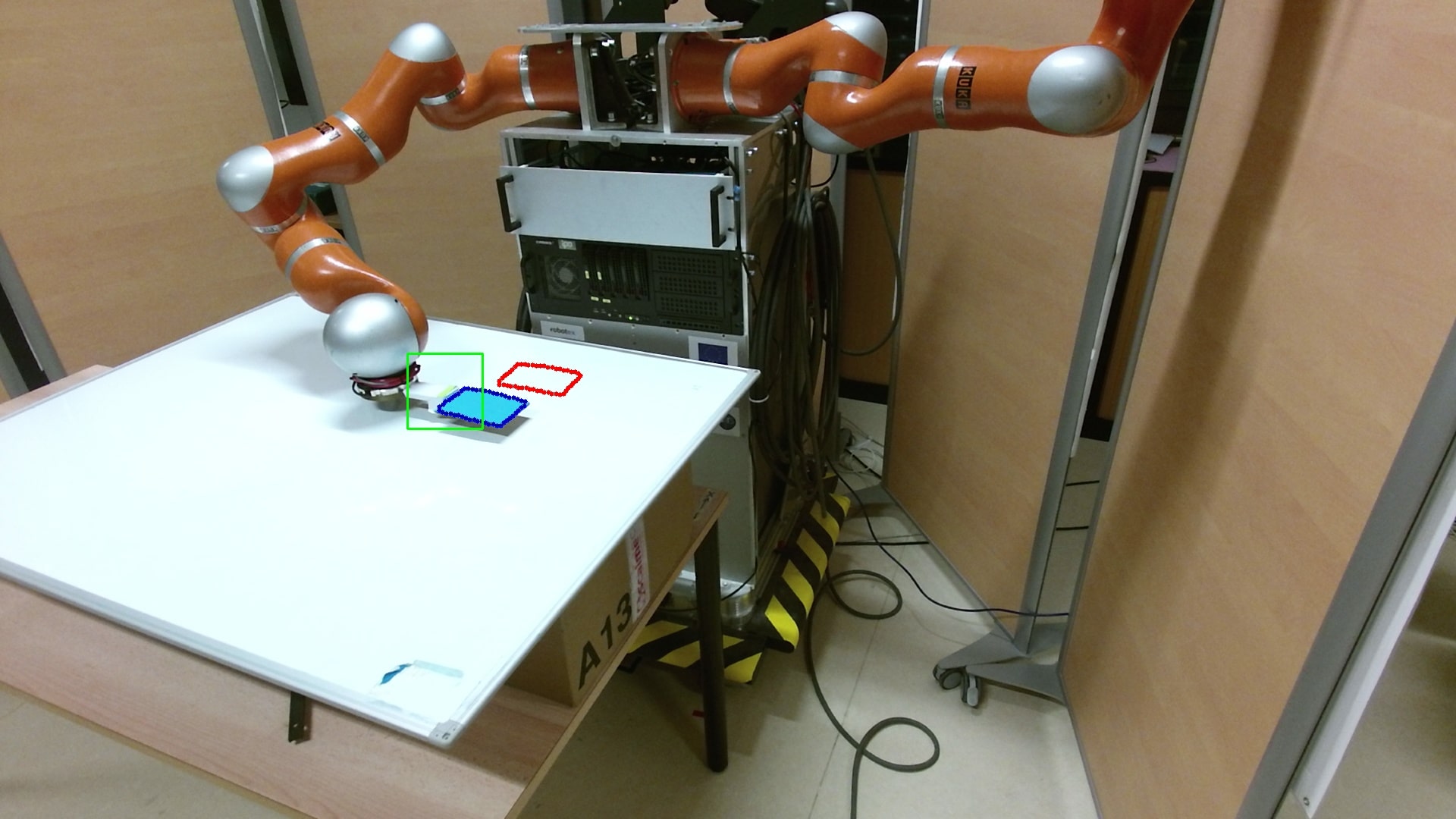}}   
	\hspace{1mm}
    \subfloat{\includegraphics[width = 0.11\textwidth]{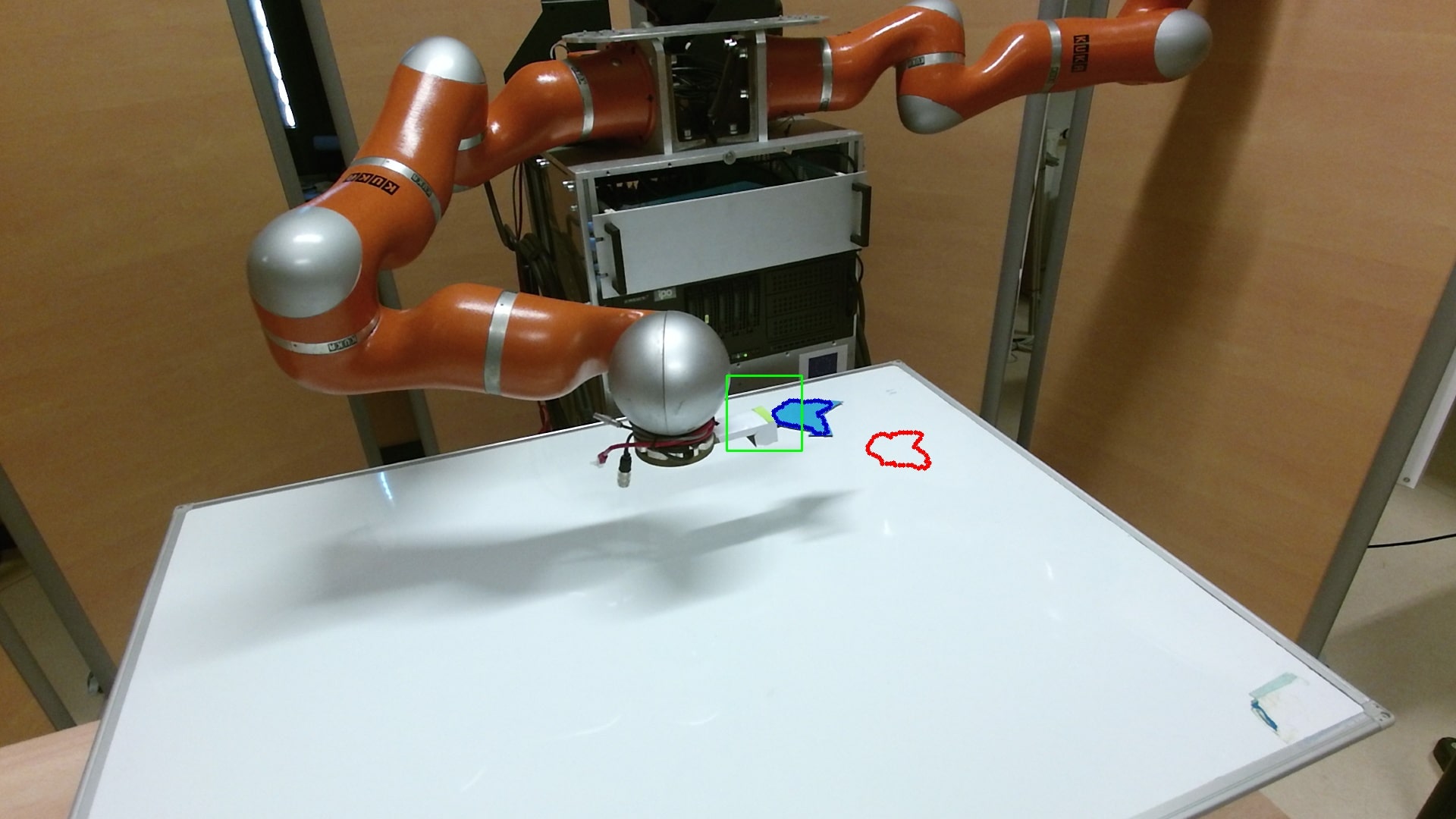}}   
	\hspace{1mm}
	\subfloat{\includegraphics[width = 0.11\textwidth]{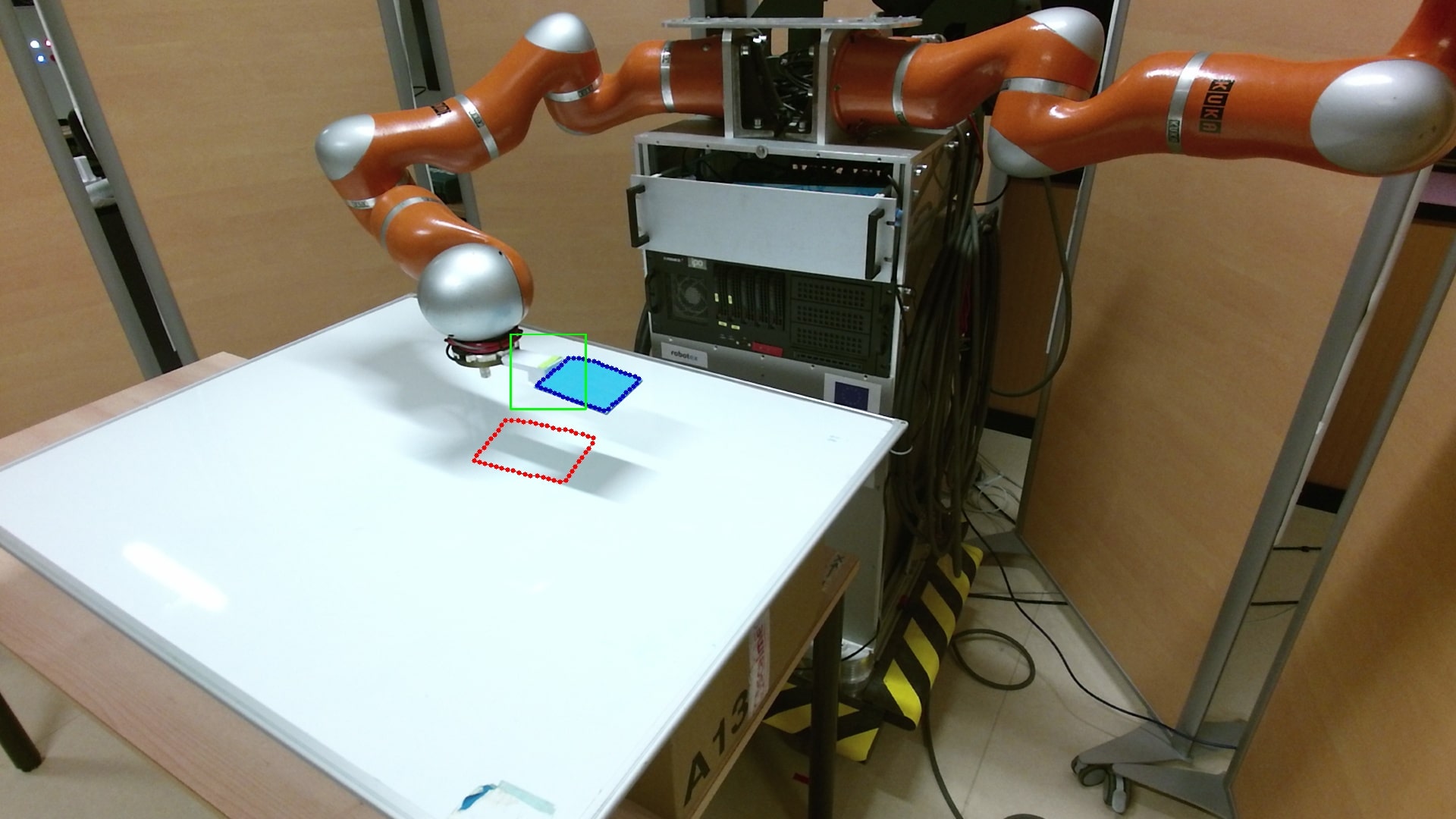}}
	\hspace{1mm}
	\subfloat{\includegraphics[width = 0.11\textwidth]{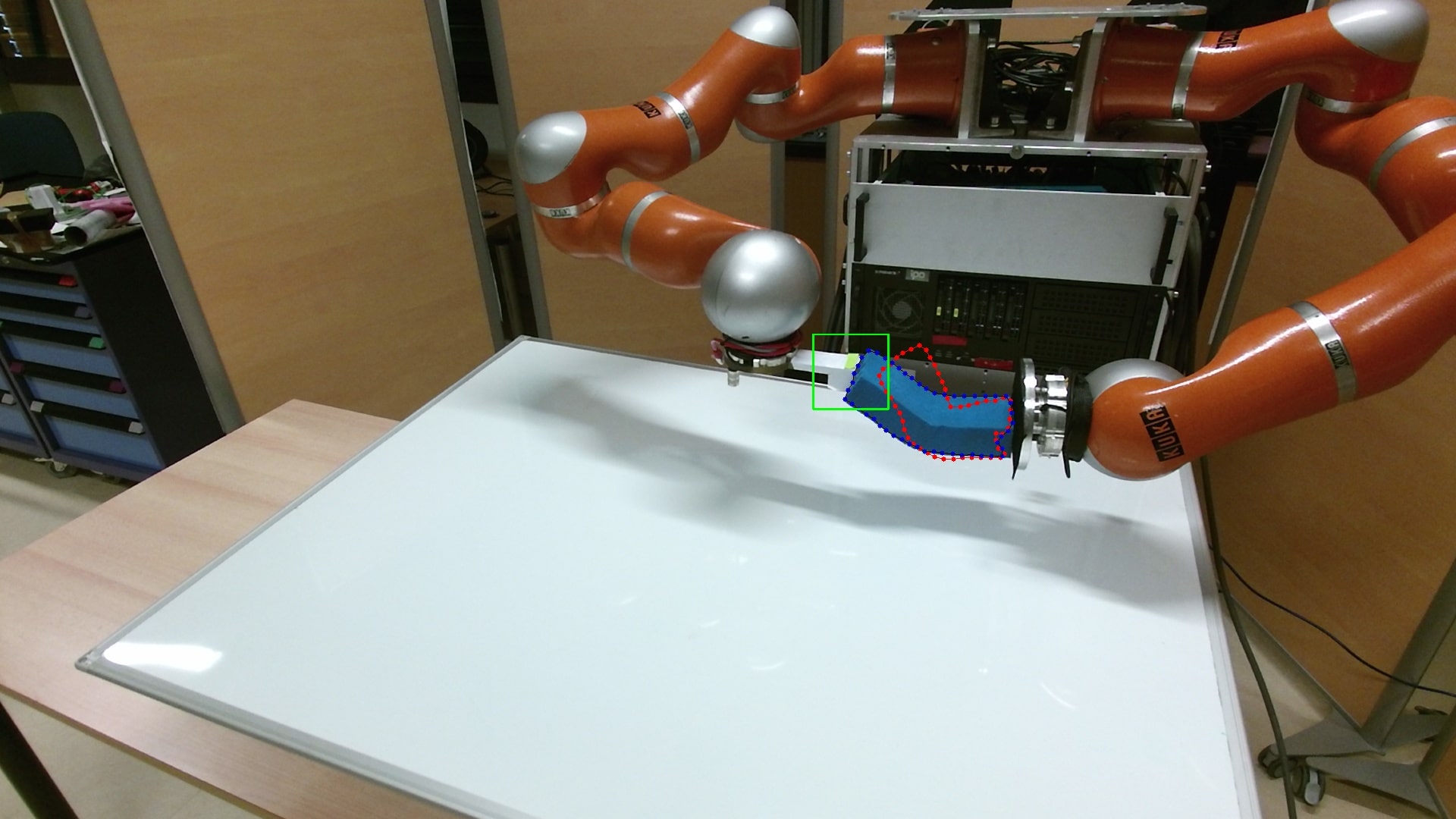}}   
	\hspace{1mm}
	\subfloat{\includegraphics[width = 0.11\textwidth]{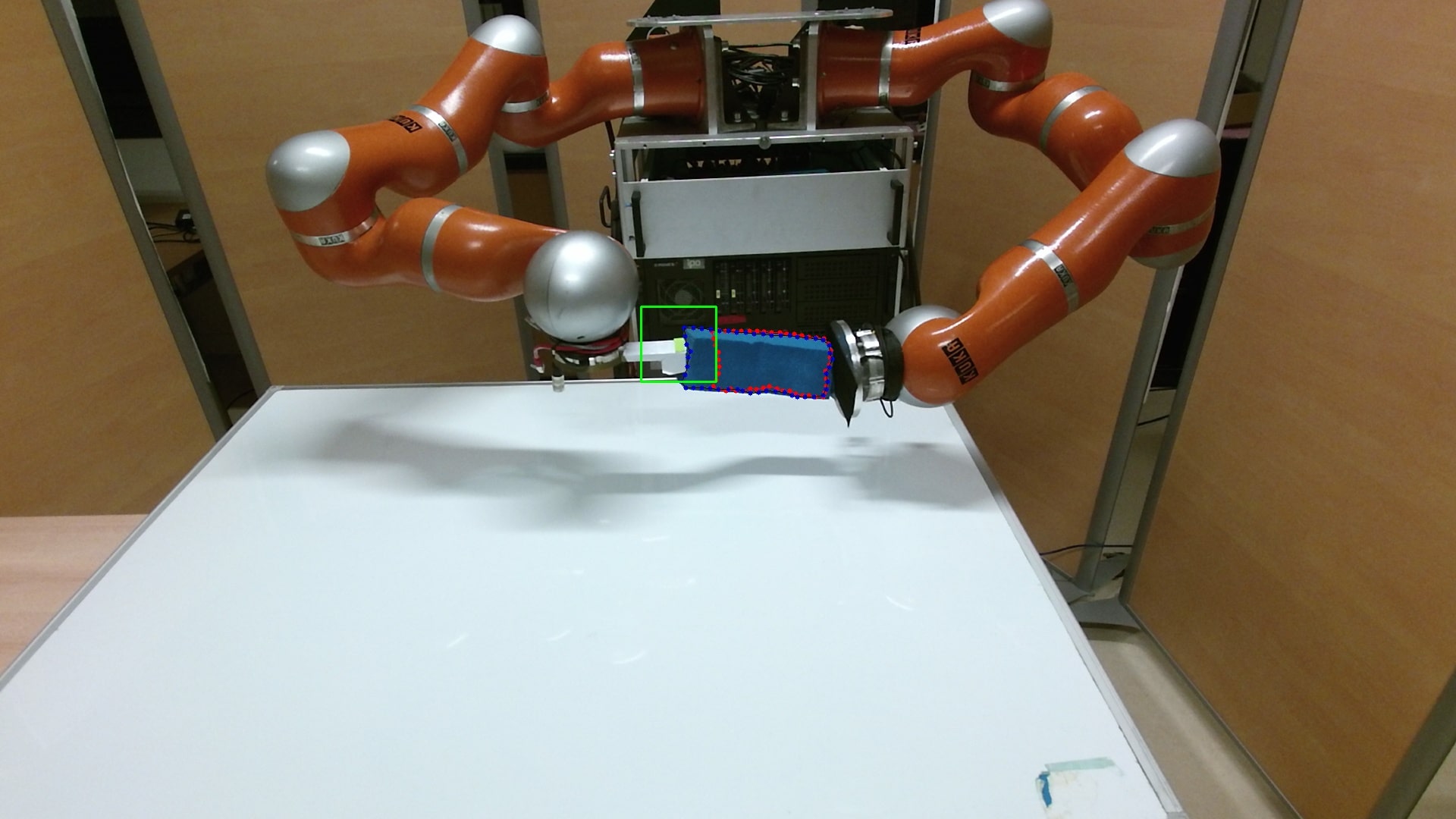}}
    \\
    \subfloat{\includegraphics[width = 0.11\textwidth]{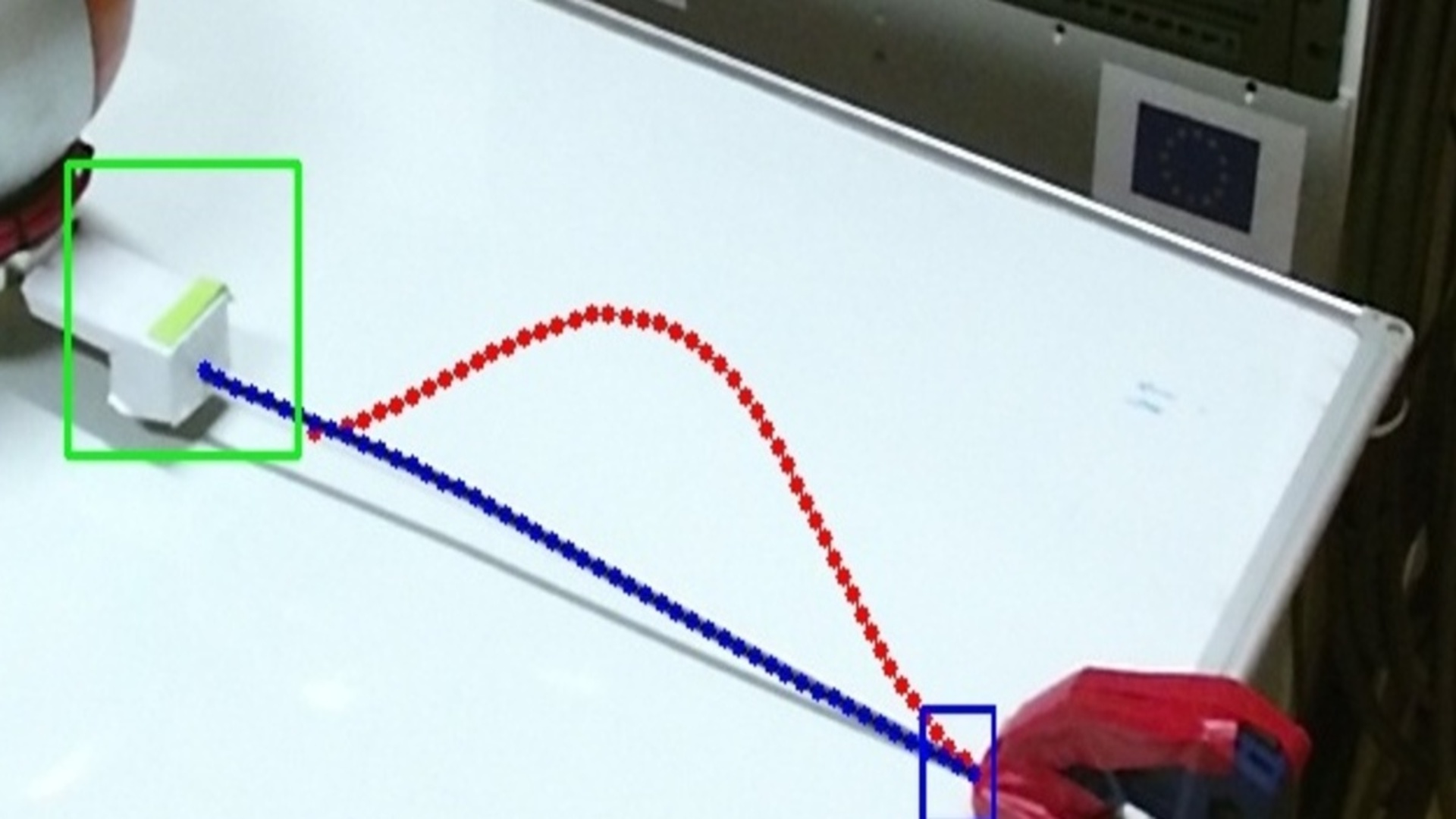}}
    \hspace{1mm}
    \subfloat{\includegraphics[width = 0.11\textwidth]{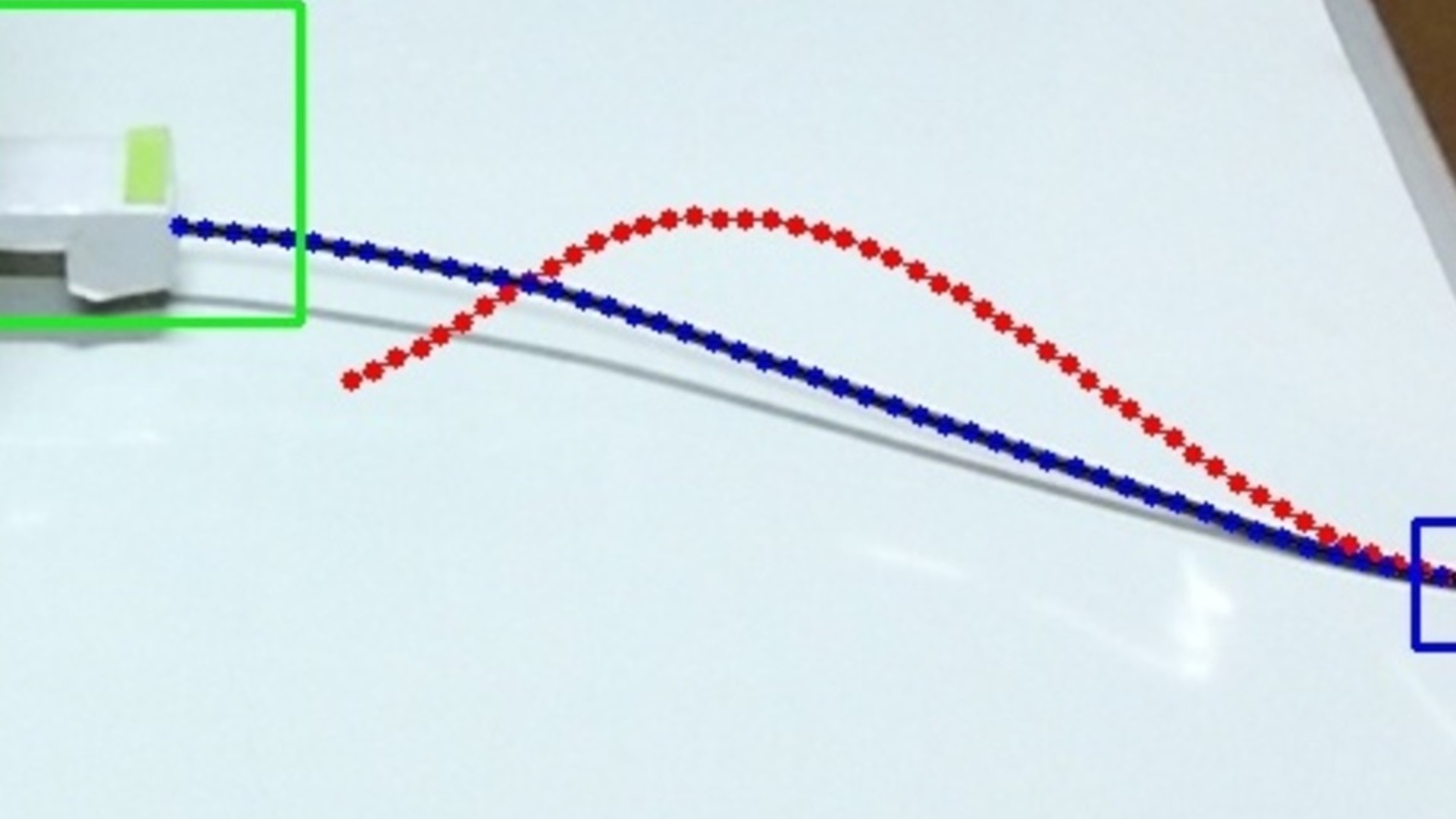}}
    \hspace{1mm}
    \subfloat{\includegraphics[width = 0.11\textwidth]{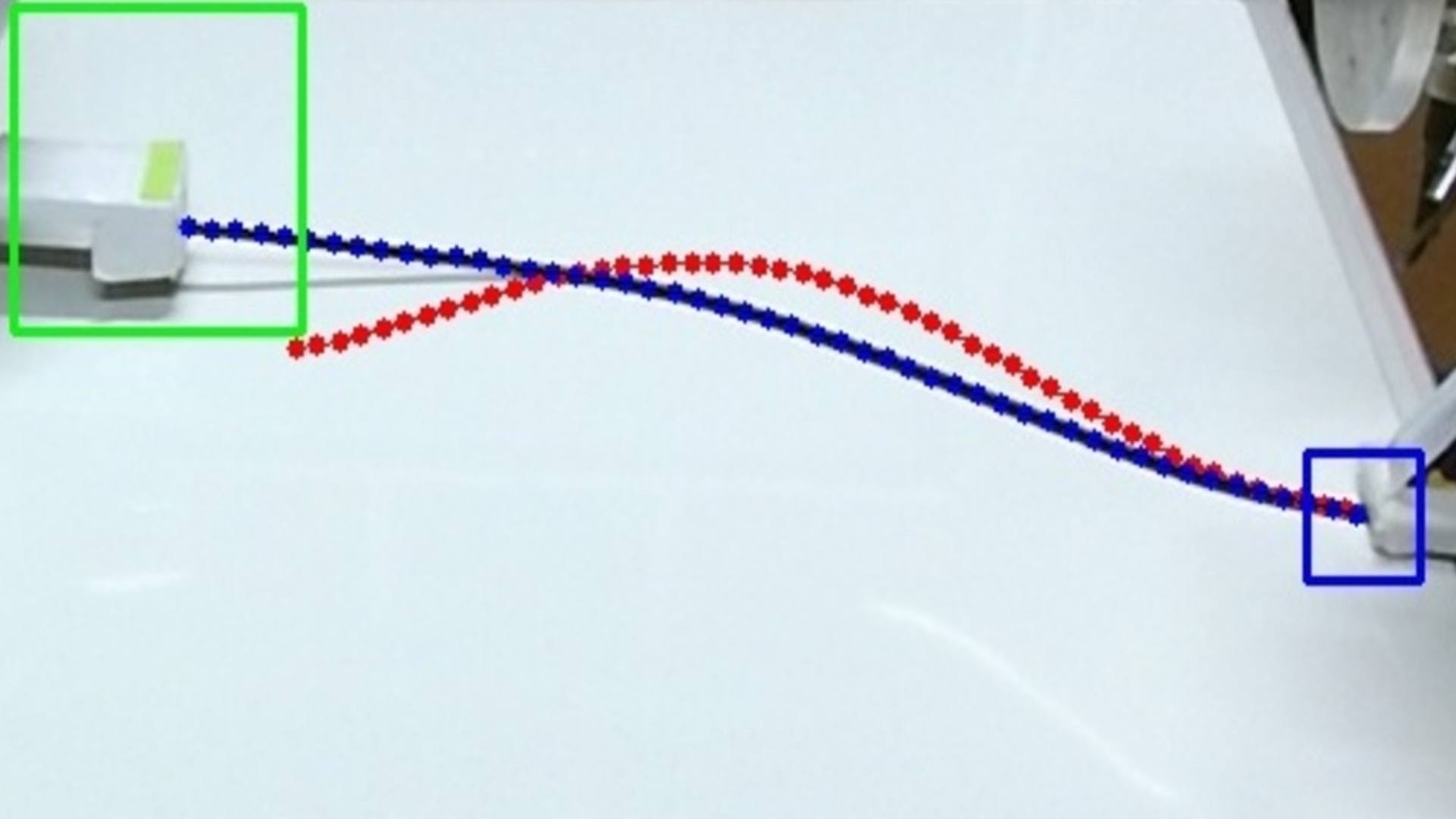}}
    \hspace{1mm}
	\subfloat{\includegraphics[width = 0.11\textwidth]{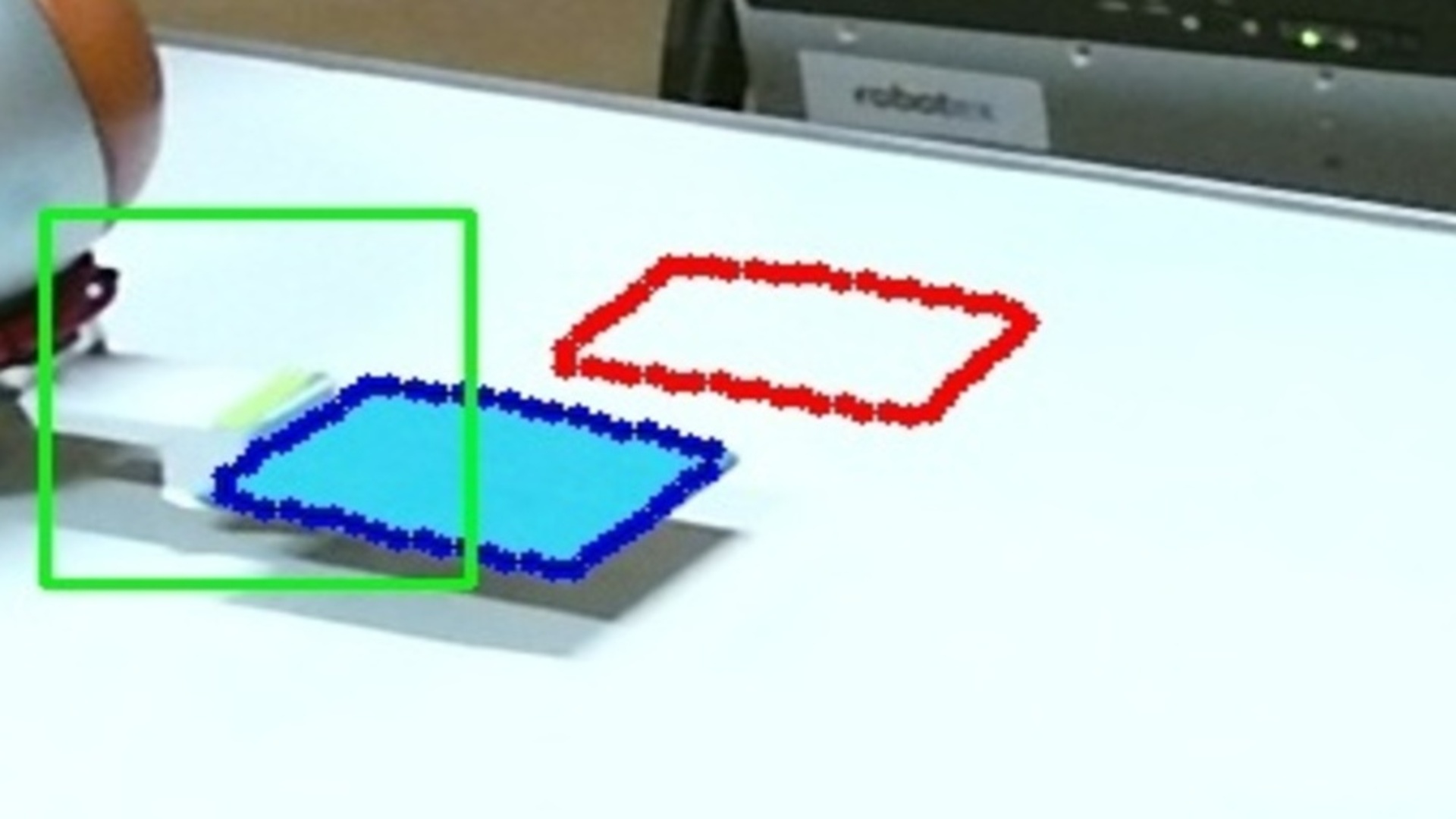}}   
	\hspace{1mm}
    \subfloat{\includegraphics[width = 0.11\textwidth]{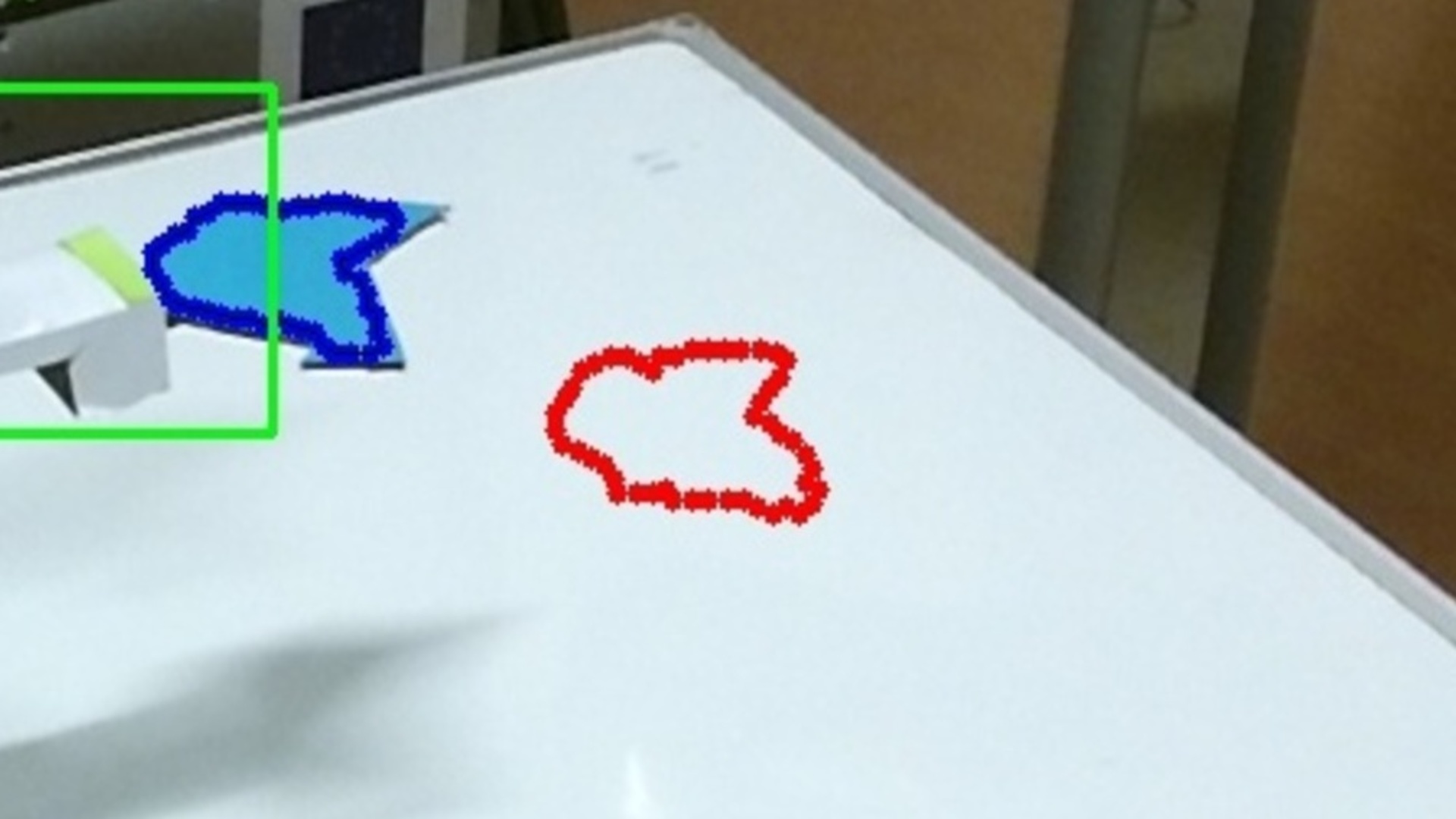}}   
	\hspace{1mm}
	\subfloat{\includegraphics[width = 0.11\textwidth]{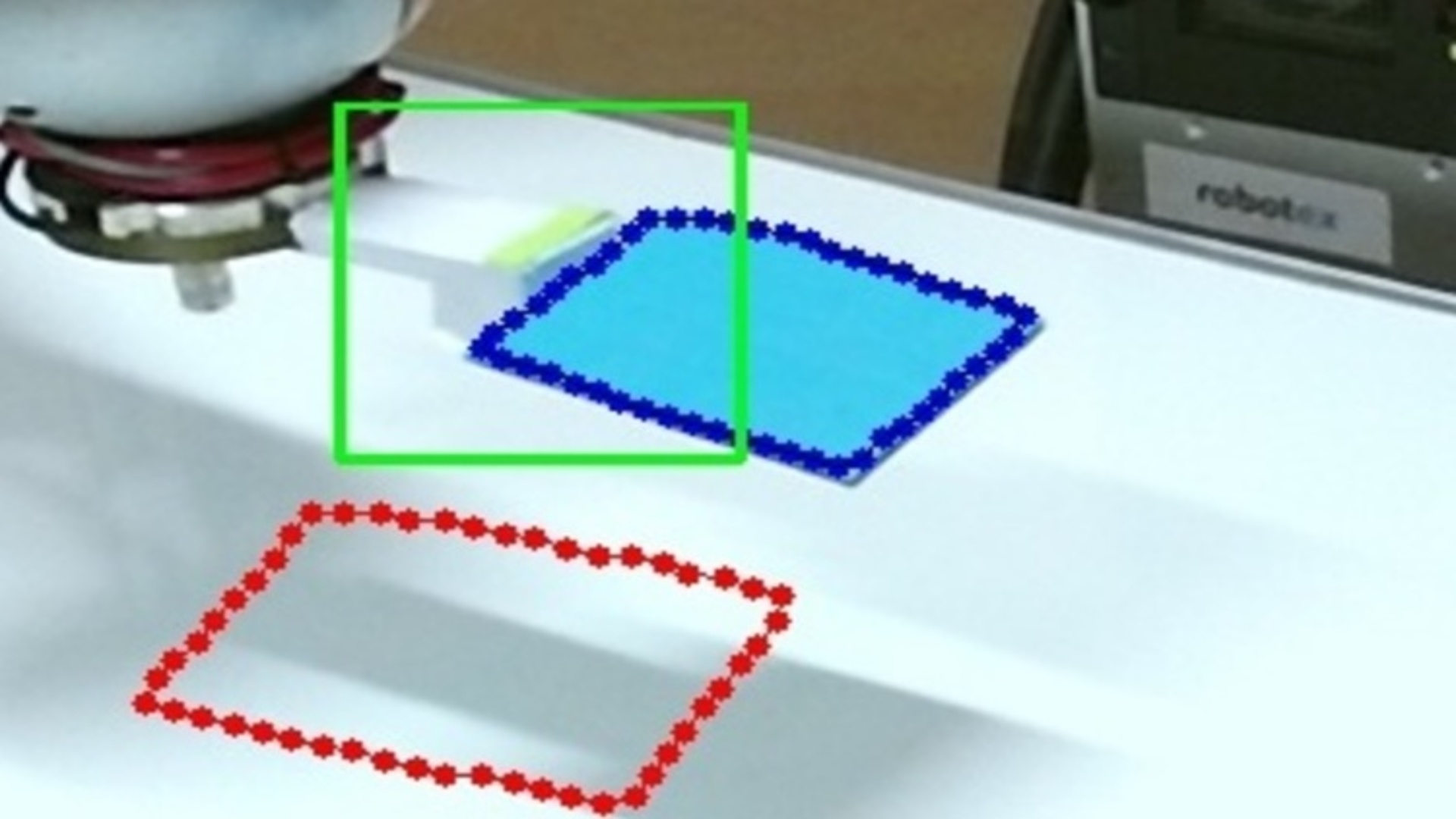}} 
	\hspace{1mm}
	\subfloat{\includegraphics[width = 0.11\textwidth]{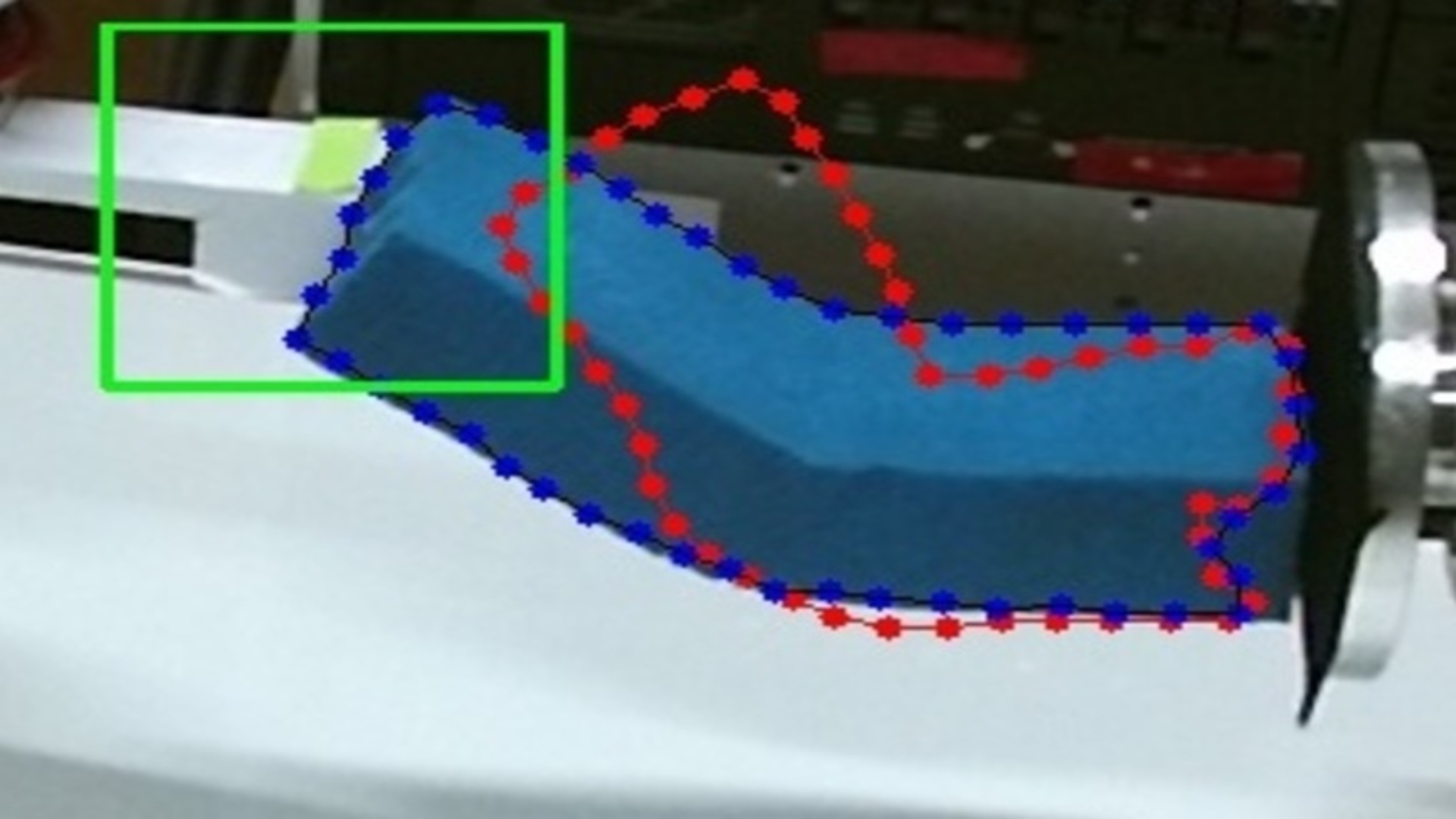}}   
	\hspace{1mm}
	\subfloat{\includegraphics[width = 0.11\textwidth]{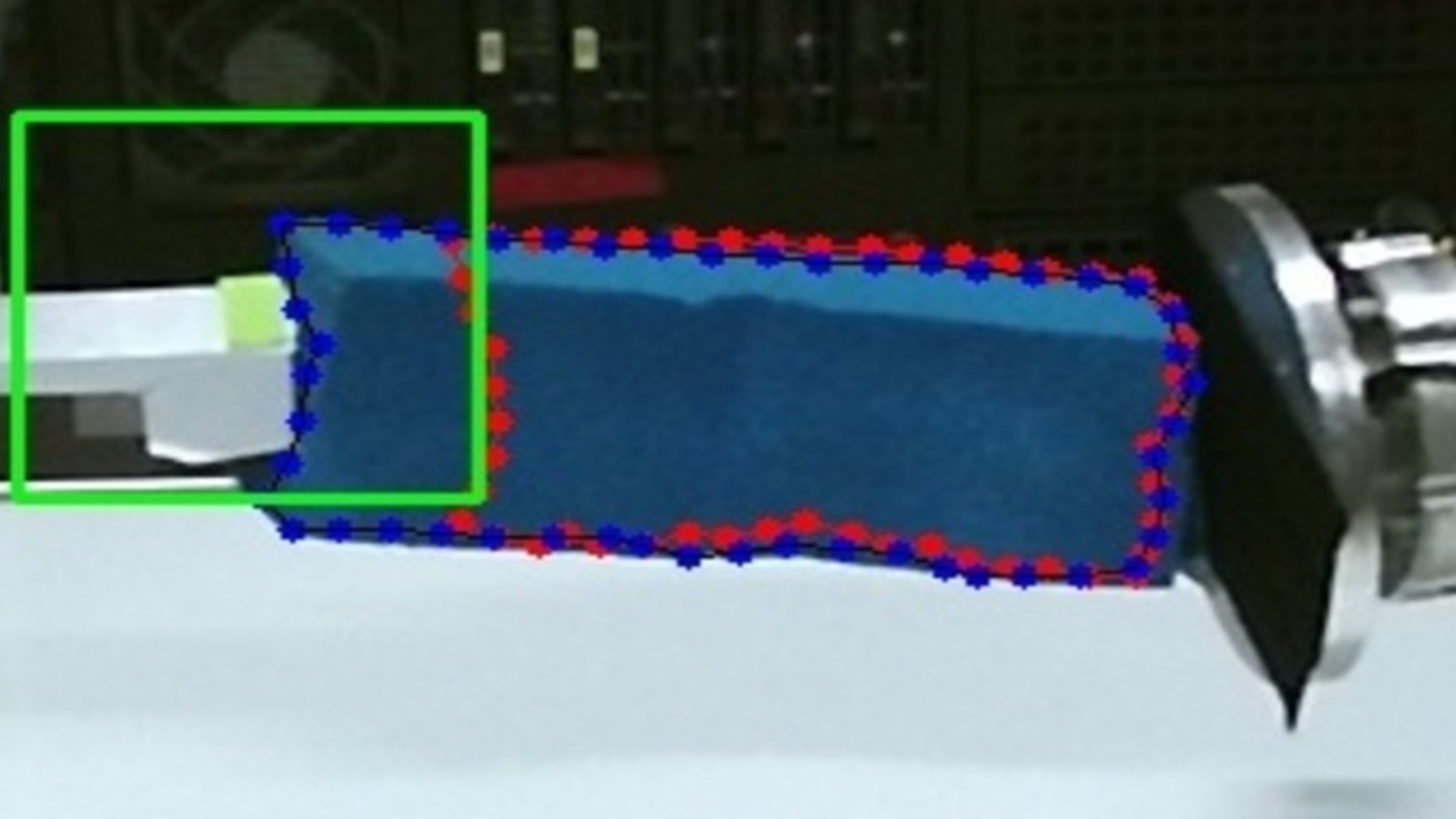}}
    \\
    \subfloat{\includegraphics[width = 0.11\textwidth]{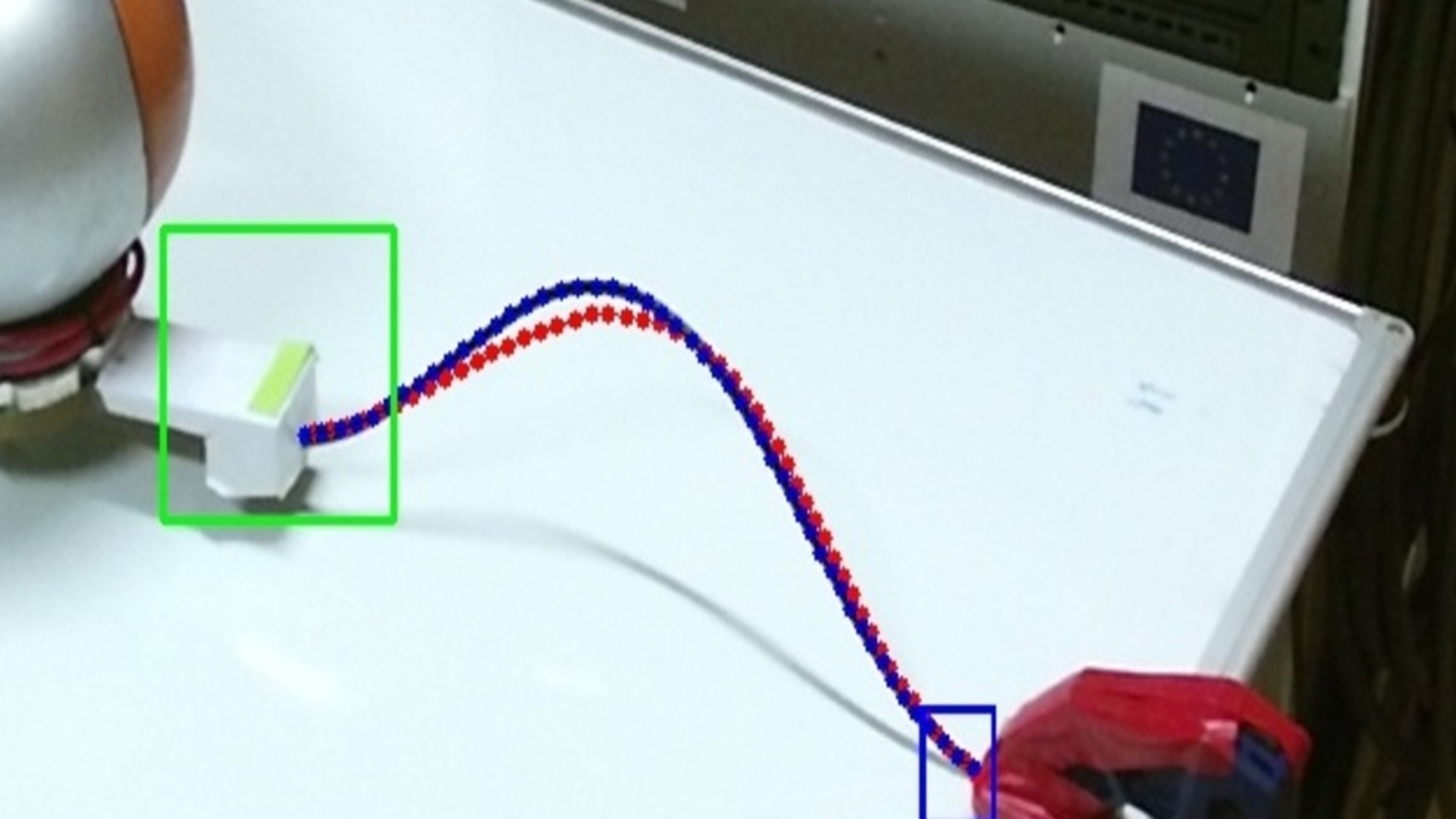}} 
    \hspace{1mm}
    \subfloat{\includegraphics[width = 0.11\textwidth]{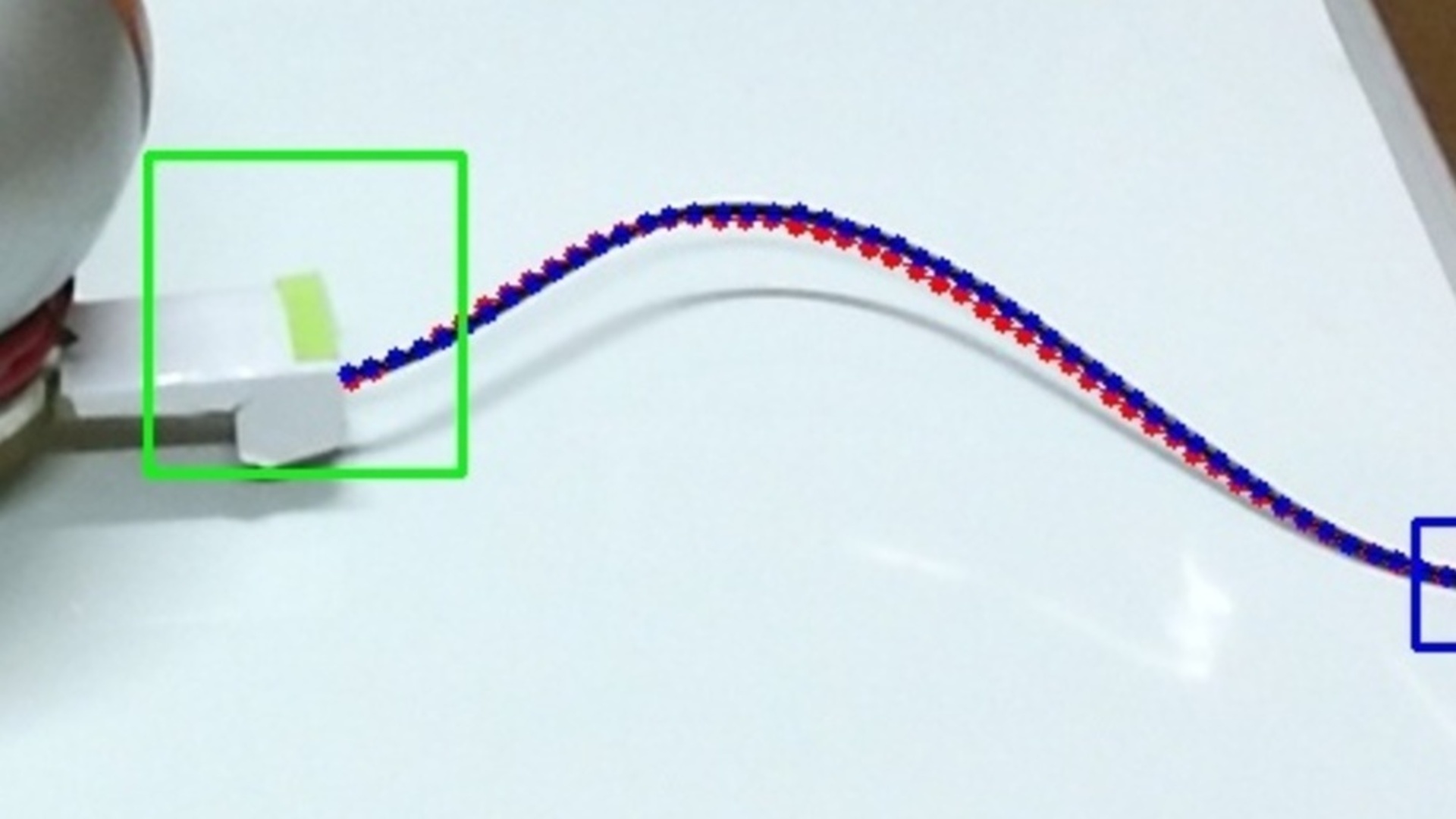}}
    \hspace{1mm}
    \subfloat{\includegraphics[width = 0.11\textwidth]{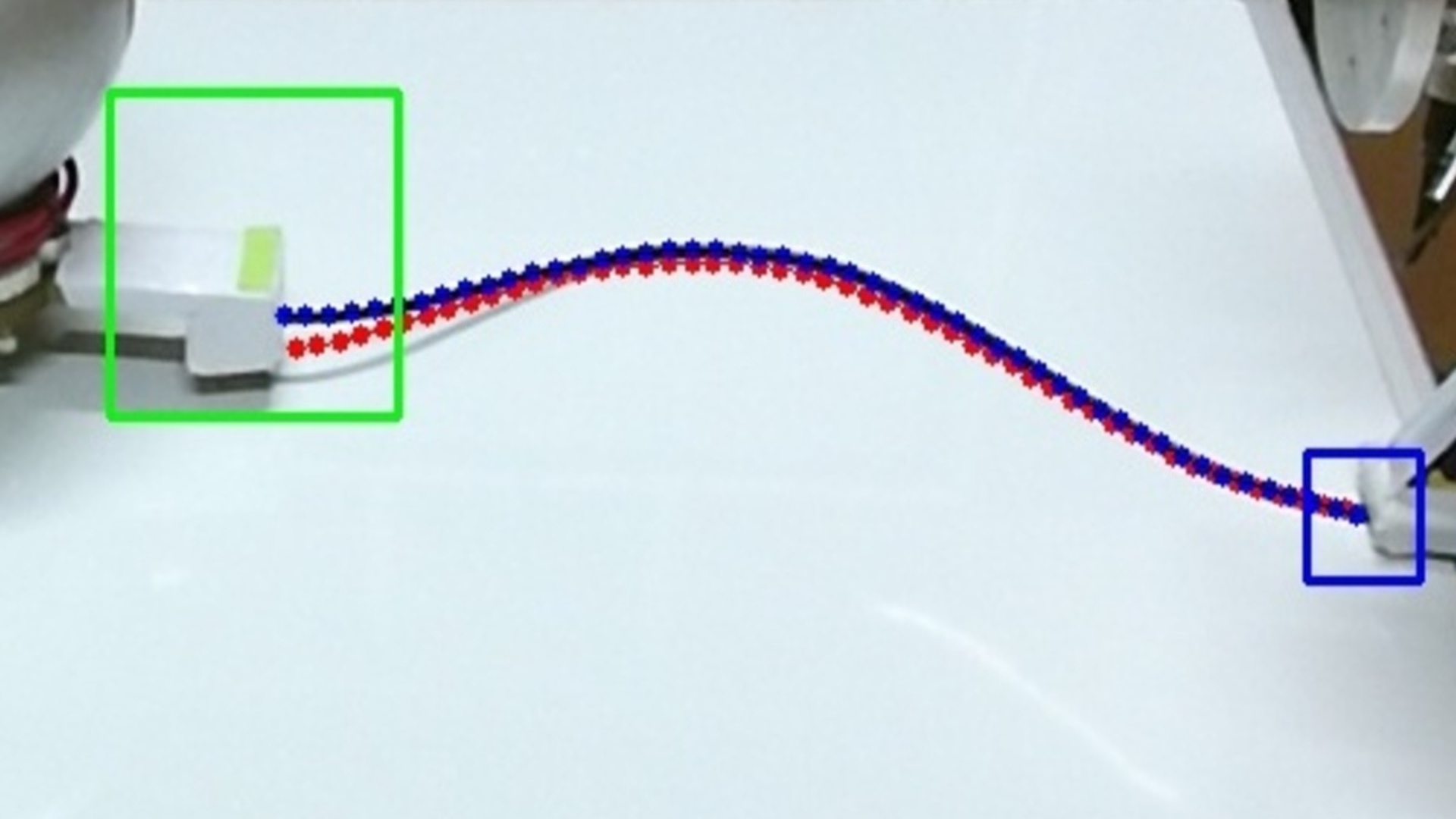}}
    \hspace{1mm}
	\subfloat{\includegraphics[width = 0.11\textwidth]{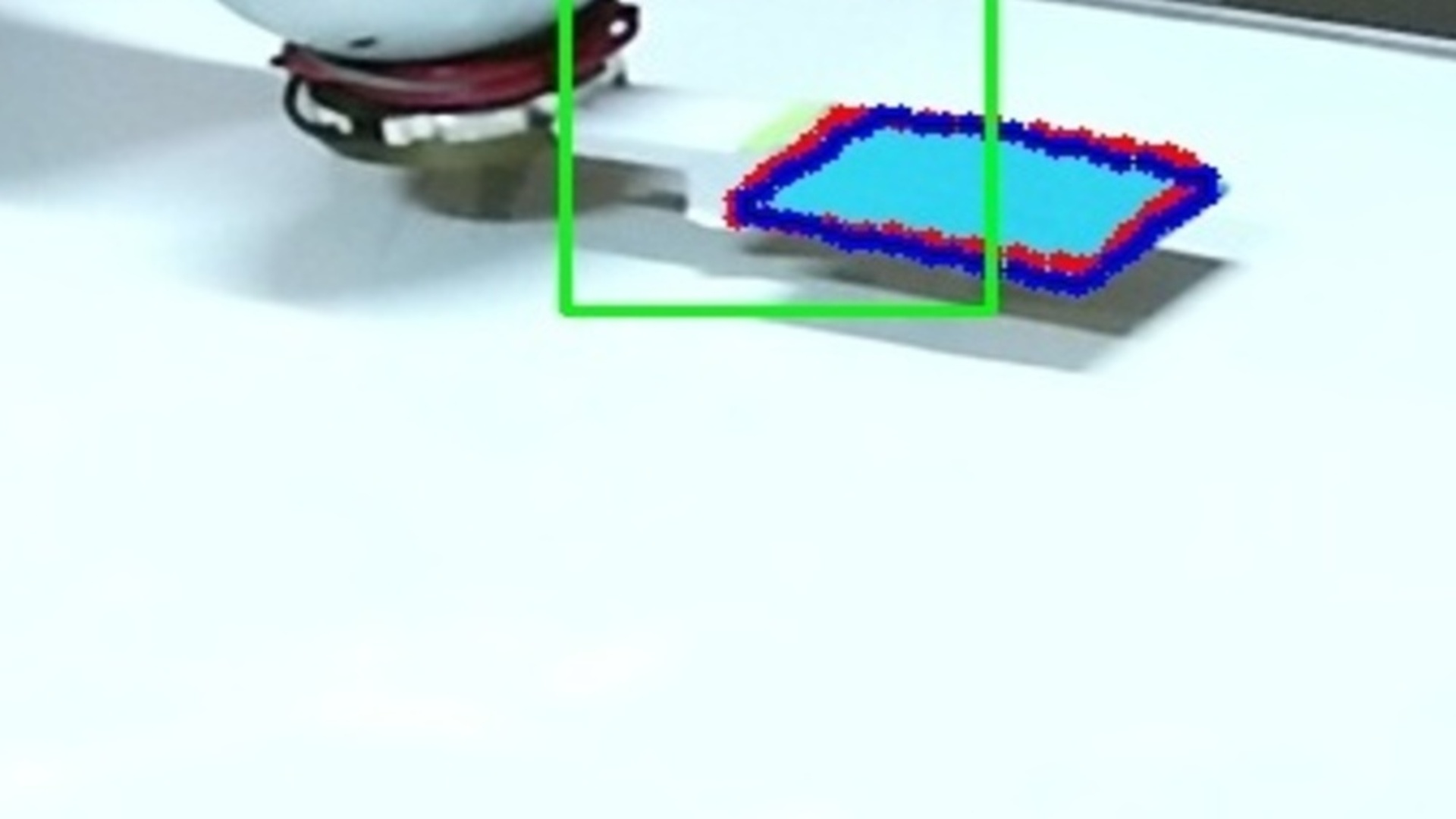}}   
	\hspace{1mm}
    \subfloat{\includegraphics[width = 0.11\textwidth]{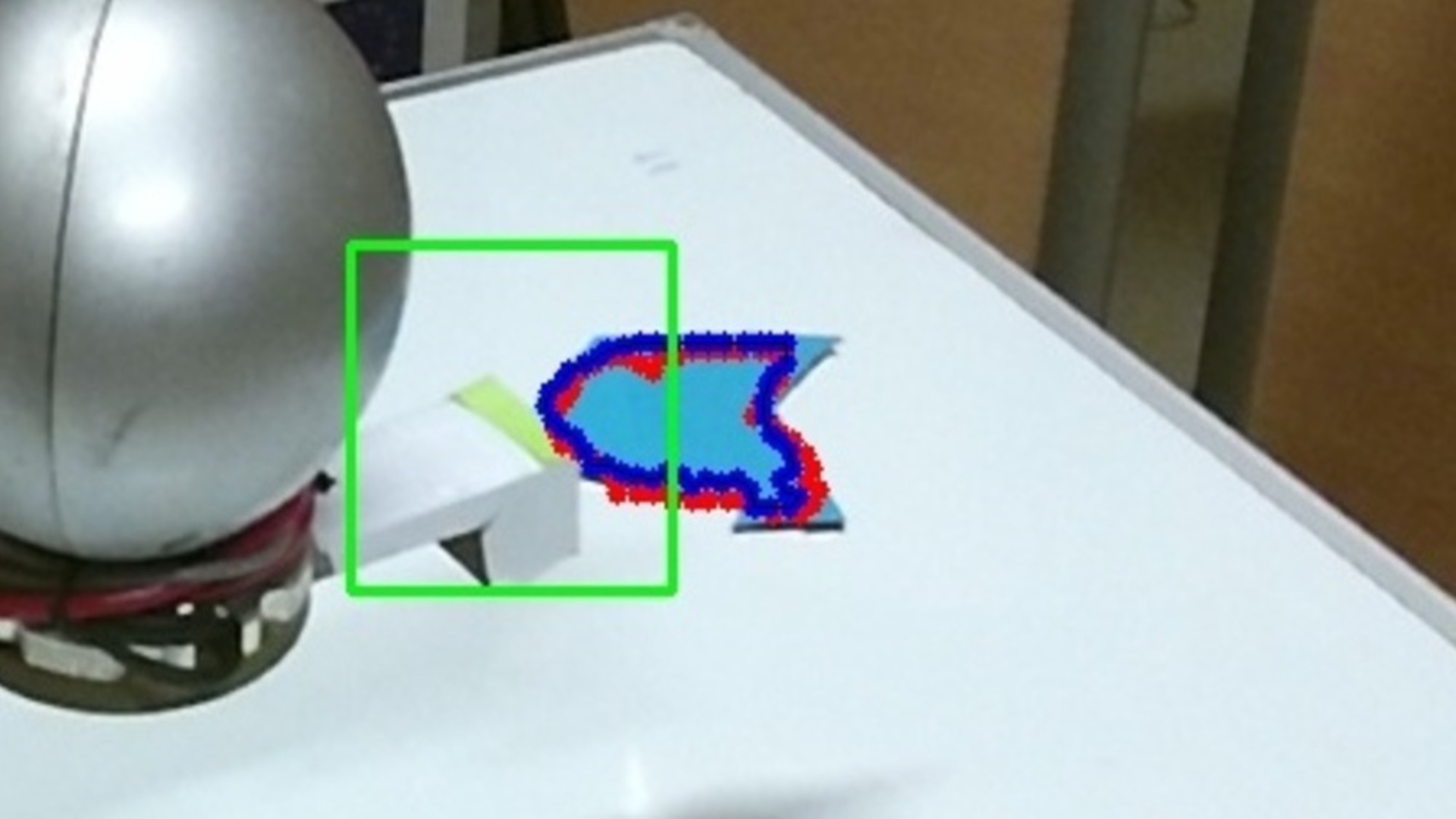}}   
	\hspace{1mm}
	\subfloat{\includegraphics[width = 0.11\textwidth]{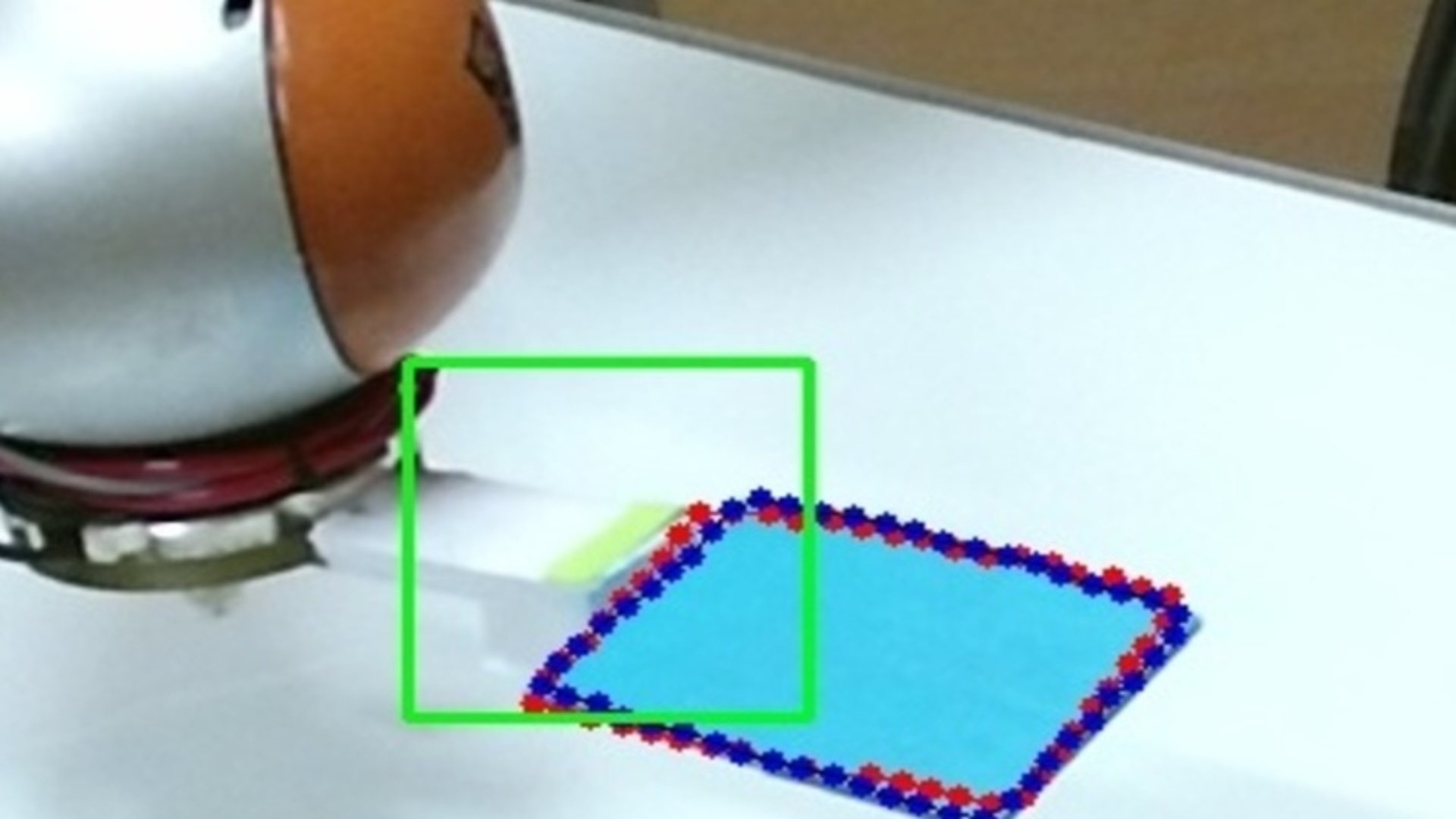}}  
	\hspace{1mm}
    \subfloat{\includegraphics[width = 0.11\textwidth]{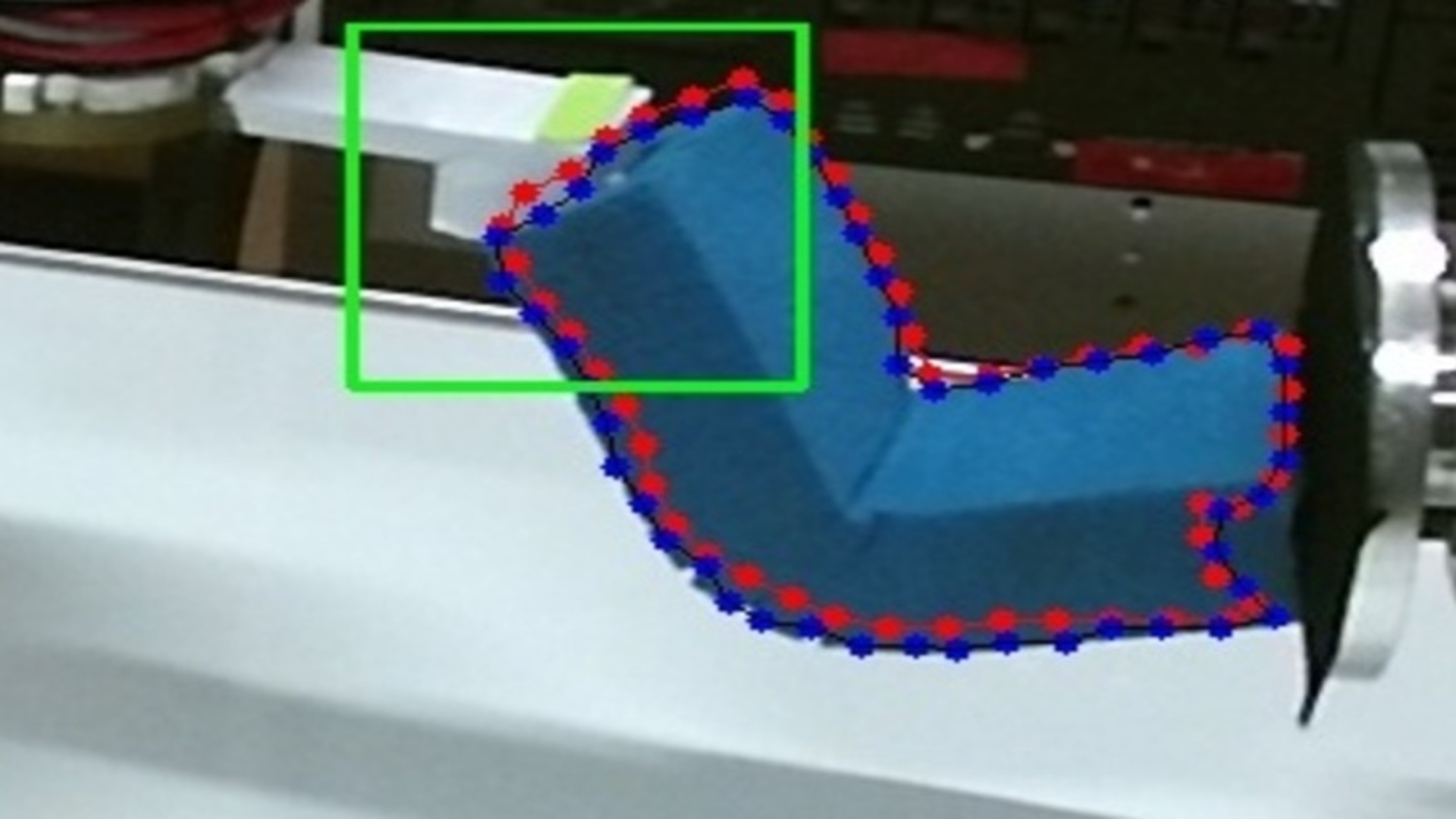}}   
	\hspace{1mm}
	\subfloat{\includegraphics[width = 0.11\textwidth]{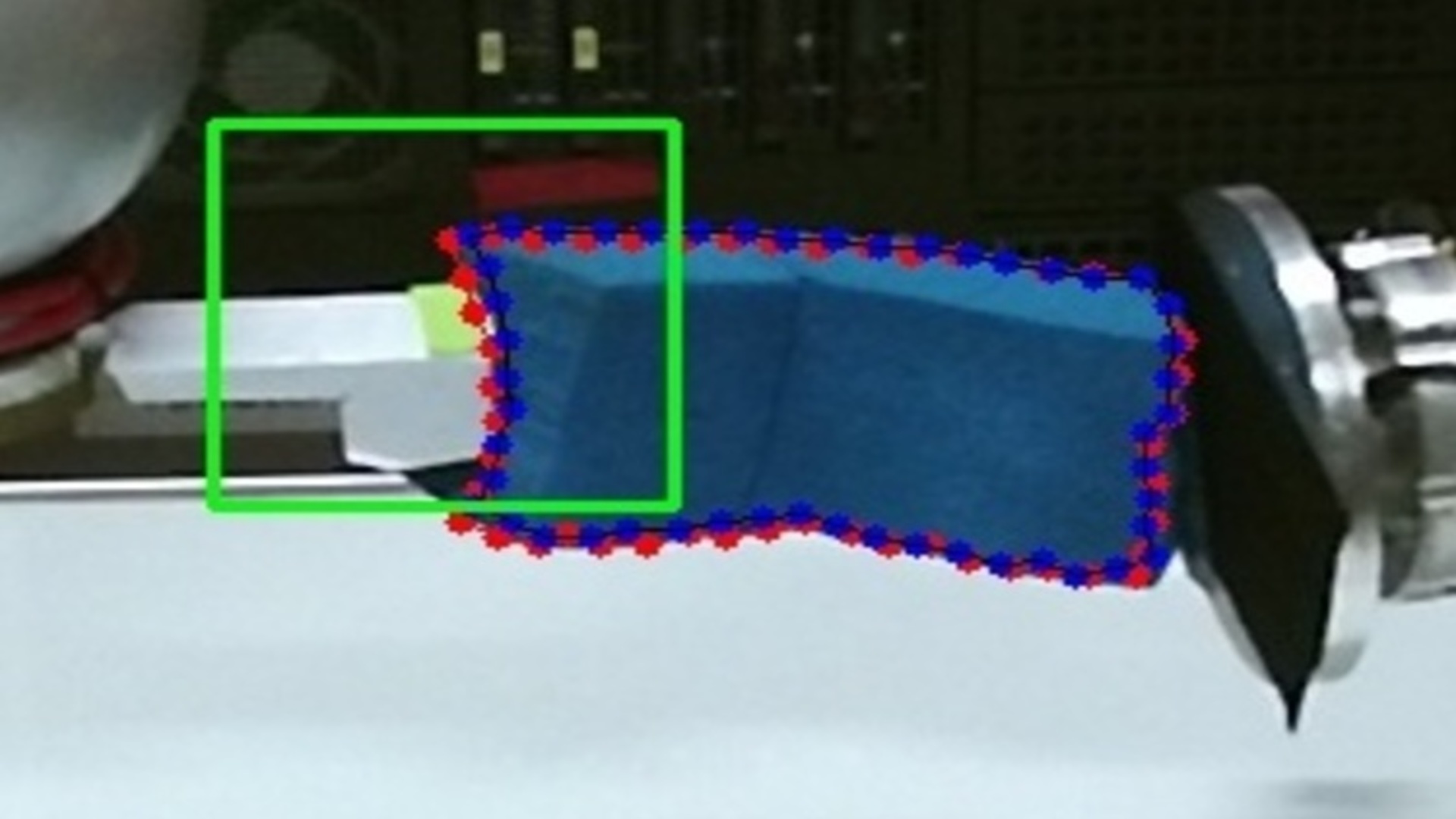}} 
    \caption{Eight experiments with the robot manipulating different objects. From left to right: a cable (columns 1 -- 3), a rigid object (columns 4 -- 6) and a sponge (columns 7 and 8). The first row shows the full Kinect V2 view, and the second and the third columns zoom in to show the manipulation process at the first and last iterations. The red contour is the target one, whereas the blue contour is the current one. The green square indicates the end-effector.}
    \vspace{-0.7cm}
    \label{fig:robot_experiment}
\end{figure*}

\begin{figure*}[!thpb]
    \centering
    \subfloat[cable -- column 1]{\includegraphics[width = 0.25\textwidth]{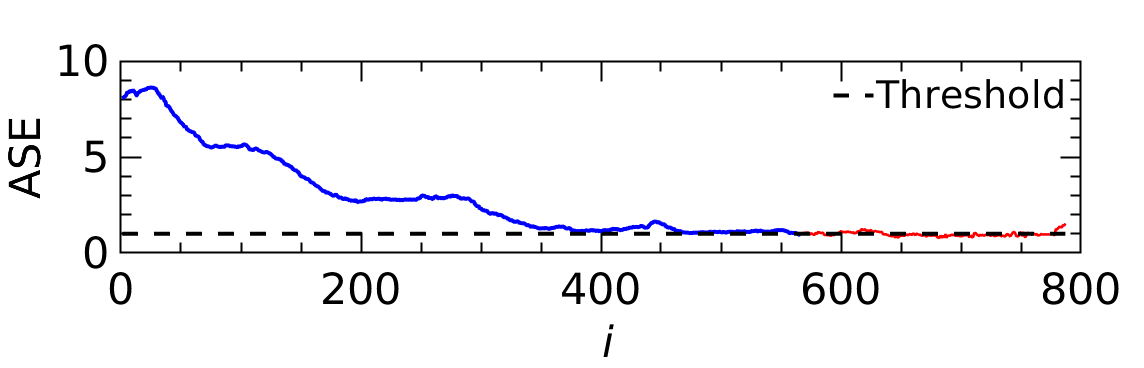}\label{fig:cable_exp_error_1}}
    \subfloat[cable -- column 2]{\includegraphics[width = 0.25\textwidth]{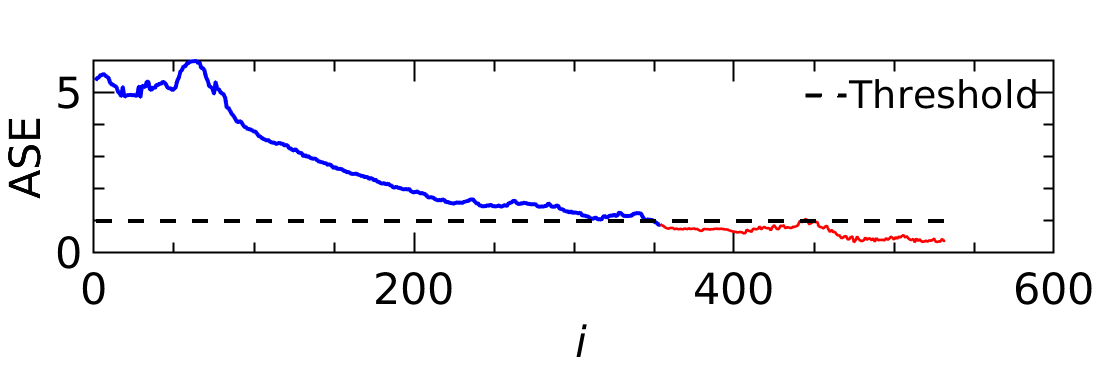}\label{fig:cable_exp_error_2}}
    \subfloat[cable -- column 3]{\includegraphics[width = 0.25\textwidth]{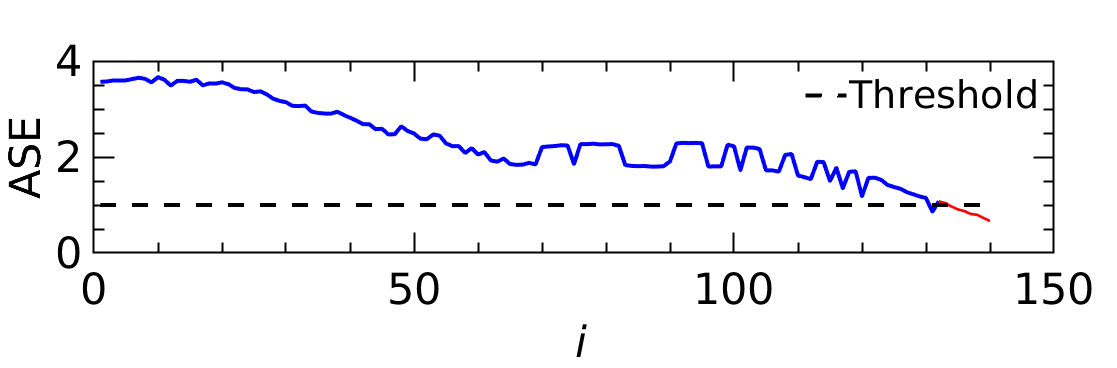}\label{fig:cable_exp_error_3}}
    \subfloat[rigid object -- column 4]{\includegraphics[width = 0.25\textwidth]{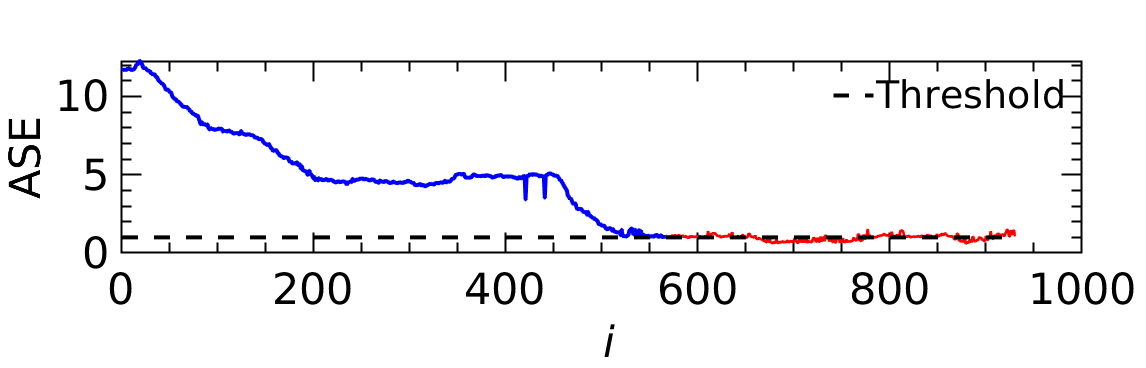}\label{fig:rigid_exp_error_1}}
    \\
    \subfloat[rigid object -- column 5]{\includegraphics[width = 0.25\textwidth]{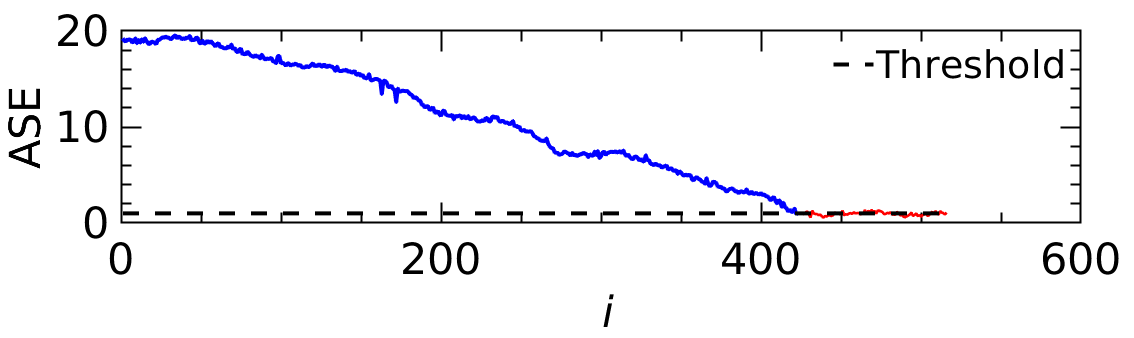}\label{fig:rigid_exp_error_2}}
    \subfloat[cable -- column 6]{\includegraphics[width = 0.25\textwidth]{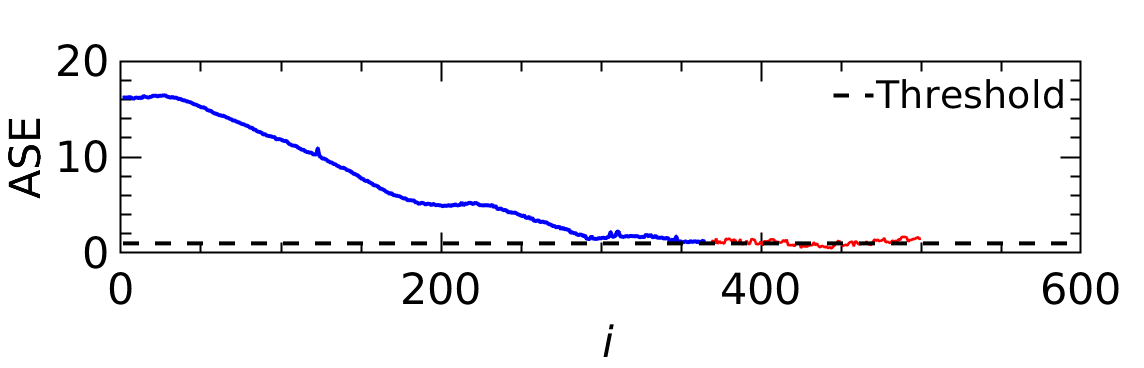}\label{fig:rigid_exp_error_3}}
    \subfloat[sponge -- column 7]{\includegraphics[width = 0.25\textwidth]{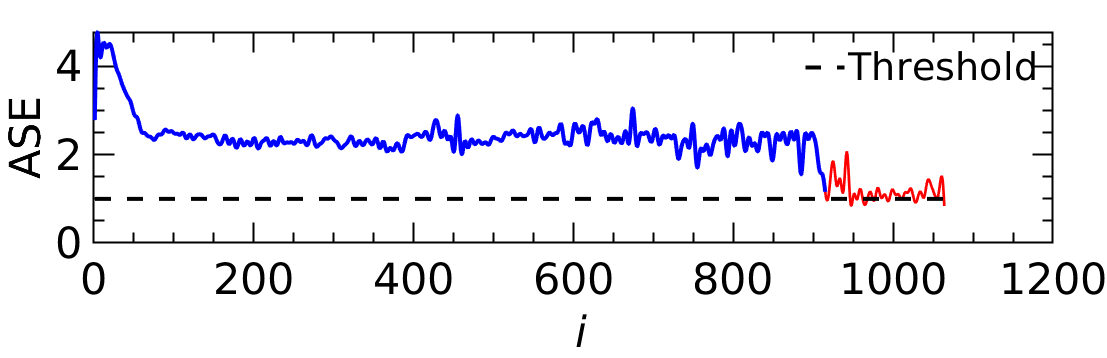}\label{fig:foam_exp_error_1}}
    \subfloat[sponge -- column 8]{\includegraphics[width = 0.25\textwidth]{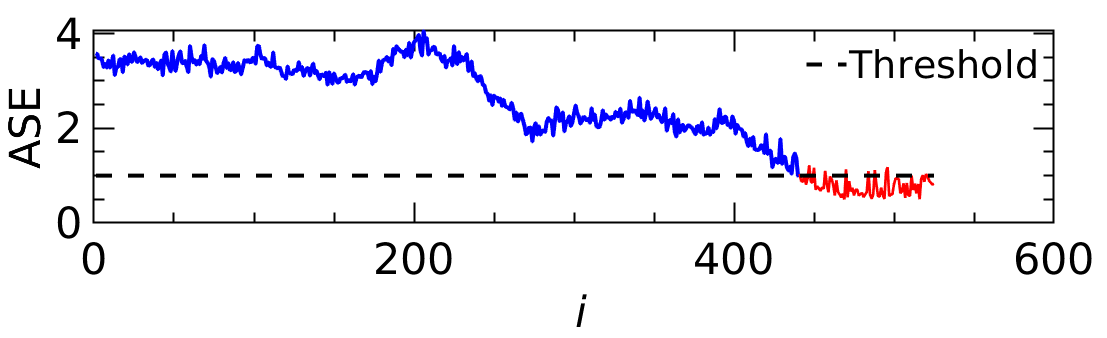}\label{fig:foam_exp_error_2}}
    \caption{Evolution of $e_i$ at each iteration $i$, for the $8$ experiments of Fig. \ref{fig:robot_experiment}. The black dashed lines indicate the threshold $\text{ASE} = 1$ pixel. The blue curves show $e_i$ until the termination condition, whereas the red curves show the error until manual termination by the human operator.}
    \vspace{-0.3cm}
    \label{fig:cable_experiment_error}
\end{figure*}

Figure \ref{fig:cable_experiment_error} shows the decreasing trend of error \text{ASE} for each experiment. The initial increase of \text{ASE} in the experiments can be due to the random motion at the beginning of the experiments. In general, we found that \text{ASE} is more noisy for the closed than for the open contour. This discontinuity is visible in Figures \ref{fig:cable_exp_error_3} and \ref{fig:rigid_exp_error_1} (zigzag evolution). Such noise is likely introduced by the way we sampled the contour. Also, the noise in the contour extraction is more visible on Fig. \ref{fig:foam_exp_error_1} and \ref{fig:foam_exp_error_2}. The two plots show that our framework can converge to the target in the presence of noisy image processing. When we have false contour data, the value of \text{ASE} may encounter a sudden discontinuity. Figure \ref{fig:false_contour_data} shows examples of these false samples, output by the image processing pipeline. Despite these errors, thanks to the ``forgetting nature'' of the receding horizon and to the relatively small window size ($M = 5$), the corrupted data will soon be forgotten, and it will not hinder the overall manipulation task. Yet, the overall framework would benefit from a more robust sensing strategy, as in \citep{Cheng2019Occlusion}. 

\begin{figure}[!thpb]
    \centering
    \subfloat{\includegraphics[width = 0.4\textwidth]{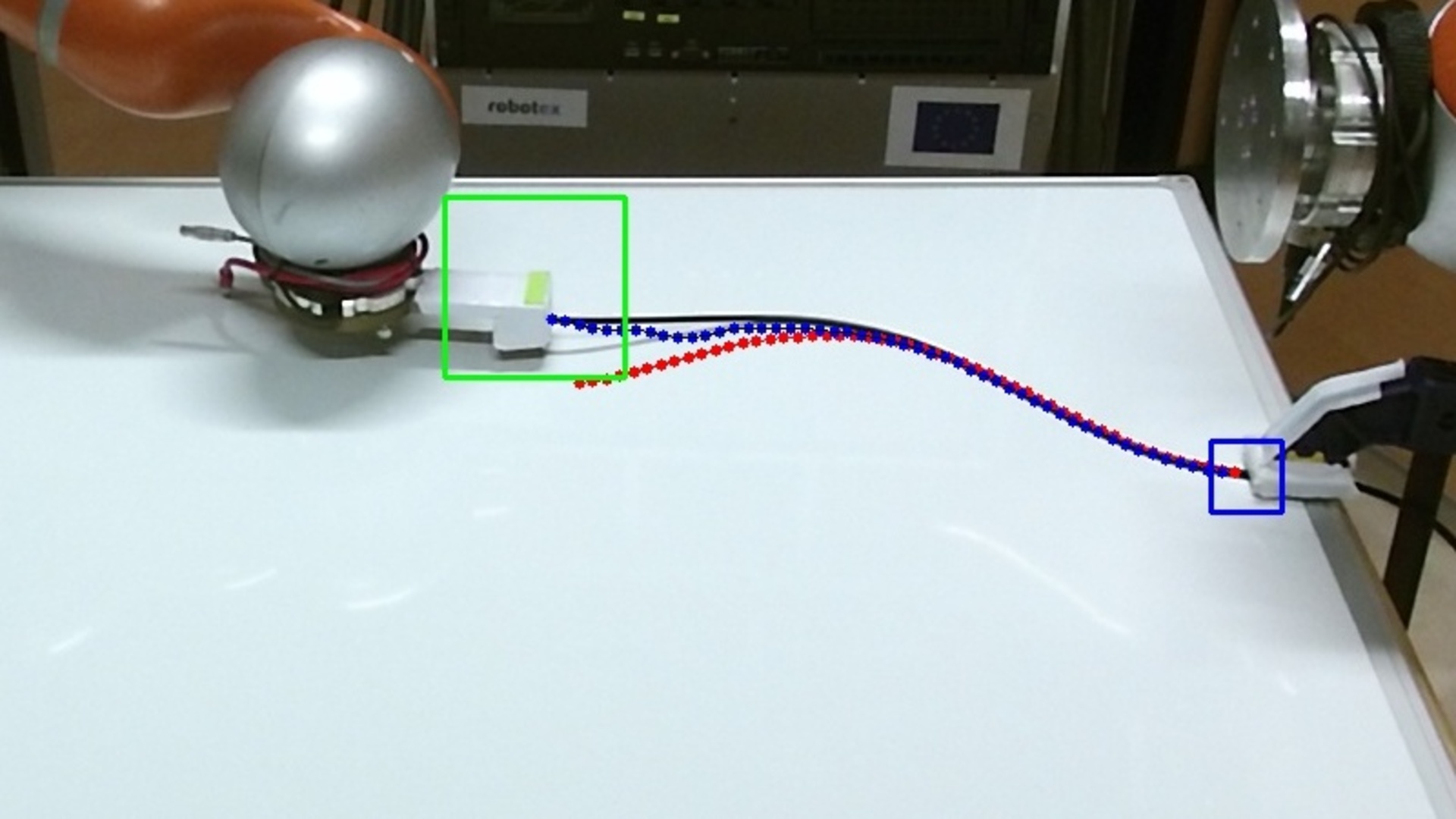}\label{fig:cable_exp_error_1}}
    \hspace{5mm}
    \subfloat{\includegraphics[width = 0.4\textwidth]{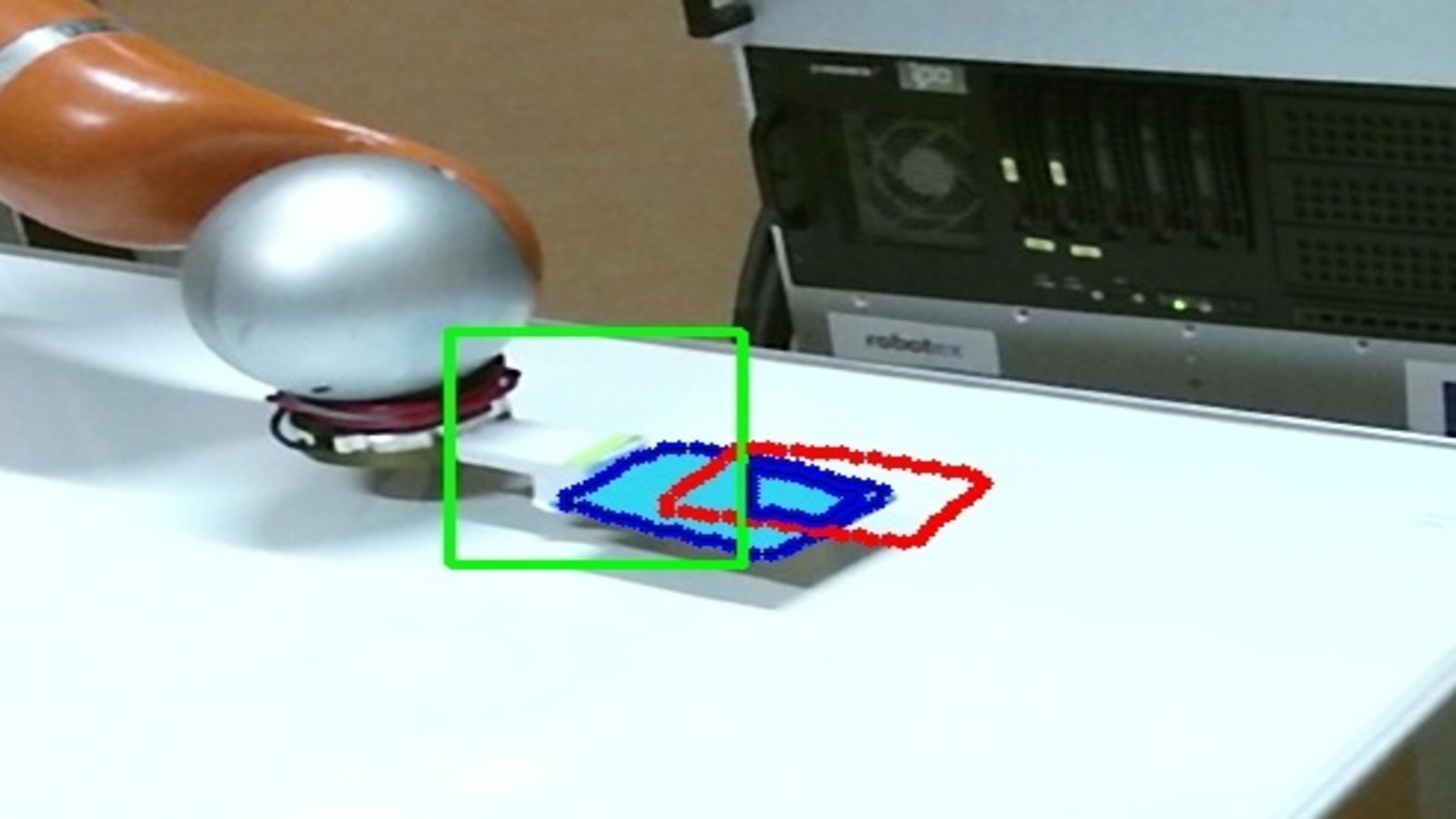}\label{fig:cable_exp_error_2}}
    \caption{False contour data from the image can cause noise in \text{ASE}.}
    \label{fig:false_contour_data}
\end{figure}

\begin{figure}[!thpb]
    \centering
    \subfloat[]{\includegraphics[width = 0.24\textwidth]{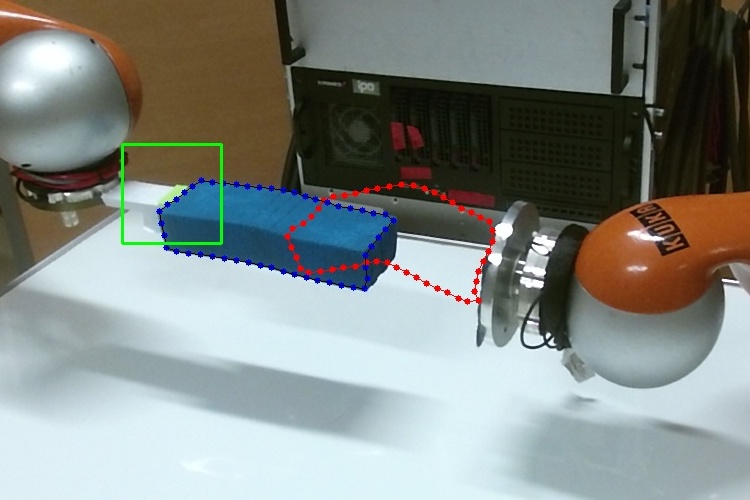}\label{fig:move_shape_1-init}}
    \hspace{0.5mm}
    \subfloat[]{\includegraphics[width = 0.24\textwidth]{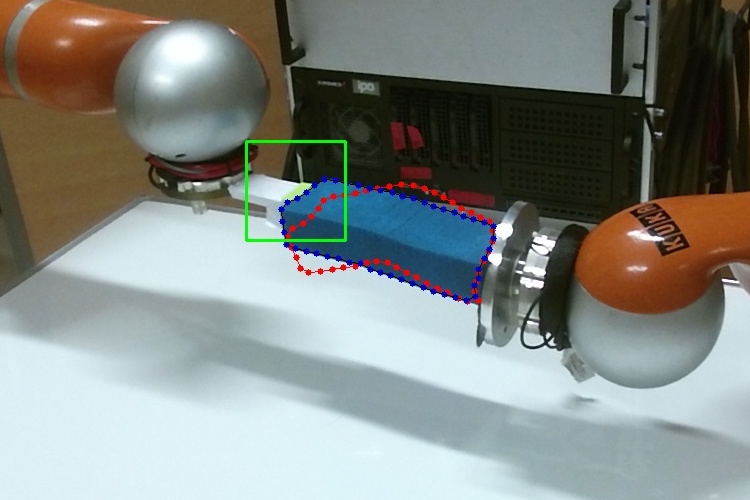}\label{fig:move_shape_1-move}}
    \hspace{0.5mm}
    \subfloat[]{\includegraphics[width = 0.12\textwidth]{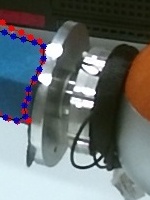}\label{fig:move_shape_1-alignment}}
    \hspace{0.5mm}
    \subfloat[]{\includegraphics[width = 0.24\textwidth]{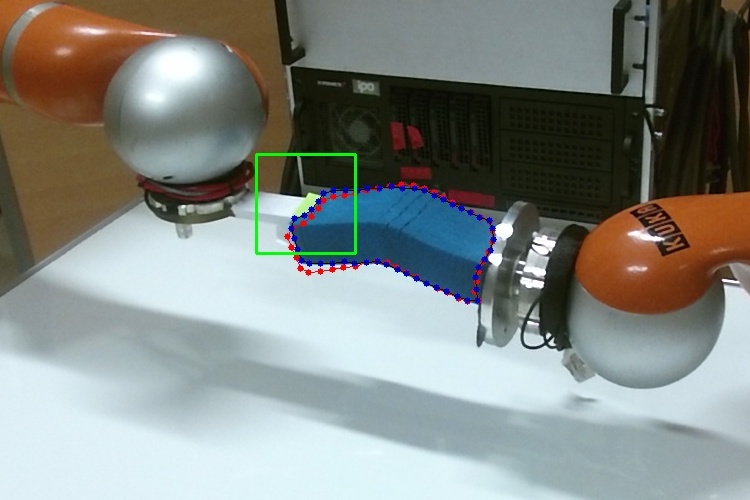}\label{fig:move_shape_1-shape}}
    \\
    \subfloat[]{\includegraphics[width = 0.24\textwidth]{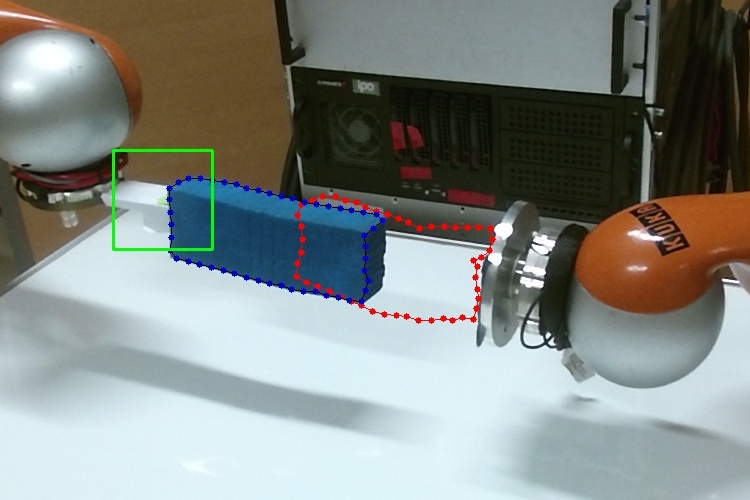}\label{fig:move_shape_2-init}}
    \hspace{0.5mm}
    \subfloat[]{\includegraphics[width = 0.24\textwidth]{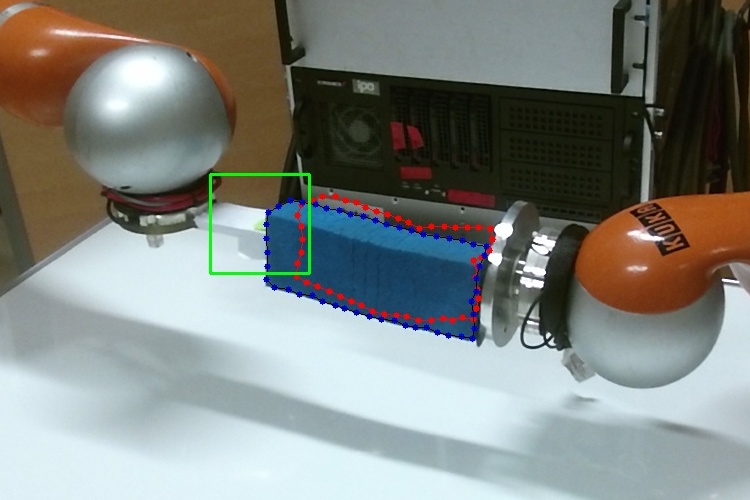}\label{fig:move_shape_2-move}}
    \hspace{0.5mm}
    \subfloat[]{\includegraphics[width = 0.12\textwidth]{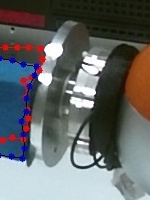}\label{fig:move_shape_2-alignment}}
    \hspace{0.5mm}
    \subfloat[]{\includegraphics[width = 0.24\textwidth]{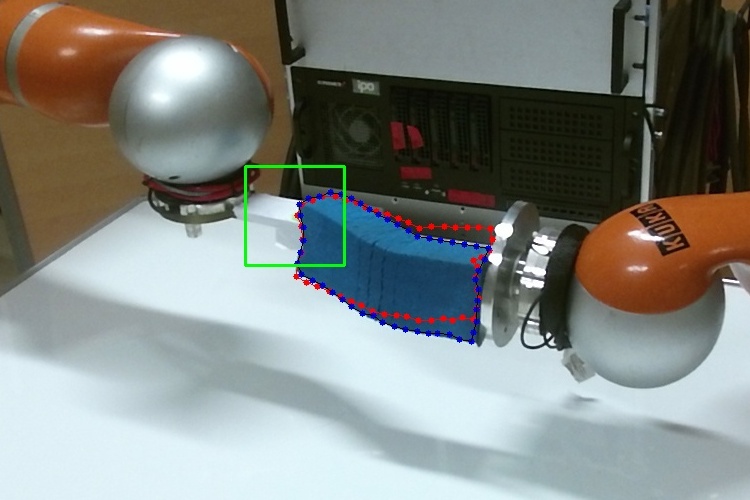}\label{fig:move_shape_2-shape}}
    \caption{Two ``move and shape'' experiments grouped into two rows. The target contour (red dotted) is far from the initial one. This requires the robot to 1) move the object, establish contact with the right -- fixed -- robot arm, 2) give the object the target shape, by relying on the contact. The first column shows the starting configuration, the second column presents the contact establishment, and the third column zooms in to show the alignment. The last column shows the final results.}
    \label{fig:move_and_shape}
\end{figure}

\begin{figure}[!thpb]
    \centering
    \subfloat[move and shape -- row 1]{\includegraphics[width = 0.4\textwidth]{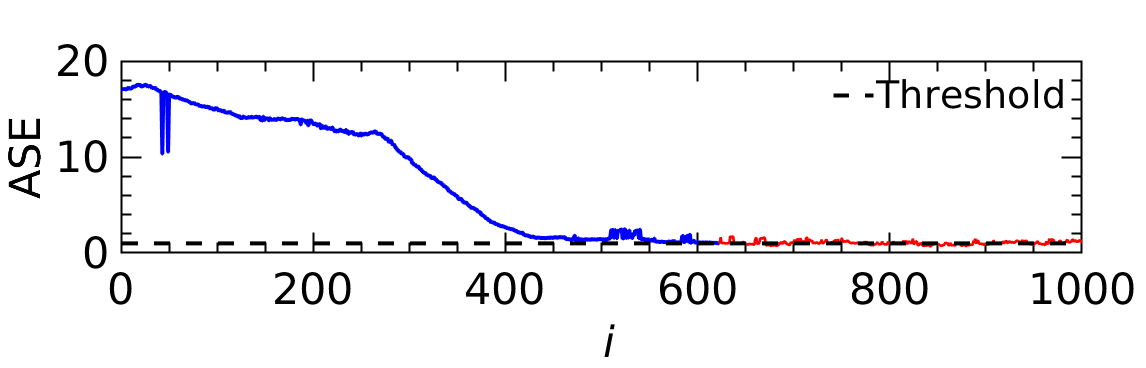}\label{fig:move_shape_error_1}}
    \subfloat[move and shape -- row 2]{\includegraphics[width = 0.4\textwidth]{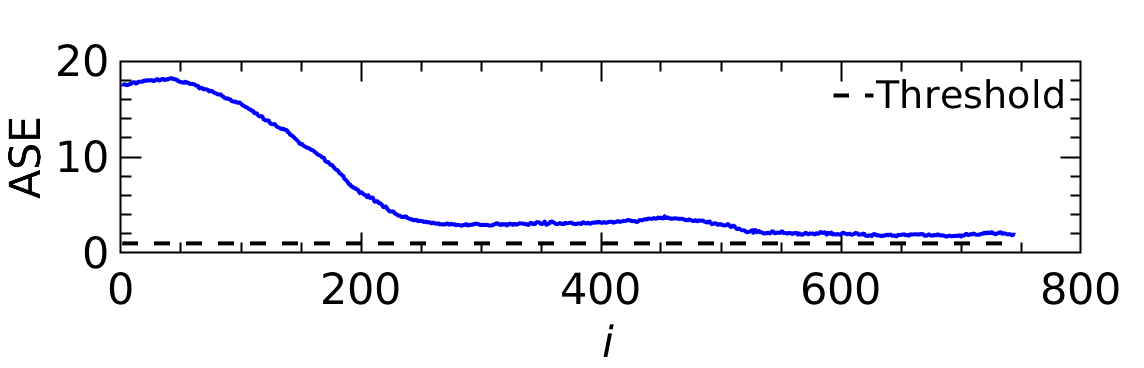}\label{fig:move_shape_error_2}}
    \caption{The evolution of $e_i$ for the experiments of Fig. \ref{fig:move_and_shape}. The black dashed line indicates the threshold $\text{ASE} = 1$. The blue curves show $e_i$ until the termination condition, whereas the red curves show the error until manual termination by the human operator.}
    \label{fig:move_shape_error}
\end{figure}

Finally, since our framework can deal with both rigid and deformable objects, we tested it in two experiments where the same object (a sponge) can be both rigid (in the free space), and deformed (when in contact with the environment). These experiments require the robot to: 1) move the object, establish contact, 2) give the object the target shape, by relying on the contact. Figure \ref{fig:move_and_shape} presents these two original ``move and shape'' servoing experiments with the corresponding errors \text{ASE} plotted in Fig. \ref{fig:move_shape_error}. We use a second fixed robot arm to generate the deforming contact. As the figures and curves show, both experiments were successful.

The success of the ``move and shape'' task is largely dependent on the contact establishment. However, even when the initial contact has some misalignment (see Fig. \ref{fig:move_shape_1-alignment} and \ref{fig:move_shape_2-alignment}), our framework can still reduce the \text{ASE} to give a reasonable final configuration (see Fig. \ref{fig:move_shape_2-shape} and Fig. \ref{fig:move_shape_error_2}).

\section{Conclusion}\label{sec:conclusion}

In this paper, we propose algorithms to automatically and concurrently generate object representations (feature vectors) and models of interaction (interaction matrices) from the same data. We use these algorithms to generate the control inputs enabling a robot to move and shape the said object, be it rigid or deformable. The scheme is validated with comprehensive experiments, including a target contour that requires both moving and shaping. We believe it is unprecedented in previous research. Our framework adopts a model-free approach. The system characteristics are computed online with visual and manipulation data. We do not require camera calibration, nor a priori knowledge of the camera pose, object size or shape. 

The proposed approach has two major limitations: 1. \emph{The challenge of extending it to 6 DoF motion}, 2. \emph{Global convergence cannot be guaranteed}. Below, we discuss each limitation and present possible solutions.

An open question remains the management of 6 DOF motion of the robot. Indeed, while the proposed controller can be easily generalized to 6 DOF motions, it relies on a sufficiently accurate extraction of feature vectors from vision sensors. A very challenging task is to generate complete and reliable 3D feature vectors of objects from a limited sensor set, due to partial views of the object and to occlusions. To extend it, the framework should benefit from robust deformation sensing. In addition, since the approach relies on local linear models, we expect that with higher DOF, the algorithm will more likely get stuck in local minima. 

The second drawback is that the representation and model of interaction are local. Thus, they cannot guarantee global convergence. In addition, our framework cannot infer whether a shape is reachable or not. This drawback is solvable by using a global deformation model for control. But as we mentioned earlier, a global model usually requires an offline identification phase which we want to avoid. In fact, for different objects, we will need to re-identify the model. There is a dilemma in using a global deformation model.

Maybe one of the possible solutions to this dilemma is to have both our method and deep learning based methods run in parallel. While our scheme enables fast online computation and direct manipulation, the extracted data can be used by a deep neural network to obtain a global interaction mapping. Once a global mapping is learned, it can later be used for direct manipulation and to infer feasibility of the goal shape.

\section*{Acknowledgement}
This work is supported in part by the EU H2020 research and innovation programme as part of the project VERSATILE under grant agreement No 731330, by the Research Grants Council (RGC) of Hong Kong under grant number 14203917, and by the PROCORE-France/Hong Kong RGC Joint Research Scheme under grant F-PolyU503/18.

\bibliography{biblio.bib}
\bibliographystyle{elsarticle-harv} 

\end{document}